\newlist{compactitem}{itemize}{3} 
\setlist[compactitem]{label=\textbullet, nosep, leftmargin=0cm,itemindent=.5cm}
\newtheorem{theorem}{Theorem}[section]
\theoremstyle{definition}
\newtheorem{definition}{Definition}[section]
\newcommand{\mS}{\mathcal{S}}
\newcommand{\mR}{\mathcal{R}}
\newcommand{\mC}{\mathcal{C}}
\newcommand{\mL}{\mathcal{L}}
\newcommand{\sys}{STAIR\xspace}
\newcommand{\wy}[1]{\textcolor{blue}{#1}}
\begin{document}

\title{Interpretable Outlier Summarization}

\author{Yu Wang}
\affiliation{%
  \institution{University of California, San Diego}
  \city{San Diego, California}
  \country{USA}
}
\email{yuw164@ucsd.edu}

\author{Lei Cao}
\authornote{Corresponding Author}
\affiliation{%
  \institution{Massachusetts Institute of Technology}
  \city{Cambridge}
  \state{MA}
  \country{USA}
}
\email{lcao@csail.mit.edu}

\author{Yizhou Yan}
\affiliation{%
  \institution{Worcester Polytechnic Institute}
  \city{Worcester}
  \state{MA}
  \country{USA}
}
\email{yyan2@wpi.edu}

\author{Samuel Madden}
\affiliation{%
  \institution{Massachusetts Institute of Technology}
  \city{Cambridge}
  \state{MA}
  \country{USA}
}
\email{madden@csail.mit.edu}

\begin{abstract}
Outlier detection is critical in real applications to prevent financial fraud, defend network intrusions, or detecting imminent device failures.
To reduce the human effort in evaluating outlier detection results and effectively turn the outliers into actionable insights, the users often expect a system to automatically produce interpretable summarizations of subgroups of outlier detection results.
Unfortunately, to date no such systems exist. 
To fill this gap, we propose \sys which learns a compact set of human understandable \textit{rules} to summarize and explain the anomaly detection results. 
Rather than use the classical decision tree algorithms to produce these rules, \sys proposes a new optimization objective to produce a small number of rules with least complexity, hence strong interpretability, to accurately summarize the detection results.
The learning algorithm of \sys produces a rule set by iteratively splitting the large rules and is optimal in maximizing this objective in each iteration.    
Moreover, to effectively handle high dimensional, highly complex data sets which are hard to summarize with simple rules, we propose a {\it localized} \sys approach, called L-\sys. Taking data locality into consideration, it simultaneously partitions data and learns a set of localized rules for each partition.
Our experimental study on many outlier benchmark datasets shows that \sys significantly reduces the complexity of the rules required to summarize the outlier detection results, thus more amenable for humans to understand and evaluate, compared to the decision tree methods.



\end{abstract}

\keywords{Outlier detection, Decision trees, Data Partitioning}

\maketitle



\section{Introduction}
\label{introduction}

\begin{sloppypar}

\noindent\textbf{Motivation.}
Outlier detection is critical in enterprises, with applications ranging from preventing financial fraud in finance, and defending network intrusions in cyber security, to detecting imminent device failures in IoT.

To turn outliers into actionable insights, the users often expect an anomaly detection system to produce human understandable information to explain the detected outliers, e.g., by highlighting the features that contribute the most to the identified anomalies. 
Otherwise, the detected outliers are just a set of isolated data objects without any indication of their significance to the users.
Further, outlier detection frequently returns a large number of outlier candidates. This raises the problem of how to best present results such that the users do not have to sift through a huge number of results to assess the validity of the outliers one by one. 

If a system is able to {\it summarize} anomaly detection results into groups and {\it explain} why each group of objects is considered to be {\it abnormal} or {\it normal}, this will greatly reduce the effort of users in evaluating anomaly detection results.


%
\noindent\textbf{State-of-the-Art.}
To the best of our knowledge, the problem of {\it summarizing and interpreting} outlier detection results is yet to be addressed. Scorpion~\cite{DBLP:journals/pvldb/0002M13} produces meaningful explanations for anomalies in aggregation queries when the ‘cause’ of an outlier is contained in its provenance. Similar to Scorpion, Cape~\cite{DBLP:journals/pvldb/MiaoZLGKR19} aims to explain the outliers in aggregation queries, but using the objects that counterbalance the outliers. Both works do not tackle the problem of summarizing outliers. 
Macrobase~\cite{bailis2017macrobase} explains outliers by correlating them to some external attributes which are not used to detect anomalies such as location, time of occurrence,  software version, etc. However, Macrobase only targets explaining the outliers captured by its default density-based outlier detector and does not generalize to other outlier detection methods. 

In the broader field of interpretable AI, LIME~\cite{DBLP:conf/kdd/Ribeiro0G16} explains the predictions of a classifier by learning a linear model locally around the prediction with respect to each testing object and pointing out the attributes that are most important to the prediction of the linear model. However, rather than use one model to represent a set of objects, LIME has to learn a linear model for each individual object. Therefore, using LIME to explain a large number of prediction results will be prohibitively expensive. Other methods~\cite{DBLP:conf/aaai/Ribeiro0G18,DBLP:journals/corr/abs-1805-10820,DBLP:journals/corr/abs-1806-09936} explain classification results in the similar way to LIME.



\noindent\textbf{Challenges.}
Summarizing and interpreting outliers is challenging because of its nature. Outliers are phenomena that significantly deviate from the normal phenomena. In most cases the outliers tend to be very different from each other in their features. Therefore, they cannot be simply summarized based on their similarity in the feature space measured by some similarity function. 


\noindent\textbf{Proposed Approach.} 
To meet the need of an effective outlier summarization and interpretation tool, we have developed \sys. 
It produces a set of human understandable abstractions, each describing the common properties of a group of detection results. This allows the users to efficiently verify a large number of anomaly detection results and diagnose the root causes of the potential outliers by only examining a small set of interpretable abstractions. 

\underline{Rule-based Outlier Summarization and Interpretation.} \sys leverages classical decision tree classification to learn a compact set of human understandable \textit{rules} to summarize and explain the anomaly detection results. 
Using the results produced by an anomaly detection method as training data, \sys learns a decision tree to accurately separate outliers and inliners in the training set.
Each branch of the decision tree is composed of a set of data attributes with associated values that iteratively split the data. 
Therefore, it can be thought of as a \textit{rule} that represents a subset of data sharing the same class (outlier or inlier) and that is easy to understand by humans. 

\underline{Outlier Summarization and Interpretation-aware Objective.} However, decision tree algorithms target maximizing the classification accuracy. Rules learned in this way do not necessarily have the properties desired by outlier summarization and interpretation. 
This is because when handling highly complex data sets, to minimize the classification errors, decision trees often have to be {\it deep} trees with {\it many} branches and hence produce a lot of complex rules which are hard for humans to understand.
Although some methods like CART~\cite{reason:BreFriOlsSto84a} have been proposed to prune a learned decision tree in a post-processing step, they target avoiding overfitting and thus lifting the classification accuracy. They do not guarantee the simplicity of each rule.

To solve the above issues, we propose a new optimization objective customized to outlier summarization and interpretation. It targets producing the minimal number of rules that are as simple as possible, while still assuring the classification accuracy. 
However, the simplicity requirement of outlier summarization and interpretation conflicts with the accuracy requirement, while it is hard for the users to manually set an appropriate regularization term to balance the two requirements. \sys thus introduces a learnable regularization parameter into the objective and relies on the leaning algorithm to automatically make the trade-off.

\underline{Rule Generation Algorithm.} We then design an optimization algorithm to generate the summarization and interpretation-aware rules. Similar to the classic decision tree algorithms~\cite{DBLP:conf/lopal/ElaidiBA18}, \sys produces a rule set by iteratively splitting the decision node. 
In each iteration, \sys dynamically adjusts the regularization parameter to ensure that it is always able to produce a valid split which increases the objective. 
We prove that the regularization parameter and the rule split that \sys learns in each iteration as a combination is {\it optimal} in maximizing the objective.      

\underline{Localized Outlier Summarization and Interpretation.} To solve the problem that one single decision tree with a small number of simple rules is not adequate to satisfy the accuracy requirement when handling high dimensional, highly complex data sets, we propose a {\it localized} \sys approach, called L-\sys.  
Taking data locality into consideration, L-\sys divides the whole data set into multiple partitions and learns a localized tree for each partition.
Rather than first partition the data and then learn the localized tree in two disjoint steps, L-\sys jointly solves the two sub-problems. In each iteration, it optimizes the data partitioning and rule generation objectives alternatively and is guaranteed to converge to a partitioning that can be summarized with simple rules.

\noindent\textbf{Contributions.} The key contributions of this work include:
\begin{compactitem}
    \item To the best of our knowledge, \sys is the first approach that summarizes the outlier detection results with human interpretable rules.
    
    \item We define an outlier summarization and interpretation-aware optimization objective which targets producing the minimal number of rules with least complexity, while still guaranteeing the classification accuracy. 

    \item We design a rule generation method which is optimal in optimizing the \sys objective in each iteration.   
    
    \item We propose a localized \sys approach which jointly partitions the data and produces rules for each local partition, thus scaling \sys to high dimensional, highly complex data. 
    
    \item Our extensive experimental study confirms that compared to other decision tree methods, \sys significantly reduces the complexity and the number of rules required to summarize outlier detection results.
\end{compactitem}

\end{sloppypar}
\section{Preliminary: Decision Tree}
\label{sec.preliminary}

In this section, we overview the decision tree classification problem and its classical learning algorithms. 

\noindent\textbf{Decision Tree Overview.} Decision tree learning is a classical classification technique  where the learned function could be represented by a decision tree. 
It classifies instances by sorting them down the tree from root to the leaf node, which could predict the label of this instance. Each node in the tree denotes the test of the specific attribute, and the instance is classified by moving down the tree branch from this node according to the value of the attribute in the given example. 

\noindent\textbf{Learning Algorithms.} Most algorithms learn the decision trees in a top-town, greedy search manner such as ID3~\cite{DBLP:journals/ml/Quinlan86} and its successor C4.5~\cite{DBLP:books/mk/Quinlan93}. 
The basic algorithm, ID3, will run a {\it statistical test} on choosing the instance attribute to determine how well it could classify the data points. 
From the root node, the algorithm will find the best attribute to form branches and then put all the training examples into the corresponding child nodes. 
It then repeats this entire process using the training data associated with the child nodes to select the appropriate attribute and value for the current node and form new branches from the child nodes. 

\noindent\textbf{Information Gain-based Statistical Test.} There are several strategies for the statistical test in each step. One of the most popular tests is \emph{information gain}, which measures how well a given attribute could separate the training examples. 
Before giving the precise definition of information gain, we need to give the definition of \emph{entropy} first. 
Given a data collection $S$, containing positive and negative examples, the entropy of $S$ is:
\begin{equation}
    Entropy(S) = -p_{+}\log_2 p_{+} - p_- \log_2 p_-
\end{equation}
where $p_+$ and $p_-$ are the proportion of positive and negative examples in $S$, respectively. 

Next, we give the formulation of the information gain of an attribute $A$ with split value $v$, relative to a collection of examples $S$:
\begin{equation}
\label{greedy_objective_dt}
    Gain(S, A, v) = Entropy(S) - \sum_{b\in Branches}\frac{|S_b|}{|S|}Entropy(S_b)
\end{equation}
where \emph{Branches} contains two branches, each of which has the training examples with attribute $A$ smaller or larger than the value $v$, respectively. $S_b$ refers to the collection of examples from branch $b$.
The learning algorithm iteratively splits nodes and forms branches by maximizing Eq.~\ref{greedy_objective_dt} at each step.

Learning the decision tree in this way is equivalent to maximizing the global objective:

\begin{equation}
\label{global_objective_dt}
    \max \sum_{l \in L} n_l (1 - Entropy(S_l))
\end{equation}

where $S_l$ represents the collection of training examples in the leaf node $l$ and $n_l$ represents the number of examples falling into node $l$.

\section{Rule-based Summarization and Interpretation}
\label{sec.rule}
\begin{sloppypar}

In this section, we first give the definition of \textbf{rule} and then explain why rules are good at summarizing and interpreting outlier detection results. 

\begin{definition}
\label{def.rule}
Given a data set $\mathbb{D}$ in a N-dimensional feature space $\mathit{[ x_1, x_2, \cdots, x_N ]}$, a \textbf{Rule} $R_i$ is defined as $\mathit{R_i = (a_1 \leq x^i_1 \leq b_1)}$ $\land$ $\mathit{(a_2 \leq x^i_2 \leq b_2)}$, $\cdots$, $\land$ $\mathit{(a_j \leq x^i_j \leq b_j)}$, $\cdots$, $\land$ $\mathit{(a_L \leq x^i_L \leq b_L)}$. 
$\forall$ clause $\mathit{(a_j \leq x^i_j \leq b_j)}$ of $R_i$, $x^i_j$ corresponds to one attribute $x_j$ $\in$ $\{x_1, x_2, \cdots, x_N \}$; $a_j$ and $b_j$ ($a_j$ $<$ $b_j$) fall in the domain range of attribute $x_j$. 
$L$ indicates the number of attributes in rule $R_i$, or the \textbf{length} of $R_i$.  
\end{definition}

By Def.~\ref{def.rule}, a rule $R_i$ corresponds to a conjunction of domain value intervals, each with respect to some attribute $x_j$.
Rule $R_i$ covers a data subset $\mathbb{D}_i$ $\subseteq$, where $\forall$ object $d_i$ $\in$ $\mathbb{D}_i$, the attributes of object $d_i$ fall into the corresponding interval.   

In the decision tree model~\cite{97458}, each branch corresponds to one rule. 
Fig.~\ref{fig:exp_of_rule} shows a toy decision tree $T_i$ learned from a 2-dimensional data set $\mathbb{D}$.
$T_i$ classifies the objects in $\mathbb{D}$ into outliers and inliers.  
It has three branches, corresponding to 3 rules: $\mathit{R_1 = (-2 \leq x_1 \leq 2)}$, $\mathit{R_2 = (x_1 > 2)}$, and $\mathit{R_3 = (x_1 < -2)}$. 
All rules only contain one attribute $x_1$. Rules $R_2$ and $R_3$ are lower bounded or upper bounded only.
Thus the length of these rules is one.

Note the length of a rule is not equivalent to the depth of the tree. The depth of the decision tree in Figure~\ref{fig:exp_of_decision_tree} is two, while the lengths of the three rules are all one. 
Even if the decision tree gets deeper, the lengths of the rules could still be small. 
This is because a decision tree could use one attribute multiple times on one single branch (rule).



These rules classify the whole data set into three different partitions.
Rule $R_1$ covers all inliers in $\mathbb{D}$, while both $R_2$ and $R_3$ represent outliers.  

Rules effectively summarize and interpret the outliers and inliers in the data. The merit is twofold. 
First, each rule covers a set of inliers or outliers. Therefore, rather than exhaustively evaluating the large number of outliers or inliers one by one, the users now only have to evaluate a small number of rules, thus saving huge amount of human efforts.
Second, the rules are human interpretable, helping the users easily understand why an object is considered as outlier or inlier and identify the root cause of the outliers. For example, rules $R_2$ and $R_3$ intuitively tell users that some objects are abnormal because their $x_1$ values are too large or too small.   

\begin{figure}
\centering
\vspace{-0.2cm}
\subfigure[Rules]{\label{fig:exp_of_rule}\includegraphics[width=0.5\linewidth]{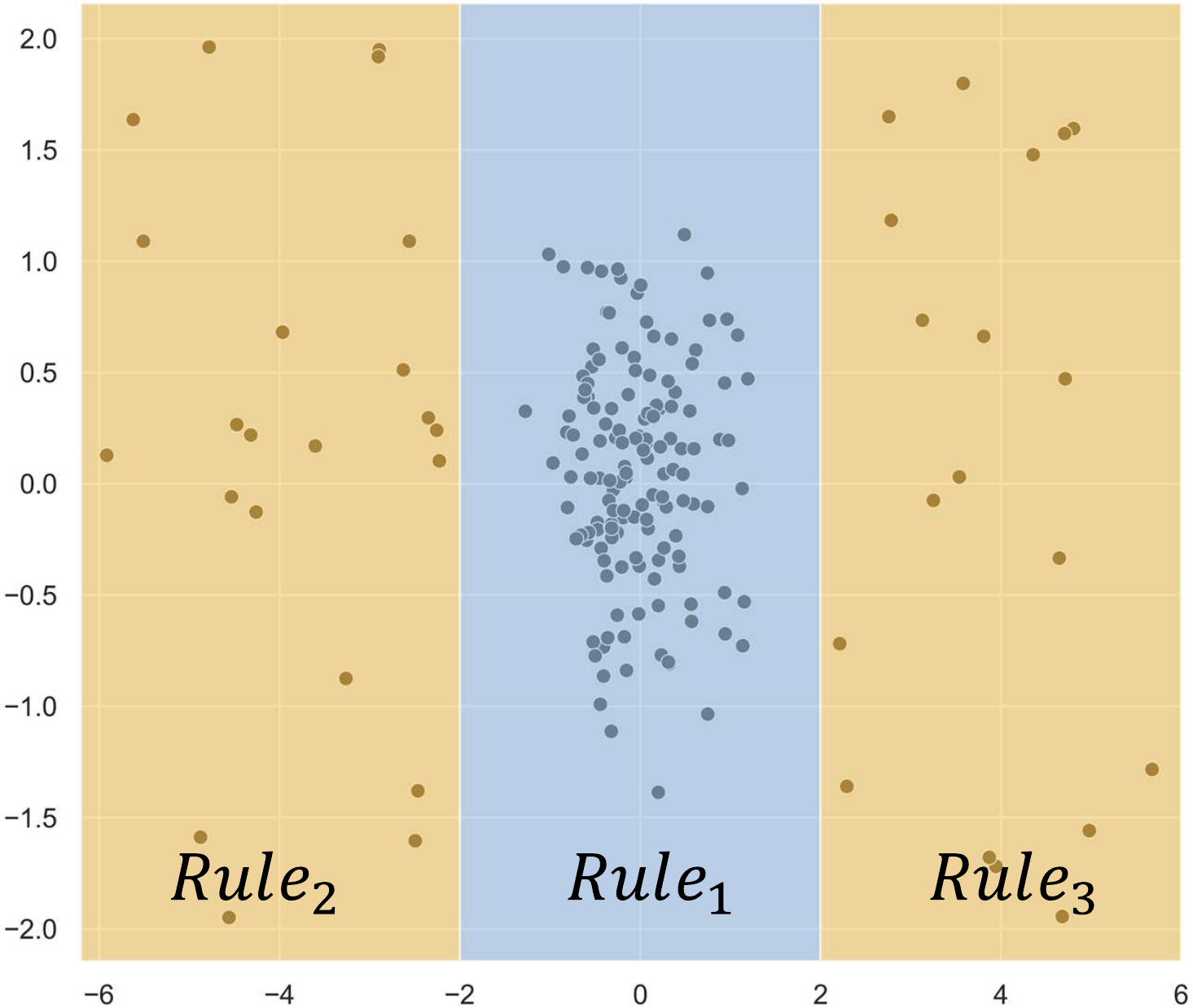}}
\hfill
\subfigure[Decision Tree]{\label{fig:exp_of_decision_tree}\includegraphics[width=0.45\linewidth]{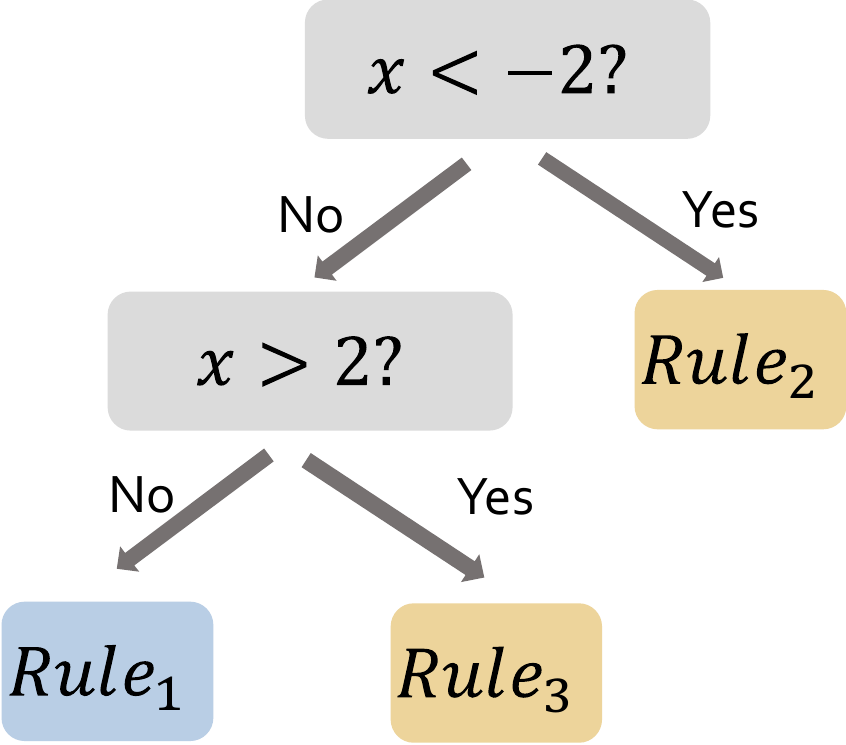}}
\vspace{-0.3cm}
\caption{Example of rules and decision tree. The blue partition covered by $Rule_1$ represents inliers, while the brown partitions covered by $Rule_2$ and $Rule_3$ represent outliers.}
\vspace{-0.2cm}
\label{fig:Example_of_rules_and_decision_trees}
\end{figure}

\end{sloppypar}




\section{The optimization objective of Rules Generation}
\label{sec.objective}

\begin{sloppypar}

\subsection{The Insufficiency of Classic Decision Trees}
\label{sec.objective.classic}
Intuitively, to produce rules effectively summarizing and interpreting the outlier detection results, we could directly apply the classical decision tree algorithms such as ID3~\cite{DBLP:conf/lopal/ElaidiBA18}. That is, we use the output of the outlier detection method as ground truth labels to train a decision tree model and then extract rules from the learned decision tree.  

However, decision tree algorithms target producing rules that maximize the classification accuracy. The rules learned in this way do not necessarily have the desired properties when used in outlier summarization and interpretation, for the following reasons: 

First, they may produce rules that contain many attributes and thus are too complicated for humans to evaluate. For examples, humans can easily understand and reason on a rule with a couple of attributes such as the rules in Fig.~\ref{fig:Example_of_rules_and_decision_trees}, while it will be much harder for the humans to obtain any meaningful information from a complicated rule with many attributes. 
For instance, the rule with 20 attributes $a_1\leq x_1 \leq b_1, \cdots, a_{20} \leq x_{20} \leq b_{20}$ will be almost impossible for human to understand.

Second, to maximize the classification accuracy they may produce many rules. However, to reduce the human evaluation efforts, ideally we want to produce as few rules as possible.  

The above situations could happen when handling highly complex data sets which often require a {\it deep} tree with {\it many} branches.

\subsection{Summarization and Interpretation-aware Objective}
\label{sec.objective.optimized}

To address the above concerns, we design an optimization objective customized to outlier summarization and interpretation. It targets producing the minimal number of rules that are as simple as possible, while still guaranteeing the classification accuracy. The objective is composed of two sub-objectives, namely {\it length objective} and {\it entropy objective}.

\noindent\textbf{Length Objective.} To minimize the number of the rules as well as bounding the complexity of each rule, we first introduce an objective with respect to the lengths of the rules in rule set $\mathcal{R}$: 
    
\begin{align}
\label{eq.length_objective}
    \min_{\mathcal{R}} \mathcal{L}(\mathcal{R}), & \textrm{ where } \mathcal{L}(\mathcal{R}) =  \sum_{r_i\in\mathcal{R}} L(r_i)  \\
    \textrm{ s.t. } & L(r_i) \leq L_m \nonumber   
\end{align}

In Eq.~\ref{eq.length_objective}, $\mathcal{R}$ denotes a rule set. $L(r_i)$ denotes the length of a rule $r_i$ in $\mathcal{R}$. $L_m$ is the predefined maximal length of each rule that the users allow. 
Essentially, the total length of all rules represent the complexity of the learn model.
Minimizing it will effectively reduce the number of rules, while at the same time simplifying each rule.

\noindent\textbf{Entropy Objective.} To maximize the classification accuracy of the derived model, we adopt the entropy-based optimization objective from the classical decision tree algorithms~\cite{DBLP:conf/lopal/ElaidiBA18}, i.e. ID3 and C4.5, as illustrated in Sec.~\ref{sec.preliminary}.

\begin{equation}
\label{eq.entropy_objective}
     \max_{\mathcal{R}} \mathcal{S}(\mathcal{R}), \textrm{ where } \mathcal{S}(\mathcal{R}) = \sum_{r_i \in \mathcal{R}} n_{r_i} E(r_i) 
\end{equation}

In Equation~\ref{eq.entropy_objective}, $E(r_i)$ corresponds to $\mathit{1 - Entropy(r)}$. Maximizing Eq.~\ref{eq.entropy_objective} effectively maximizes the classification accuracy.   

Combing Eq.~\ref{eq.entropy_objective}) and Eq.~\ref{eq.length_objective}), our summarization and interpretation-aware objective (Eq.~\ref{eq:formulation}) maximizes the classification accuracy, while at the same time minimizing the total length of the rules.   

\begin{align}
\label{eq:basicformulation}
    \max_{\mathcal{R}} \mS (\mathcal{R}) = \frac{\sum_{r_i\in\mathcal{R}}n_{r_i} E(r_i)}{\sum_{r_i \in\mathcal{R}} L(r_i)} \\
    \textrm{ s.t. } L(r_i) \leq L_m\nonumber, F1(\mathcal{R}) > F1_m 
\end{align}
where $L_m$ corresponds to the {\it maximal} length of a rule that the users allow, while $F1_m$ is a predefined requirement on classification accuracy which is measured by F1 score in the case of outlier detection.  
\noindent\textbf{Optimization Issue.} However, in practice we observed that this objective caused issues in the optimization process. Maximizing the entropy objective typically will lead to more complex rules and in turn the increase of the length objective. However, the length objective often increases faster than the entropy objective. Therefore, the overall objective (Eq.~\ref{eq:basicformulation}) tends to stop increasing in a few iterations. 

\noindent\textbf{Final Objective: Introducing a Stabilizer.}
To solve this problem, we introduce a stabilizer $M$ into the length objective -- the denominator of Eq.~\ref{eq:basicformulation}: 

\begin{align}
\label{eq:formulation}
    \max_{\mathcal{R}, M} \mS (\mathcal{R, M}) = \frac{\sum_{r_i\in\mathcal{R}}n_{r_i} E(r_i)}{\sum_{r_i \in\mathcal{R}} L(r_i) + M} \\
    \textrm{ s.t. } L(r_i) \leq L_m\nonumber, F1(\mathcal{R}) > F1_m
\end{align}


The stabilizer $M$ mitigates the impact of the quickly increasing length objective. 
It ensures that the length objective does not dominate our summarization and interpretation-aware objective.
Intuitively, in the extreme case of setting $M$ to an infinite large value, the increase of the total rule length is negligible to the objective.
Now maximizing Eq.~\ref{eq:formulation} in fact is equivalent to the traditional entropy-based decision tree.

\noindent\textbf{Auto-learning Stabilizer M.}
An appropriate value of $M$ is critical to the quality of the learned rules. However, relying on the users to manually tune it is difficult. 
First, $M$ can be any positive value and thus has infinite number of options.
Second, ideally $M$ should dynamically change to best fit the evolving rule set produced in the iterative learning process.
Therefore, rather than make it a hyper-parameter, $M$ is a learnable parameter in our objective function Eq.~\ref{eq:formulation}.


\end{sloppypar}

\section{\sys: Rule Generation Method}
\label{sec.ruleGen}

\begin{sloppypar}

This section introduces our \underline{S}ummariza\underline{T}ion \underline{A}nd \underline{I}nterpretation-aware \underline{R}ule generation method (\sys).
Similar to the classic decision tree algorithms~\cite{DBLP:conf/lopal/ElaidiBA18}, \sys produces a rule set by iteratively splitting the decision node. 
We prove that in each iteration \sys is {\it optimal} in maximizing our objective in Eq.~\ref{eq:formulation}.



Below we first give the overall process of \sys:

\begin{enumerate}[leftmargin=*]
    \item Initialize the stabilizer $M$ in Eq.~\ref{eq:formulation} to zero; 
    \item Increase the value of $M$;
    \item Find a node to split that could increase the objective in Eq.~\ref{eq:formulation}; go to step 2. 
\end{enumerate}


In short, \sys iteratively increases the value of $M$ and splits the nodes. 
Next, we first show that the value of $M$ is critical to the performance of \sys and then introduce a method to calculate the optimal value of $M$ at each iteration.   

\subsection{The Value of M Matters}
\label{sec.ruleGen.mValue}
Given a rule set $\mathcal{R}$, splitting a node $n$ is equivalent to dividing one rule $r$ in $\mathcal{R}$ into two rules $r_1$ and $r_2$, where $r_1$ and $r_2$ end at the two child nodes of node $n$ correspondingly. 
Given an $M$ and a rule set $\mathcal{R}$, we say a split $\mathit{sp(\mathcal{R}, M)}$ is \textbf{valid} if $\mathit{\mS(\mathcal{R}\backslash \{r\} \cup \{r_1, r_2\}, M) > \mS(\mathcal{R}, M)}$.
That is, a valid split will increase the objective defined in Eq.~\ref{eq:formulation}.
For the ease of presentation, we use $\mathit{\mS(\mathcal{R}', M)}$ to denote $\mathit{\mS(\mathcal{R}\backslash \{r\} \cup \{r_1, r_2\}, M)}$

Next, we show the smallest $M$ that could produce a valid split is optimal in maximizing Eq.~\ref{eq:formulation}. 
\begin{theorem}
\label{theory.optimal}
\noindent\textbf{Monotonicity Theorem.}
Given a rule set $\mathcal{R}$, if $M_a$ $>$ $M_b$, then $\mathit{\mS(\mathcal{R}'_a, M_a)}$ is guaranteed to be \textbf{smaller} than $\mathit{\mS(\mathcal{R}'_b, M_b)}$, where $\mathit{\mathcal{R}'_a}$, $\mathcal{R}'_b$ denotes the rule set produced by a valid split on $\mathcal{R}$ that maximizes the objective given $M_a$ or $M_b$.
\end{theorem}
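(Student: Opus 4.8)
The plan is to reduce this two‑parameter comparison to the monotonicity of a single one‑parameter family of rational functions. I would start by fixing $\mathcal{R}$ and abbreviating $A=\sum_{r_i\in\mathcal{R}} n_{r_i}E(r_i)$ and $B=\sum_{r_i\in\mathcal{R}} L(r_i)$, so that $\mathcal{S}(\mathcal{R},M)=A/(B+M)$. For any candidate split $s$ — which replaces some $r\in\mathcal{R}$ by children $r_1,r_2$ — the split changes the numerator by $\Delta A_s=n_{r_1}E(r_1)+n_{r_2}E(r_2)-n_rE(r)$ and the length sum by $\Delta B_s=L(r_1)+L(r_2)-L(r)$. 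The key structural remark is that $\Delta A_s$ and $\Delta B_s$ depend only on $s$ and the data, not on $M$; moreover $\Delta A_s\ge 0$ (non‑negativity of information gain) and $\Delta B_s\ge 1$ (a split either tightens an existing clause on both children, keeping length unchanged on a rule that then has length $\ge 1$, or adds one new clause to both children, so in either case $L(r_1)+L(r_2)\ge L(r)+1$). Hence for every $s$ the post‑split value is the fixed function $f_s(M):=\mathcal{S}(\mathcal{R}'_s,M)=\frac{A+\Delta A_s}{B+\Delta B_s+M}$, which is strictly decreasing in $M$ whenever $A+\Delta A_s>0$.

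The second step is to show that optimizing over the \emph{valid} splits at a given $M$ coincides with optimizing over \emph{all} candidate splits, i.e.\ $\mathcal{S}(\mathcal{R}'_a,M_a)=g(M_a)$ where $g(M):=\max_s f_s(M)$ (and likewise for $M_b$). Indeed, let $s^\star$ attain $g(M_a)$. By hypothesis $\mathcal{R}'_b$ is well defined, so some split is valid at $M_b$; since a split $s$ is valid at $M$ exactly when $\Delta A_s/\Delta B_s>A/(B+M)$ and the right‑hand side decreases in $M$, that same split is valid at $M_a>M_b$, so the set of valid splits at $M_a$ is nonempty. Picking any valid $s_0$ there, $f_{s^\star}(M_a)\ge f_{s_0}(M_a)>\mathcal{S}(\mathcal{R},M_a)\ge 0$, so $s^\star$ is itself valid \emph{and} has positive numerator $A+\Delta A_{s^\star}>0$; hence $g(M_a)$ is attained at a valid split, giving $\mathcal{S}(\mathcal{R}'_a,M_a)=g(M_a)$, and $f_{s^\star}$ is strictly decreasing. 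This is the step that needs the most care, since it is where the ``valid'' qualifier of the statement and the well‑definedness of $\mathcal{R}'_a,\mathcal{R}'_b$ must be discharged; the algebraic reduction in step one is routine once the $M$‑independence of $\Delta A_s,\Delta B_s$ is noticed.

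The last step is the monotonicity of the pointwise maximum. With $s^\star$ the maximizer at $M_a$ as above, $g(M_b)\ge f_{s^\star}(M_b)>f_{s^\star}(M_a)=g(M_a)$, using that $f_{s^\star}$ is strictly decreasing and $M_b<M_a$. Combining with $\mathcal{S}(\mathcal{R}'_a,M_a)=g(M_a)$ and $\mathcal{S}(\mathcal{R}'_b,M_b)=g(M_b)$ from the previous step yields $\mathcal{S}(\mathcal{R}'_a,M_a)<\mathcal{S}(\mathcal{R}'_b,M_b)$, which is exactly the claim. I would close by recording the byproduct that the set of valid splits grows with $M$, since that is precisely what makes the algorithm's policy — raise $M$ just enough to admit a valid split — well founded; and I would only need to note, for completeness, that one works in the regime $B+M>0$ where $\mathcal{S}$ is actually defined (ruling out the degenerate root case $B=0$, $M=0$).
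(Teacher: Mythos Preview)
Your proof is correct and follows essentially the same two-step argument as the paper: evaluate the $M_a$-optimal split at $M_b$ (strictly larger by monotonicity of $(A+\Delta A)/(B+\Delta B+M)$ in $M$), then bound by the $M_b$-optimum. Your treatment is more careful than the paper's in that you explicitly discharge the ``valid'' qualifier by showing the unconstrained maximizer at each $M$ is itself valid whenever some valid split exists, whereas the paper simply asserts $\mathcal{S}(\mathcal{R}'_a,M_b)\le\mathcal{S}(\mathcal{R}'_b,M_b)$ without comment.
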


\begin{proof}
Because $M_a > M_b$, we have:
\begin{align}
\label{eq1}
    S(\mathcal{R}'_a, M_a) &= \frac{\sum_{r\in\mathcal{R}'_a} n_r E(r)}{\sum_{r\in\mathcal{R'}_a}L(r) + M_a} \nonumber \\
    & < \frac{\sum_{r\in\mathcal{R}'_a} n_r E(r)}{\sum_{r\in\mathcal{R}'_a}L(r) + M_b} = S(\mathcal{R}'_a, M_b)
\end{align}

Because $\mathcal{R}'_b$ corresponds to the best split given $M_b$, we obtain: 
\begin{equation}
\label{eq2}
    S(\mathcal{R}'_a, M_b) \leq S(\mathcal{R}'_b, M_b)
\end{equation}

From Eq.~\ref{eq1} and Eq.~\ref{eq2}, we have:
\begin{equation} \label{optimal_M}
    \mS(\mathcal{R}'_a, M_a) < \mS(\mathcal{R}'_b, M_b)
\end{equation}
This concludes our proof.
\end{proof}

\subsection{Calculating the Optimal $M$}
\label{sec.ruleGen.cal}

By Theorem~\ref{theory.optimal}, to maximize the objective at each iteration, it is necessary to search for the smallest value of $M$ that could produce a valid split.
Intuitively we could find the optimal $M$ by gradually increasing the value of $M$ at a fixed step size. However, this is neither effective nor efficient, because it is hard to set an appropriate step size.
If it is too large, \sys might miss the optimal $M$. 
On the other hand, if the step size is too small, \sys risks to incur many unnecessary iterations not producing any valid splits.   

To solve the above problem, we introduce a method which uses the concept of {\it boundary stabilizer} to directly calculate the optimal $M$. Moreover, the best splitting is discovered as the by-product of this step. 

We use $M_o$ to denote the optimal $M$. Because $M_o$ is the smallest $M$ that could produce a valid split, then $\forall r_0$ and $\forall r_1, r_2$, where $r_1$ and $r_2$ represent the rules produced by splitting rule $r_0$, Eq.~\ref{inequality_of_M} holds: 

\begin{equation}
\label{inequality_of_M}
    \mS(\mathcal{R}, M) > \mS(\mathcal{R} \backslash \{r_0\} \cup \{r_1, r_2\}, M), \forall M < M_o
\end{equation}

\noindent\textbf{Boundary Stabilizer M.} To compute $M_o$, we first define a boundary $M$ denoted as $M_b$ which makes Equation~\ref{equality_of_M} hold:

\begin{align}
\label{equality_of_M}
    \mS(\mathcal{R}, M_b) = \mS(\mathcal{R} \backslash \{r_0\} \cup \{r_1, r_2\}, M_b)
\end{align}

By Eq.~\ref{equality_of_M}, setting the $M$ to $M_b$ will produce a split that does not change the objective. 
That is, under $M_b$ no valid split will increase the objective. But there exists a split that does not decrease the objective. So $M_b$ is called the boundary $M$,

We then expand Eq.~\ref{equality_of_M} as follows: 
\begin{align}
    &\frac{\sum_{r\in \mathcal{R} \backslash \{r_0\}}n_r E(r) + n_{r_0}E(r_0)}{\sum_{r\in \mathcal{R}\backslash \{r_0\}}L(r) + L(r_0) + M_b} \nonumber \\
    &= \frac{\sum_{r\in \mathcal{R} \backslash \{r_0\}}n_r E(r) + n_{r_1}E(r_1) + n_{r_2}E(r_2)}{\sum_{r\in \mathcal{R}\backslash \{r_0\}}L(r) + L(r_1) + L(r_2) + M_b}
    \label{split_rule_ineq}
\end{align}

We define $A = \sum_{r\in\mR\backslash \{r_0\}} n_r E(r), B = \sum_{r\in \mR \backslash \{r_0\}} L(r)$, and $A_0 = \sum_{r\in\mR} n_r E(r), B_0 = \sum_{r\in \mR} L(r)$, then Eq.~\ref{split_rule_ineq} could be rewritten as:
\begin{equation}
    \frac{A + n_{r_0} E(r_0)}{B + L(r_0)+M_b} = \frac{A + n_{r_1}E(r_1) + n_{r_2}E(r_2)}{B + L(r_1) + L(r_2) + M_b}
\end{equation}

Then after some mathematical transformation, we obtain:
\begin{align}
\label{eq:expansion}
    M_b (&n_{r_1}E(r_1) + n_{r_2}E(r_2) - n_{r_0}E(r_0)) \nonumber \\
    & = n_{r_0}E(r_0)(L(r_1) + L(r_2)) + A (L(r_1) + L(r_2) - L(r_0)) \nonumber \\
    & - B (n_{r_1}E(r_1) + n_{r_2}E(r_2) - n_{r_0}E(r_0)) \nonumber \\
    & - (n_{r_1}E(r_1) + n_{r_2} E(r_2))L(r_0) 
\end{align}

Denoting $\mathit{\Delta L = L(r_1) + L(r_2) - L(r_0)}$ and $\mathit{\Delta E = n_{r_1}E(r_1) + n_{r_2}E(r_2) - n_{r_0}E(r_0)}$, we simplify Eq.~\ref{eq:expansion} to: 

\begin{align}
\label{final_ineq}
    M_b \Delta E &= n_{r_0}E(r_0)(L(r_1) + L(r_2)) + A \Delta L - B \Delta E \nonumber \\
    & - (n_{r_1}E(r_1) + n_{r_2} E(r_2))L(r_0) \nonumber \\
    &= A\Delta L- B\Delta E + n_{r_0}E(r_0) \Delta L - L(r_0) \Delta E \nonumber \\
    &= (A + n_{r_0}E(r_0)) \Delta L - (B + L(r_0)) \Delta E \\
    M_b &= A_0 \frac{\Delta L}{\Delta E} - B_0, \forall r_0 \in \mR, \forall r_1, r_2 
\end{align}

$\forall M > M_b$, with the same $r_0$ and $r_1, r_2$ in Eq.~\ref{equality_of_M}, 
Eq.~\ref{eq:expansion} becomes: 
\begin{align}
\label{eq:expansion_for_M}
    M (&n_{r_1}E(r_1) + n_{r_2}E(r_2) - n_{r_0}E(r_0)) \nonumber \\
    & > n_{r_0}E(r_0)(L(r_1) + L(r_2)) + A (L(r_1) + L(r_2) - L(r_0)) \nonumber \\
    & - B (n_{r_1}E(r_1) + n_{r_2}E(r_2) - n_{r_0}E(r_0)) \nonumber \\
    & - (n_{r_1}E(r_1) + n_{r_2} E(r_2))L(r_0) 
\end{align}
Note that with expanding $r_0$ to $r_1, r_2$, the entropy of the rules must be lower, which means $n_{r_1}E(r_1) + n_{r_2}E(r_2) - n_{r_0}E(r_0) > 0$. 

Then from Eq.~\ref{eq:expansion} to Eq.~\ref{equality_of_M}, we may easily obtain Eq.~\ref{eq.split} from Eq.~\ref{eq:expansion_for_M}:
\begin{align}
\label{eq.split}
    \mS(\mathcal{R}, M) < \mS(\mathcal{R} \backslash \{r_0\} \cup \{r_1, r_2\}, M), \forall M > M_b
\end{align}




That is, an $M$ larger than $M_b$ is guaranteed to produce a valid split -- splitting rule $r_0$ to $r_1$ and $r_2$.

\noindent\textbf{Calculating Optimal M.} According to the Monotonicity theorem (Theorem~\ref{theory.optimal}), a smallest $M$ is the best in maximizing the objective. Therefore, \sys can directly calculate $M_o$ using Eq.~\ref{eq.optimal}:

\begin{equation}
\label{eq.optimal}
    M_o > \min_{\Delta L / \Delta E} A_0 \frac{\Delta L }{\Delta E} - B_0, \forall r_0 \in \mR, \forall r_1, r_2
\end{equation}

That is, \sys first finds a rule $r_0$ from $\mathcal{R}$ that after split into two rules, produces the smallest $\frac{\Delta L}{\Delta E}$. \sys then sets $M_o$ as a value larger than $\frac{\Delta L}{\Delta E}$ - $B_0$.
In this way, \sys successfully calculates the optimal $M$ and finds the best split in one step, making its learning process effective yet efficient.

\subsection{\sys Learning Algorithm}
\label{sec.ruleGen.training}

\begin{algorithm}[t]
  \KwIn{Training data $X$, $F1$ score threshold $F1_m$}
  \KwOut{The target rule set.}
  Initialize $M$ to be zero\;
  Initialize the min heap $H$ to contain only the root node\; 
  Set the rule set $\mathcal{R}=\{r_0\}$\; 
  $A_0=nE(r_0), B_0 = 0$\;
  \While{True}{
    extract from $H$ a rule $r_0$ which has the minimal $\frac{\Delta L}{\Delta E}$ \; 
    Set $M = A_0 (\frac{\Delta L}{\Delta E})_{r_0} - B_0$\;
    
    \While{the minimal $(\frac{\Delta L}{\Delta E})_{r_0}$ from $H$ $\leq \frac{M + B_0}{A_0}$}{
        Extract rule $r_0$ with the minimal $(\frac{\Delta L}{\Delta E})_{r_0}$ from $H$\; 
        Split $r_0$ into $r_1, r_2$ \; 
        Insert $r_1, r_2$ into $\mathcal{R}$ and $H$;
        Maintain the heap $H$ according to $(\frac{\Delta L}{\Delta E})_{r_1}$ and $(\frac{\Delta L}{\Delta E})_{r_2}$\;
        $A_0 \leftarrow A_0 + E(r_1) + E(r_2) - E(r)$\;
        $B_0 \leftarrow B_0 + L(r_1) + L(r_2) - L(r)$\;
    }
    Calculate the $F1$-score of the current rule set as $F1(\mR)$\;
    \uIf{$F1(\mR) > F1_{m}$}{
    Break\;}
  }
  \caption{Learning Algorithm of \sys}
    \label{alg:stair}
\end{algorithm}

Algorithm~\ref{alg:stair} shows the learning process of \sys.
It starts with initializing $M$ as 0 (Line 1) and uses a min heap structure $H$ to keep all nodes. 
Similar to the decision tree algorithms, it initializes $H$ to contain only the root node (Line 2). 
It then sets the rule set $\mathcal{R}$ to contain only one rule $r_0$ corresponding to the root node (Line 3).
By default, rule $r_0$ classifies all training samples as inliers.
Then based on Eq.~\ref{eq.optimal}, \sys iteratively extracts a rule $r_0$, calculates $(\frac{\Delta L}{\Delta E})$, updates M, and splits $r_0$ into two rules $r_1$ and $r_2$. 
After each split, it calculates $(\frac{\Delta L}{\Delta E})$ with respect to $r_1$/$r_2$, refreshes the rule set $\mathcal{R}$ and min heap $H$, and updates $A_0$ and $B_0$ accordingly. 
The learning process will terminate when the following conditions hold: 
(1) the accuracy reaches the requirement specified by users; and
(2) the $\mathit{\mS(\mathcal{R}, M)}$ does not increase in a few iterations. 


\noindent\textbf{Complexity Analysis.}
Compared to the classical decision tree algorithms, the additional overhead that \sys introduces is negligible. 
In each iteration, \sys extracts the rule $r_0$ from min heap $H$ and inserts into $H$ the new rules. 
Assume there are $n$ nodes in the tree. Because the complexity of min heap's retrieve and insert operations is $O(\log n)$, the additional complexity is $O(n \log n)$.
\end{sloppypar}


\section{Localized \sys: Data Partitioning \& Rule Generation}
\label{sec.localized}

\begin{sloppypar}

\begin{figure}[ht]
    \centering
    \includegraphics[width=\linewidth]{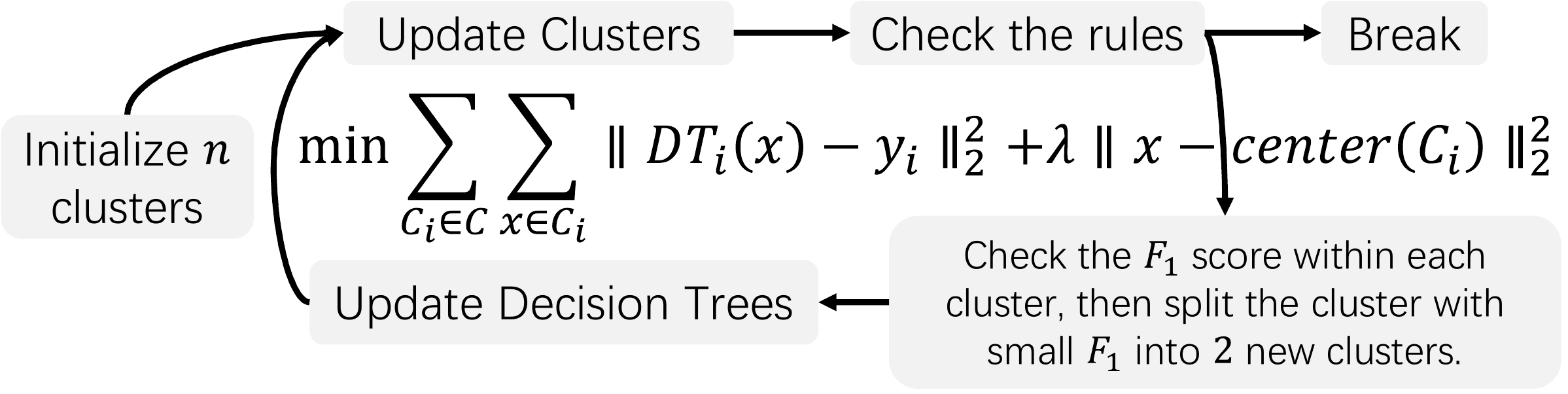}
    \caption{The Localized \sys}
    \label{fig:procedure}
\end{figure}

As shown in our experiments (Sec.~\ref{sec.exp}), although in general \sys performs much better than the classical decision tree algorithms in producing summarization and interpretation friendly rules, its performance degrades quickly on high dimensional, highly complex data sets, for example on the \emph{SpamBase} data set which has 57 attributes.
This is because a single decision tree with a small number of simple rules is not powerful enough to model the complex distribution properties underlying these data sets. 

To solve this problem, we propose a {\it localized} \sys approach, so called L-\sys. 
L-\sys divides the whole data set into multiple partitions and learns a tree model for each partition. 
Taking the data locality into consideration, L-\sys produces data partitions where the data in each partition share the similar statistical properties, while different partitions show distinct properties. 
L-\sys thus is able to produce localized, simple rules that effective summarize and explain each data partition. 

Next, we first introduce the objective of L-\sys in Sec.~\ref{sec.localized.objective} and then give the learning algorithm in Sec.~\ref{sec.localized.algorithm}.

\subsection{Joint Optimization of Data Partitioning and Rule Generalization}
\label{sec.localized.objective}

Intuitively, L-\sys could produce the localized rules in two disjoint steps: (1) partitioning data using the existing clustering algorithms such as k-means~\cite{1017616} or density-based clustering~\cite{ester1996density}; (2) directly applying \sys on each data partition one by one.  
However, this two steps solution is sub-optimal in satisfying our objective, namely producing minimal number of interpretable rules that are as simple as possible to summarize the outlier detection results.
This is because the problems of data partitioning and rule generation are highly dependent on each other.
Clearly, rule generation relies on data partitioning. To generate localized rules, the data has to be partitioned first.
However, on the other hand, without taking the objective of rule generation into consideration, clustering algorithm does not necessarily yield data partitions that are easy to summarize with simple thus interpretable rules. 
Therefore, L-\sys solves the two sub-problems of data partitioning and rule generation jointly. 

To achieve this goal, in addition to the summarization and interpretation-aware objective (Eq.~\ref{eq:basicformulation}) defined in Sec.~\ref{sec.objective.optimized}, L-\sys introduces a partitioning objective composed of {\it error objective} and {\it locality objective.} 

\noindent\textbf{Error Objective.} We denote the partitions of a dataset as $\mC = \{C_i\}_{i=1}^{n}$, where $n$ is the number of partitions and $C_i$ represents the $i$th partition. 
$DT_i$ denotes the decision tree learned for a data partition $C_i$. 
Decision tree $DT_i$ produces a prediction with respect to each object $x$ in data partition $C_i$, denoted as $\mathit{DT_i(x)}$. 

Next, in Eq.~\ref{error} we define an {\it error metric} to measure how good a decision tree $\mathit{DT_i}$ fits the data in $C_i$:

\begin{equation}\label{error}
    \sum_{x\in C_i} ||DT_i(x) - y_i||_2^2
\end{equation}

To ensure the classification accuracy, L-\sys targets minimizing this error metric with respect to all data partitions, which yields the {\bf error objective}:

\begin{equation}
\label{decision_tree_cluster}
    \min_{\mC} \sum_{C_i \in \mC}\sum_{x\in C_i} ||DT_i(x) - y||_2^2
\end{equation}
where $y$ indicates the ground truth label of object $x$.


\noindent\textbf{Locality Objective.}
Although using the above error objective to learn the data partitioning and the corresponding decision trees will effectively minimize the overall classification with respect to the whole dataset, the data partitions produced in this way do not preserve the locality of each data partition. 
Potentially one rule could cover a set of data objects that are scattered across the whole data space, thus is not amenable for human to understand. 
As shown in Figure~\ref{fig:L-STAIR_vis}, when the locality is preserved, the rules are constrained within each cluster. 
This means there is no overlapping between the rules. Then the generated rules will be easier to understand.

\begin{figure}
    \centering
    \includegraphics[width=0.8\linewidth]{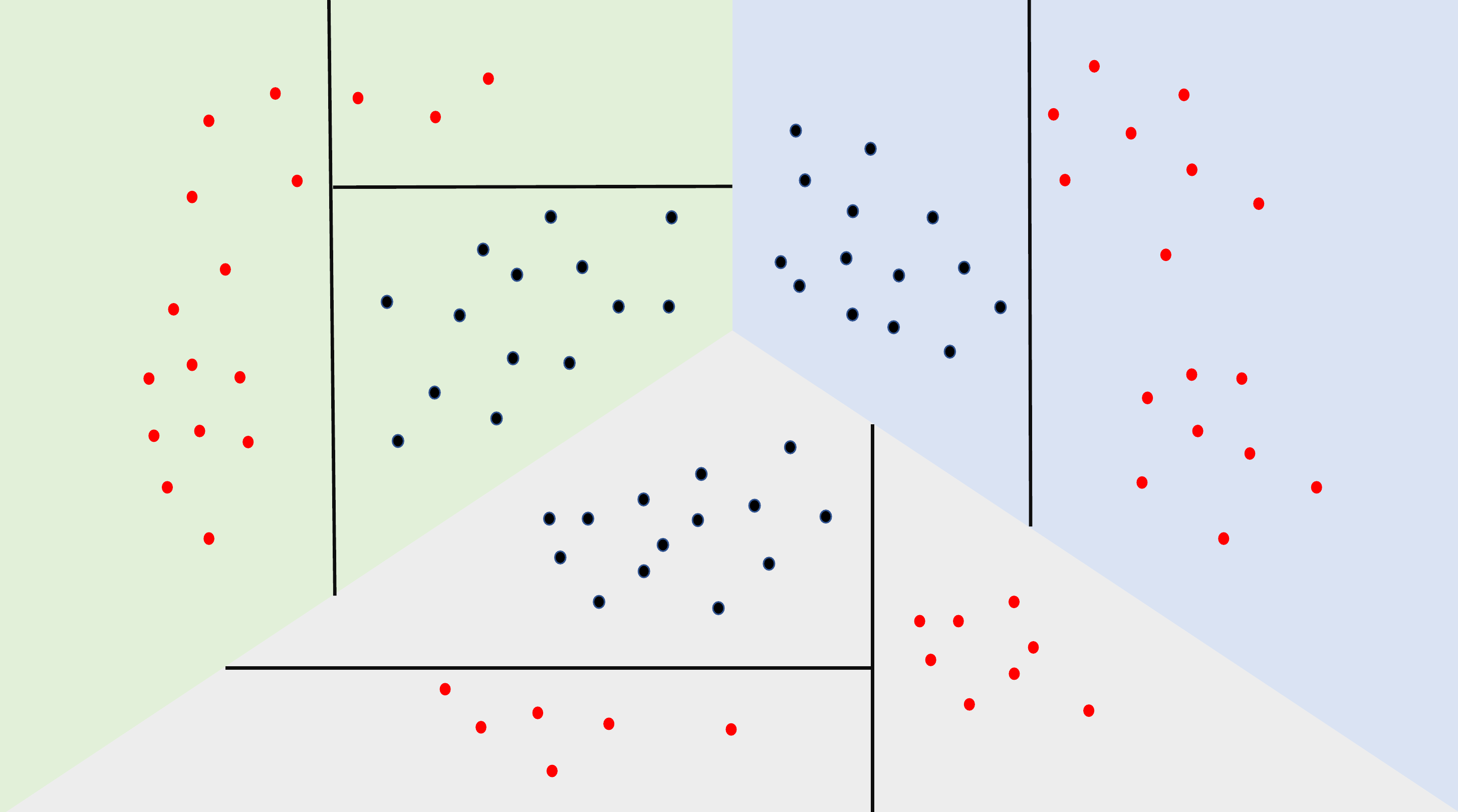}
    \caption{The intuition of locality: The black points are inliers and the red ones are outliers. The backgrounds with different colors refer to three clusters, while the straight lines in each cluster represent the rules. }
    \label{fig:L-STAIR_vis}
\end{figure}

Therefore, to ensure the data locality of each partition, we introduce the \textbf{locality objective}: 

\begin{equation}
\label{kmeans_objective}
    \min_{\mC} \sum_{C_i \in \mC} ||x - center(C_i)||_2^2
\end{equation}

Optimizing on the locality objective enforces the objects within each partition to be close to each other, similar to the objective of clustering such as k-means.

\noindent\textbf{The Final L-\sys Objective.}
Combining Eq.~\ref{kmeans_objective} and Eq.~\ref{decision_tree_cluster} together leads to the final partitioning objective: 
\begin{equation}
\label{objective_localized}
    \min_{\mC} \mL_{L-STAIR}(\mC) = \sum_{C_i \in \mC}\sum_{x\in C_i} ||DT_i(x) - y||_2^2 + \lambda ||x - center(C_i)||_2^2
\end{equation}

Eq.~\ref{objective_localized} uses $\lambda$ ($\mathit{0 < \lambda < 1}$) to balance these two objectives. Setting the $\lambda$ to a small value will give error objective higher priority. 

\subsection{L-\sys Learning Algorithm}
\label{sec.localized.algorithm}
L-\sys jointly optimizes the partitioning objective (Eq.~\ref{objective_localized}) and the summarization and interpretation-aware objective (Eq.~\ref{eq:basicformulation}) in an iterative manner. 
As shown in Figure~\ref{fig:LoMDT_general_routine}, L-\sys starts with initializing $n$ data partitions by using some clustering algorithms such as k-means in our implementation. 
Then we apply the \sys algorithm introduced in Sec.~\ref{sec.ruleGen.training} to learn one summarization and interpretation-aware decision tree for each initial partition.
Next, it iteratively updates the partitions and thereafter builds the decision trees correspondingly.
During this process, L-\sys dynamically modifies the number of partitions based on the classification accuracy of each individual decision tree, making the number of partitions self-adaptive to the data, as further illustrated in Sec.~\ref{sec.localized.partition}.
Similar to our original \sys approach, the learning process of L-\sys terminates after the overall classification accuracy with respect to the whole data set is above the threshold $F1_m$ and the optimization objective does not improve anymore in a few iterations.  

\begin{figure}[h!]
    \centering
    \includegraphics[width=\linewidth]{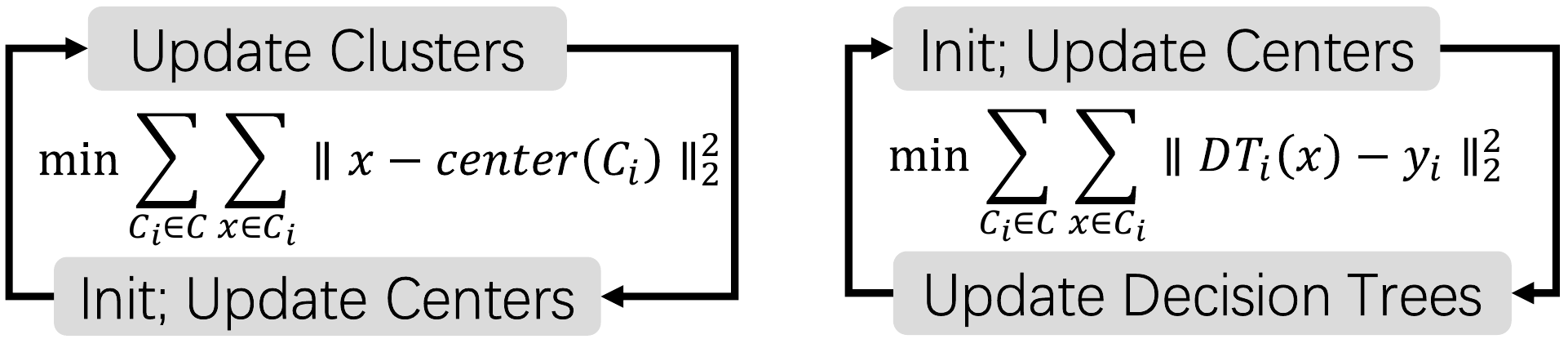}
    \caption{The correlation between our algorithm and the K-means algorithm}
    \label{fig:LoMDT_general_routine}
\end{figure}

\begin{algorithm}[h!]
  \KwIn{Training data $X$, cluster number $n$ for initialization, $F_1$ score threshold $F_{1m}$}
  \KwOut{Clusters $\mC$ and decision tree for each cluster $DT_i$, $i\in\{1,\cdots,|\mC|\}$}
  Initialize $n$ clusters using K-means\;
  \While{True}{
     Build $n$ new decision trees $DT_i$, $i=\{1,\cdots,n\}$ using the algorithm introduced in Section \ref{sec.ruleGen.training} for $n$ clusters\;
     Update the clusters $\mC$ according to the objective Eq.(\ref{objective_localized})\;
     Remove empty clusters\;
     Calculate the $F_1$ score of the predictions made by the MDTs(the rules), and denote it as $f_1$\;
     \uIf{$f_1>F_{1m}$}{break\;}
     Check the $F_1$ score within each cluster, split each of the clusters with too small $F_1$ scores to $n$ new clusters using K-means algorithm.
  }
  \caption{L-\sys learning algorithm}
  \label{alg:LomDT}
\end{algorithm}


\vspace{-10pt}
\subsection{Dynamically Adjusting the Number of Partitions} 
\label{sec.localized.partition}

As shown in Algorithm~\ref{alg:LomDT}, L-\sys uses the hyperparmeter $n$ to specify the number of partitions and initialize each data partition accordingly. 
It is well known that in many clustering algorithms such as k-means the number of clusters is a critical hyper-parameter which determines the quality of data partitioning; and it is hard to tune in many cases~\cite{fu2020estimating}. 
L-\sys does not rely on an appropriate $n$ to achieve good performance, because L-\sys allows the users to set a small $n$ initially and then dynamically adjusts it in the learning process. 






\noindent\textbf{Producing New Partitions.}
L-\sys will produce new partitions by splitting some partitions that are too complicated to summarize and explain with simple rules. The partition is said to be too complicated when the obtained $F1$-score on it is not good enough, more specifically lower than $F1_m$.
This indicates that simple rules could not fully explain this partition.
After identifying a complicated partition, L-\sys uses k-means again to split it into two partitions, then build one decision tree for each new partition. 

\noindent\textbf{Removing Partitions.}
L-\sys identifies the redundant partitions as those bearing large similarity to others such that merging them into other partitions do not degrade the partitioning objective. After identifying redundant partitions, L-\sys will discard them and reassign their data points to other partitions.

Our experiments (Table~\ref{tab:study_of_n}, Sec.~\ref{study_of_the_number_of_clusters}) on 10 datasets show that by starting with a small $n$ L-\sys is always able to produce good results.  



\subsection{Convergence Analysis}
\label{sec.localized.converge}
We theoretically show that L-\sys could converge. 
We establish this conclusion by showing that each step in Algorithm~\ref{alg:LomDT} would never make the objective larger. 

In Alg.~\ref{alg:LomDT} there are four steps which could potentially update the objective. We analyze each step one by one.

\textbf{Step 1} (Line 3): Given a decision tree corresponding to one specific partition, if its F-1-score is below $F1_m$, L-\sys will replace it with a new tree which has higher F-1. Therefore, the error objective Eq.(\ref{decision_tree_cluster}) has also been improved approximately. Then since the locality objective Eq. (\ref{kmeans_objective}) will not be affected by the current step, the whole objective Eq.(\ref{objective_localized}) will also get improved. 

\textbf{Step 2} (Line 4): Assume L-\sys reassigns data point $x_j$ which used to belong to partition $P_j$ to partition $P_j'$ when updating the partitioning according to the following formula:
\begin{equation}
    P_j' = arg\min_k \mL_j(DT_k, C_k) = ||DT_k(x_j) - y_j||_2^2 + \lambda ||x_j - center(C_k)||_2^2
\end{equation}

This leads to:

\begin{equation}
 \mL_j(DT_{P_j'}, C_{P_j'}) \leq \mL_j(DT_{P_j}, C_{P_j})
\end{equation}

Therefore, in the first four steps, L-\sys always gets the objective Eq.(\ref{objective_localized}) smaller and will converge eventually.


Similarly, denoting the existing partitioning as $\mC$ and the new partitioning as $\mC'$, by Eq.~\ref{objective_localized}, we get:
\begin{align}
    \mL_{L-STAIR}(\mC') &= \sum_{j=1}^n ||DT_{P_j'}(x_j) - y_j||_2^2 + \lambda ||x_j - center(C_{P_j'})|| \nonumber \\
    &= \sum_{j=1}^n \mL_j(DT_{P_j'}, C_{P_j'}) \leq \sum_{j=1}^n \mL_j(DT_{P_j}, C_{P_j})\nonumber \\
    &= \mL_{L-STAIR}(\mC)
\end{align}

Thus, this step gets the objective $\mL_{L-STAIR}$ smaller. 

\textbf{Step 3} (Line 5): By Eq.(\ref{objective_localized}), the empty set contributes nothing to the objective. Therefore, directly removing empty partitions has no impact to the objective. 

\textbf{Step 4} (Line 8): Assume L-\sys splits partition $C_s$ into $n$ new partitions $C_{si}, i\in\{1,\cdots, n\}$ and builds $n$ decision trees. Denoting the decision tree w.r.t $C_s$ and $C_{si}$ as $DT_s$ and $DT_{si}$, we have:

\begin{align}
\sum_{x,y \in C_s}||DT_s(x) - y||_2^2 &\geq \sum_{i=1}^n \sum_{x,y\in C_{si}} ||DT_{si}(x) - y||_2^2 \nonumber \\
\sum_{x \in
C_s} ||x - center(C_s)|| &>  \sum_{i=1}^n \sum_{x\in C_{si}} ||x - center(C_{si})||_2^2   
\end{align}

Because L-\sys always makes both the error objective and locality objective smaller after updating the decision trees, L-\sys is guaranteed to minimize the final L-\sys objective Eq.~\ref{objective_localized} and thus converge eventually. 


\end{sloppypar}

\section{Experiments}
\label{sec.exp}

\begin{table}[t]
    \centering
    \caption{Statistics of the 10 Datasets.}
    \label{tab:statistics}
    \begin{tabular}{c|ccc}
    \toprule
        Dataset & \# Instances & Outlier Fract. & \# of Dims \\
        \midrule
        PageBlock & 5473 & 10\% & 10 \\
        Pendigits & 6870 & 2.3\% & 16 \\
        Shuttle & 49097 &  7\% & 9 \\
        Pima & 768 & 35\% & 8 \\
        Mammography & 11873 & 2.3\% & 6 \\
        Satimage-2 & 5803 & 1.2\%  & 36 \\
        Satellite & 6435 &  32\%  & 36 \\
        SpamBase & 4601 & 40\% & 57 \\
        Cover & 286048 &  0.9\% & 10 \\
        Thursday-01-03 & 33110 & 28\% & 68 \\
        \bottomrule
    \end{tabular}
\end{table}
Our experimental study aims to answer the following questions:
\begin{compactitem}
    \item \textbf{Q1}: How do \sys and L-\sys compare against other methods in the total rule lengths given a $F_1$ threshold?
    
    \item \textbf{Q2}: How do \sys and L-\sys compare against other methods in $F_1$ score when producing rules with the similar complexity?
    
    \item \textbf{Q3}: How do the parameters $L_m$ and $F1_m$ affect the performance of \sys? 
    
    \item \textbf{Q4}: How does the number of partition $n$ affect the performance of L-\sys?
    
    \item \textbf{Q5}: How good is L-\sys at preserving the locality of the data?
    
    
    \item \textbf{Q6}: How does \sys dynamically adjust the value of stabilizer M introduced in our summarization and interpretation-aware optimization objective?
        
    \item \textbf{Q7}: How does \sys perform in multi-class classification? 
\end{compactitem}

\subsection{Experimental Settings}
\label{sec.exp.setting}

\noindent\textbf{Datasets.}
We evaluate the effectiveness of \sys and L-\sys on ten benchmarks outlier detection datasets. Table~\ref{tab:statistics} summarizes their key statistics.

\noindent \textbf{Hardware Settings.} We implement our algorithm with python 3.7. We use the decision tree algorithms in scikit-learn and implement \sys with numpy. 
We train all models on AMD Ryzen Threadripper 3960X 24-Core Processor with 136GB RAM.

\noindent\textbf{Baselines.}
We compare against two decision-tree methods:
\begin{compactitem}
    \item \textbf{ID3}~\cite{DBLP:journals/ml/Quinlan86}: The classic decision tree algorithm. To find the simplest decision tree that satisfies the accuracy threshold $F1_m$, we start with a small tree (depth 3) and iteratively increases its depth until the obtained tree could yield a $F1$ score larger than $F1_m$. 
    \item \textbf{CART}~\cite{10.1007/978-981-10-6747-1_4}: CART uses a post-processing to prune a learned decision tree. The goal is to minimize the complexity of the decision tree, while still preserving the accuracy. We first use ID3 to build a decision tree that is as accurate as possible and then continue to prune it until it is right above the F-1 score threshold.
\end{compactitem}


\subsection{Comparison Against Baselines (Q1):  Total Rule Length}
\label{sec.exp.baseline}

\begin{table}[t]
\centering
\caption{Total rule length under similar $F_1$ score (Q1).}
\label{tab:overall_performance_comparison}
\begin{tabular}{c|c|c|c|cc}
\toprule
Dataset & \multicolumn{1}{c|}{ID3} & \multicolumn{1}{c|}{CART}  & \multicolumn{1}{c|}{STAIR} & \multicolumn{1}{c}{L-STAIR}  \\
\midrule
PageBlock & 97 & 88 & 50 & \textbf{25} \\ 
Pendigits & 290 & 328 & 187 & \textbf{60}\\
Shuttle & 1520 & 863 & 697 & \textbf{125} \\ 
Pima & 20 & 12 & 12 & \textbf{10} \\
Mammography & 79 & 65 & 66 & \textbf{24} \\ 
Satimage-2 & 151 & 117 & 93 & \textbf{38} \\ 
Satellite & 1263 & 471 & 442 & \textbf{70} \\ 
SpamBase & 1546 & 1043 & 1017 & \textbf{150} \\ 
Cover & 6616 & 4869 & 4657 & \textbf{402} \\ 
Thursday-01-03 & 4032 & 1393 & 957 & \textbf{440} \\
\bottomrule
\end{tabular}
\end{table}

\begin{figure}[t]
    \centering
    \subfigure[Pendigits]{\label{fig:accwrtlength_pendigits}\includegraphics[width=0.48\linewidth]{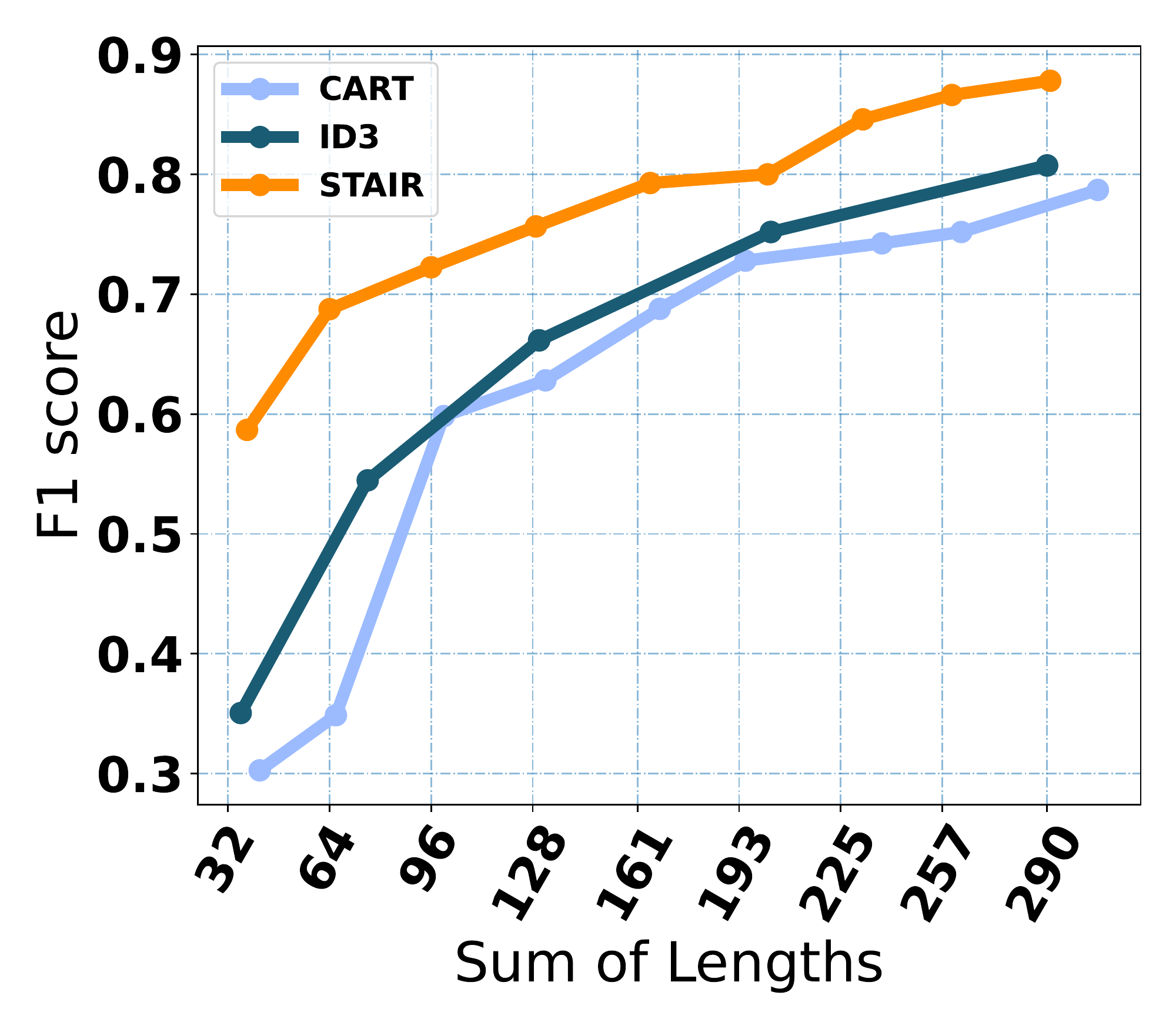}}
    \subfigure[Thursday-01-03]{\label{fig:accwrtlength_Thursday-01-03}\includegraphics[width=0.48\linewidth]{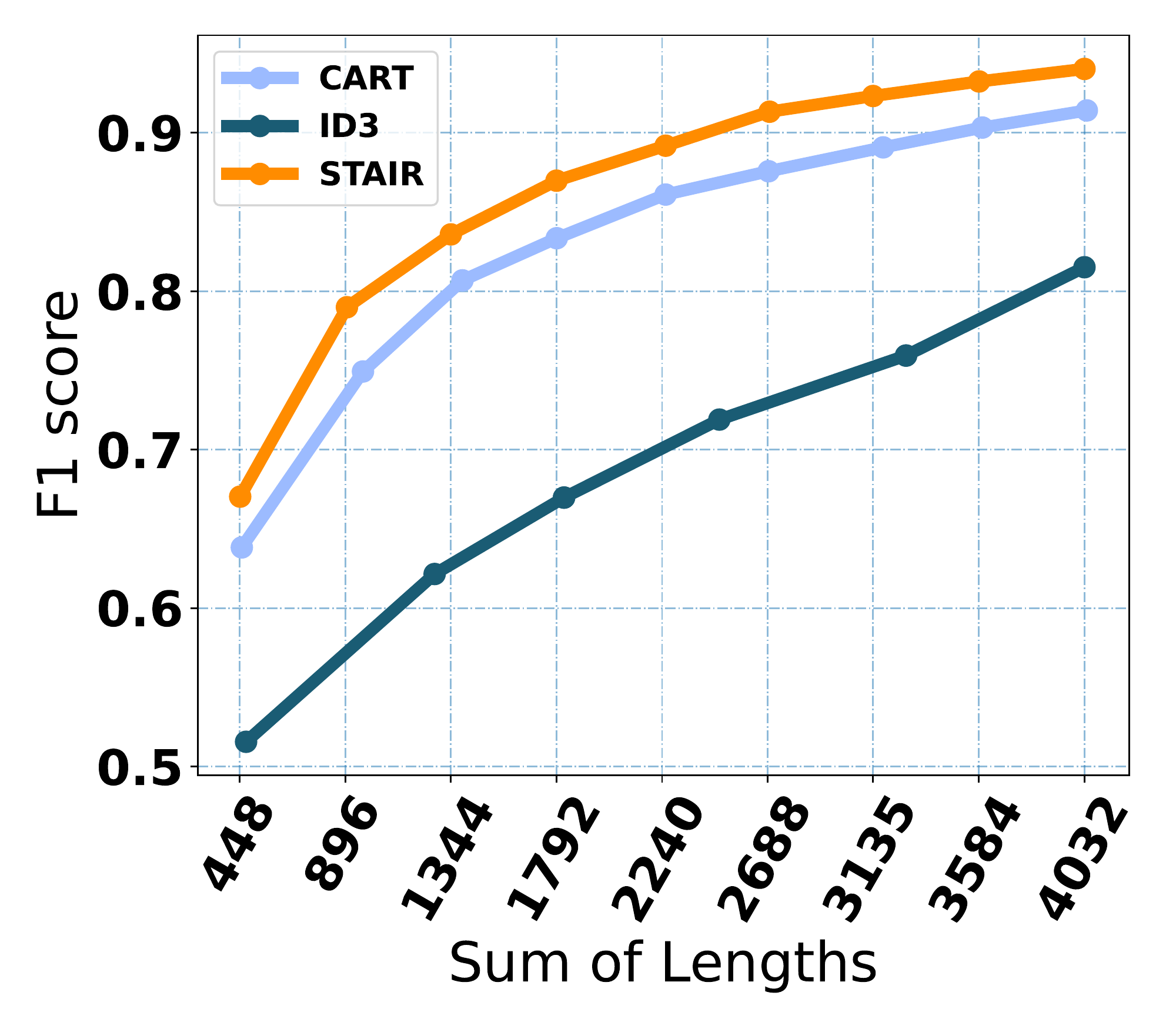}}
    \caption{$F_1$ score with varying total rule lengths (Q2).}
    \label{fig:AccwrtLength}
\end{figure}

\begin{figure*}[ht]
    \centering
    \subfigure[PageBlock]{\label{fig:pageblock_lm}\includegraphics[width=0.19\linewidth]{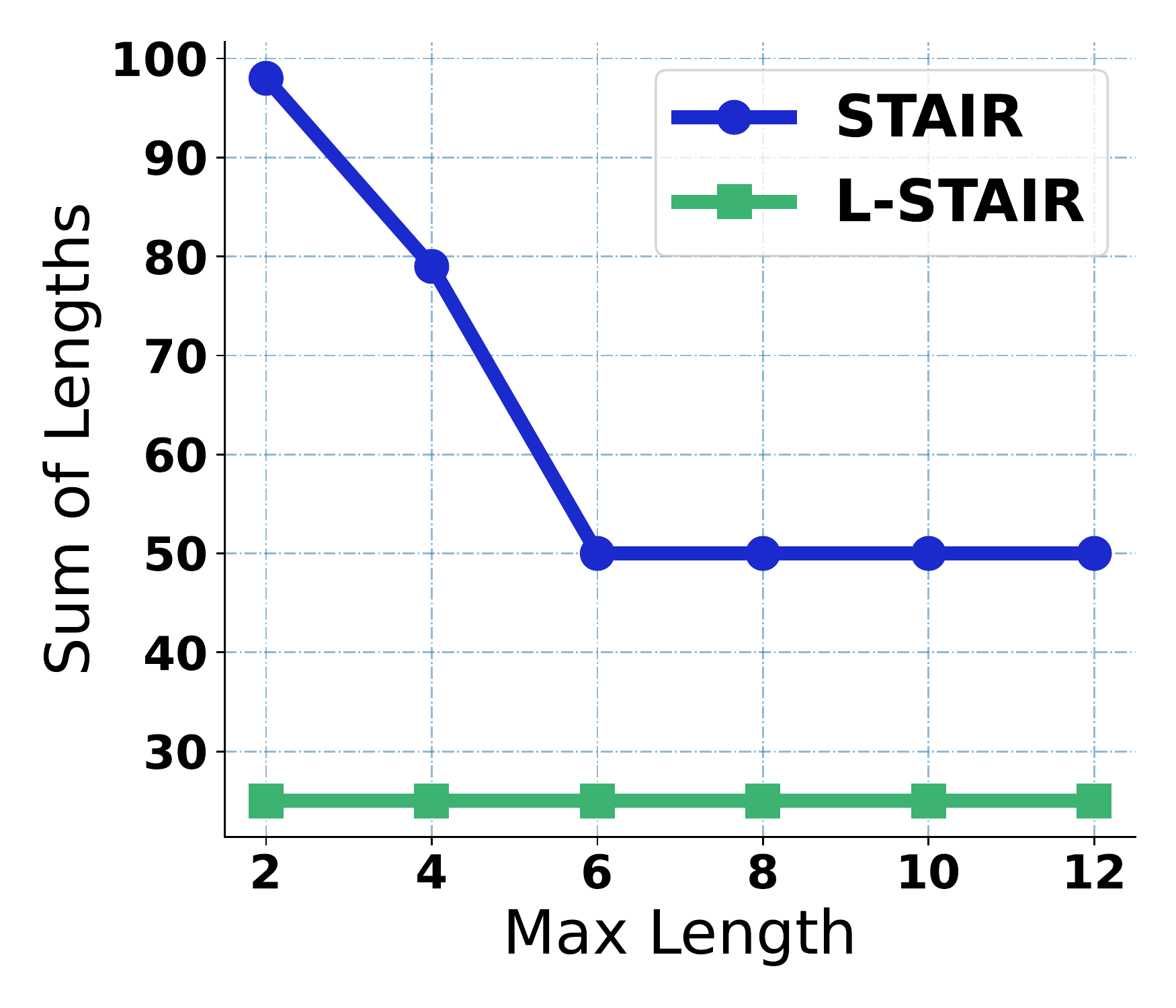}}
    \subfigure[Pendigits]{\label{fig:pendigits_lm}\includegraphics[width=0.19\linewidth]{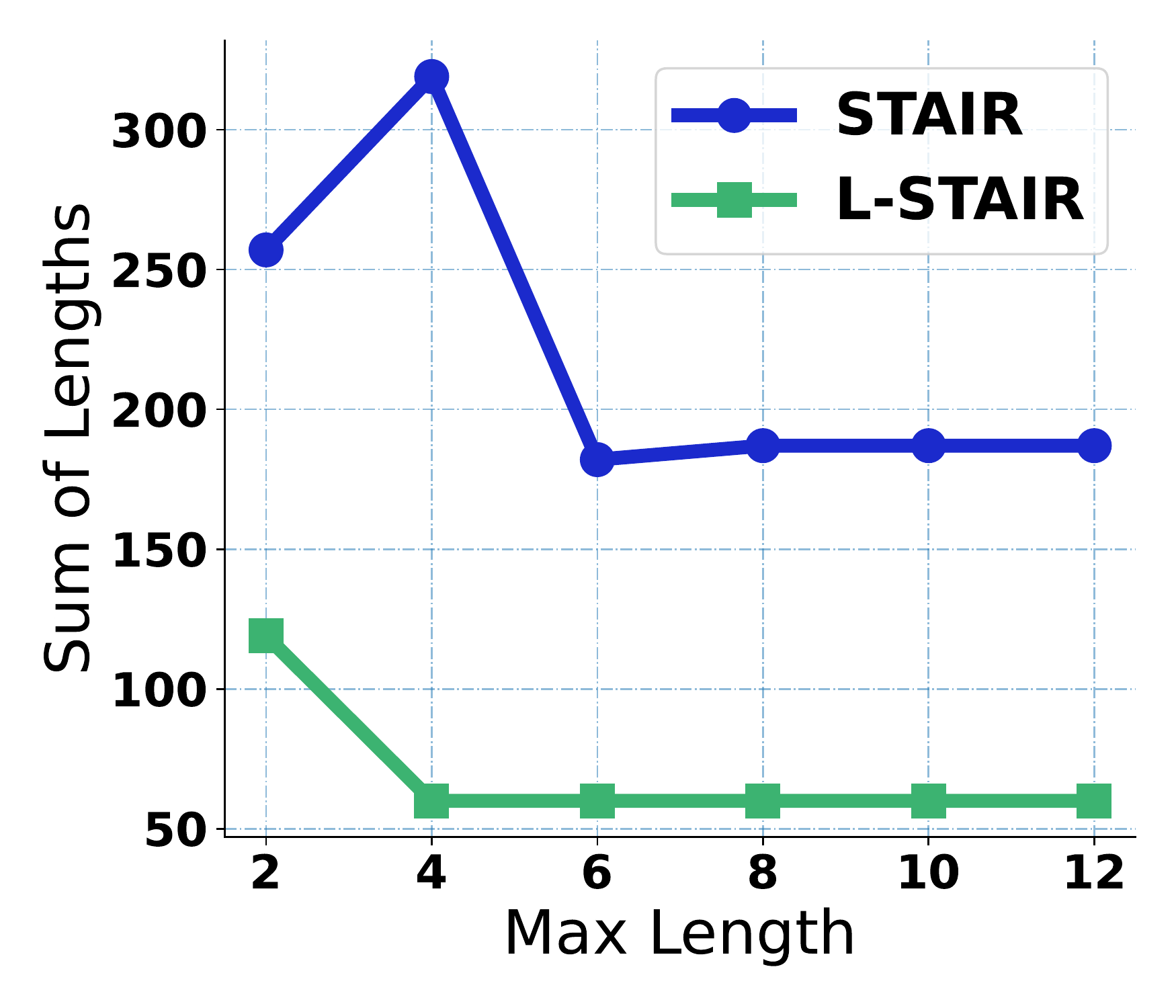}}
    \subfigure[Shuttle]{\label{fig:shuttle_lm}\includegraphics[width=0.19\linewidth]{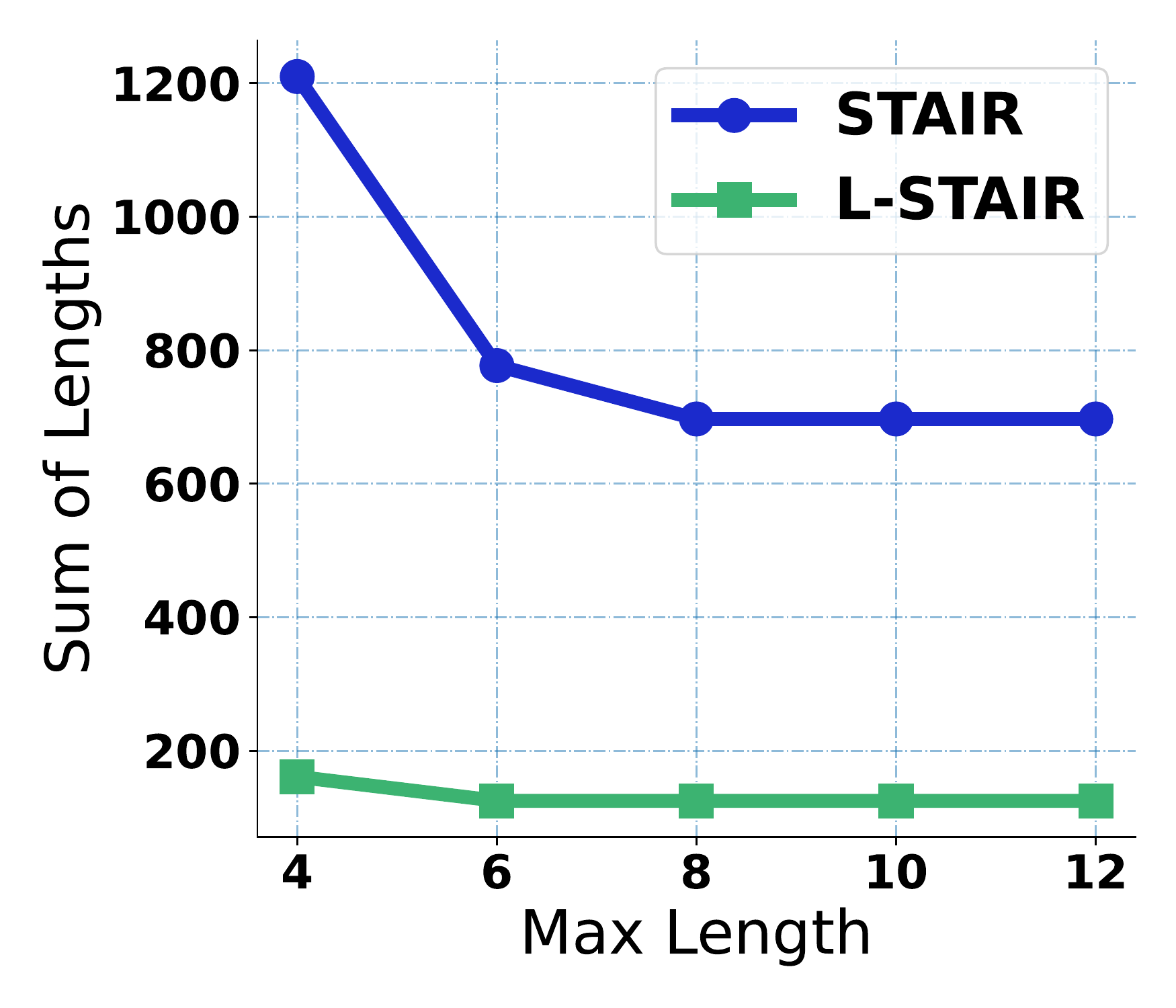}}
    \subfigure[Pima]{\label{fig:pima_lm}\includegraphics[width=0.19\linewidth]{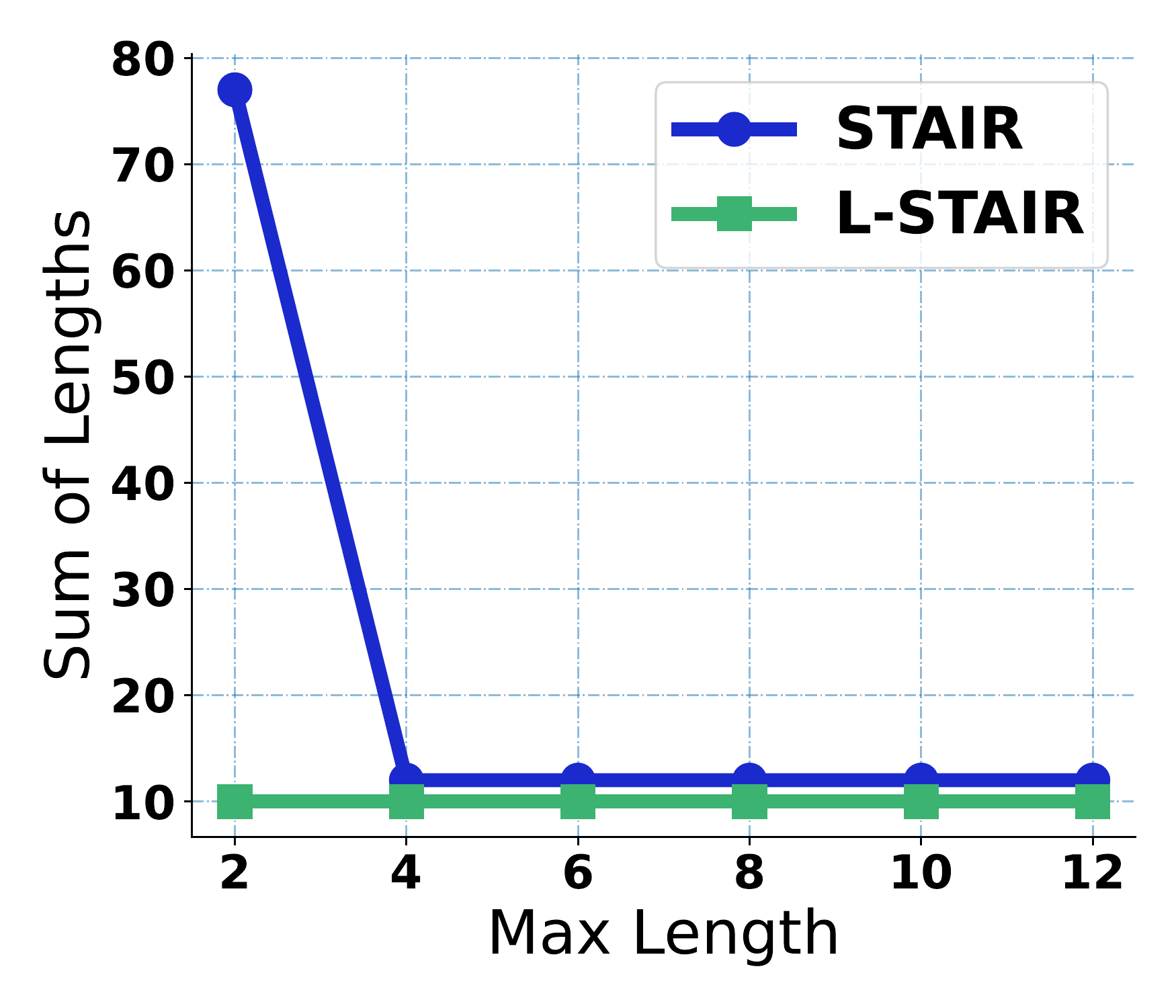}}
    \subfigure[Mammography]{\label{fig:mammography_lm}\includegraphics[width=0.19\linewidth]{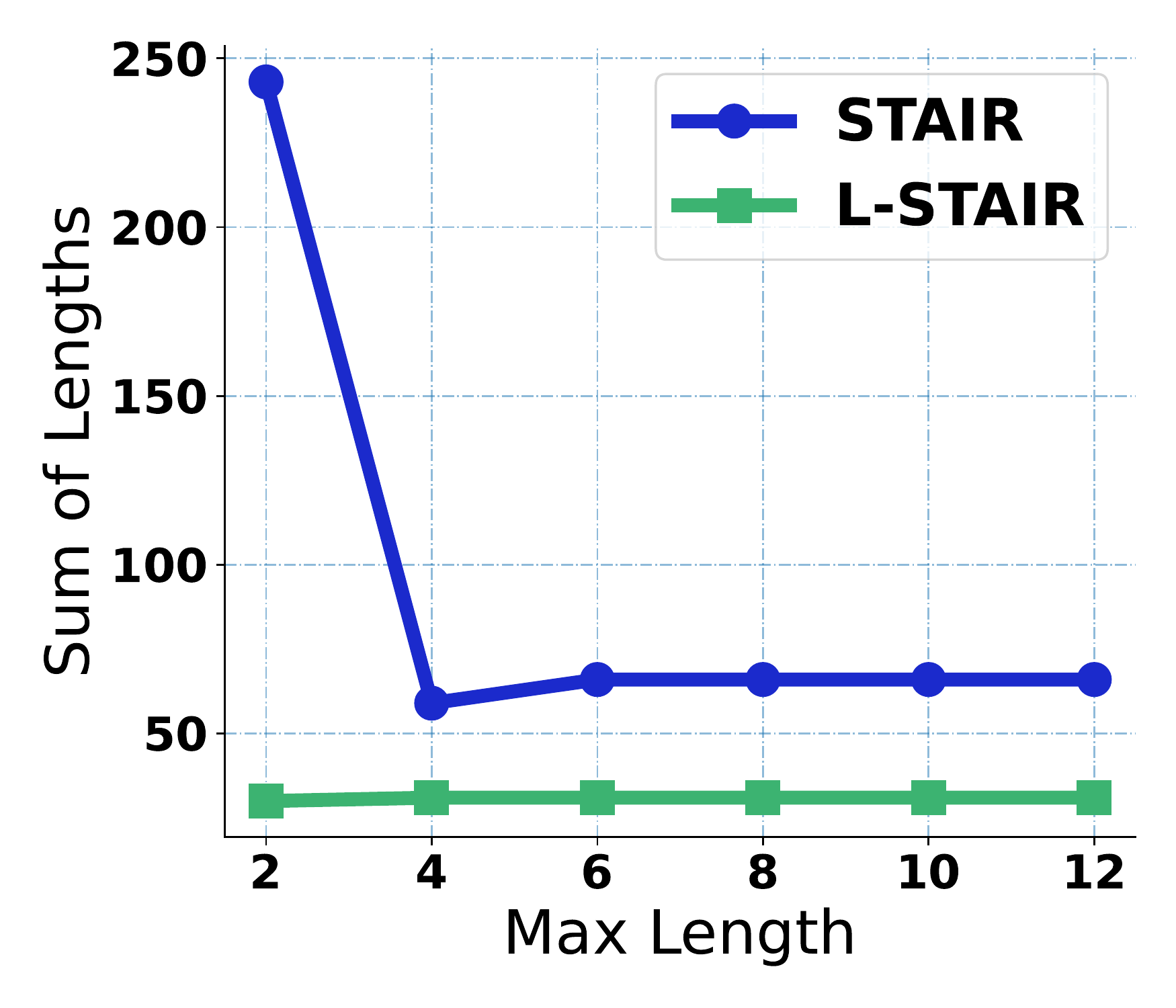}}
    \subfigure[Satimage-2]{\label{fig:Satimage-2_lm}\includegraphics[width=0.19\linewidth]{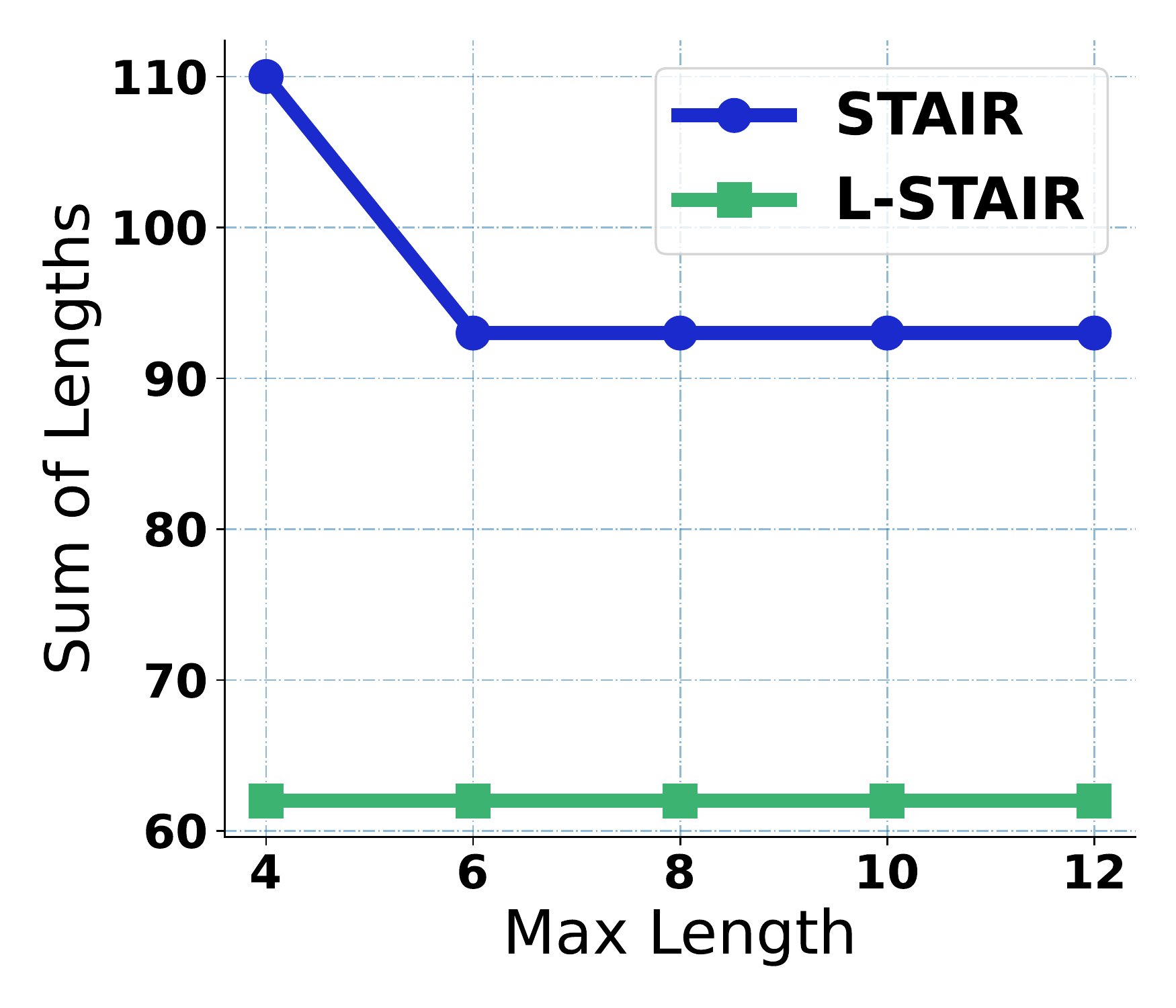}}
    \subfigure[Satellite]{\label{fig:satellite_lm}\includegraphics[width=0.19\linewidth]{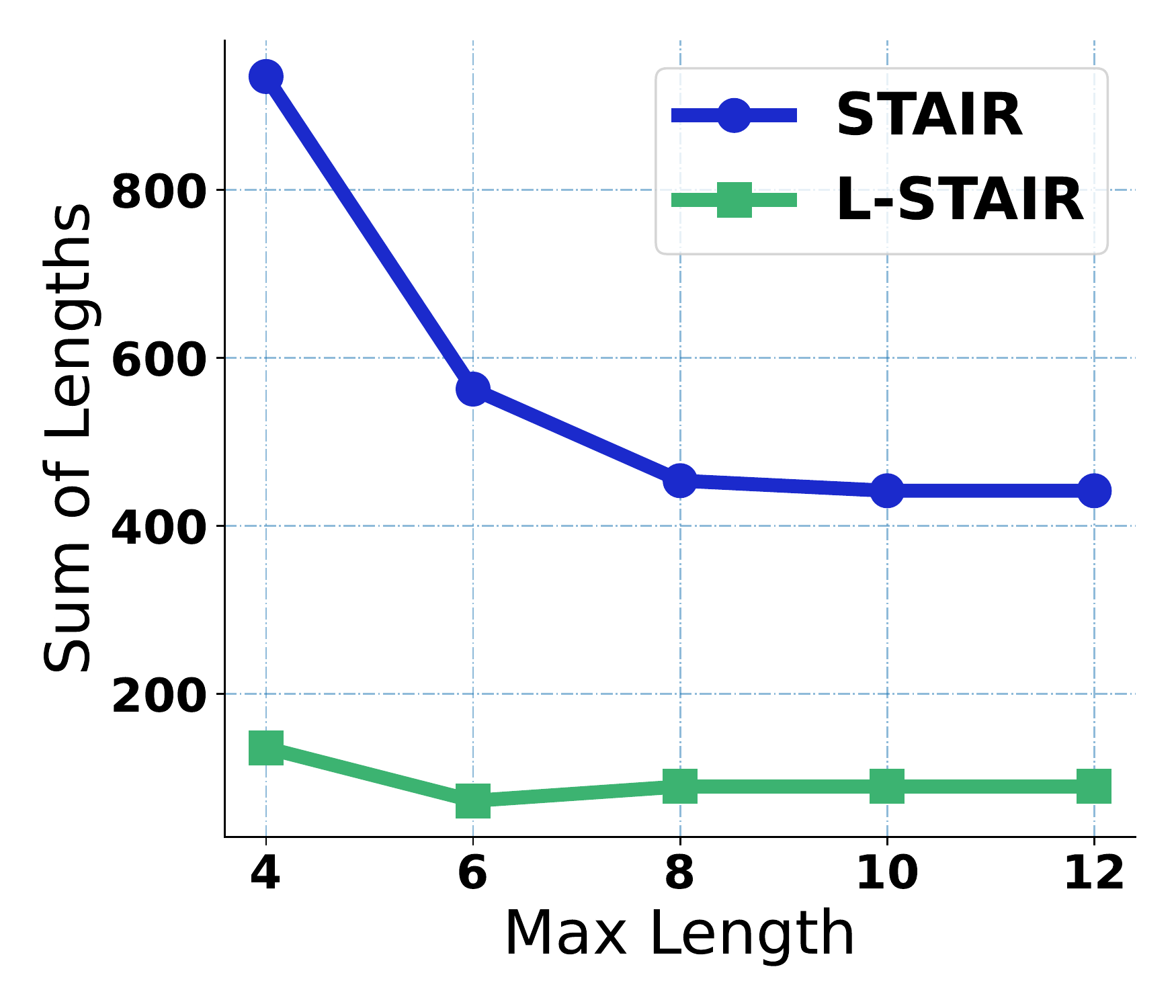}}
    \subfigure[SpamBase]{\label{fig:SpamBase_lm}\includegraphics[width=0.19\linewidth]{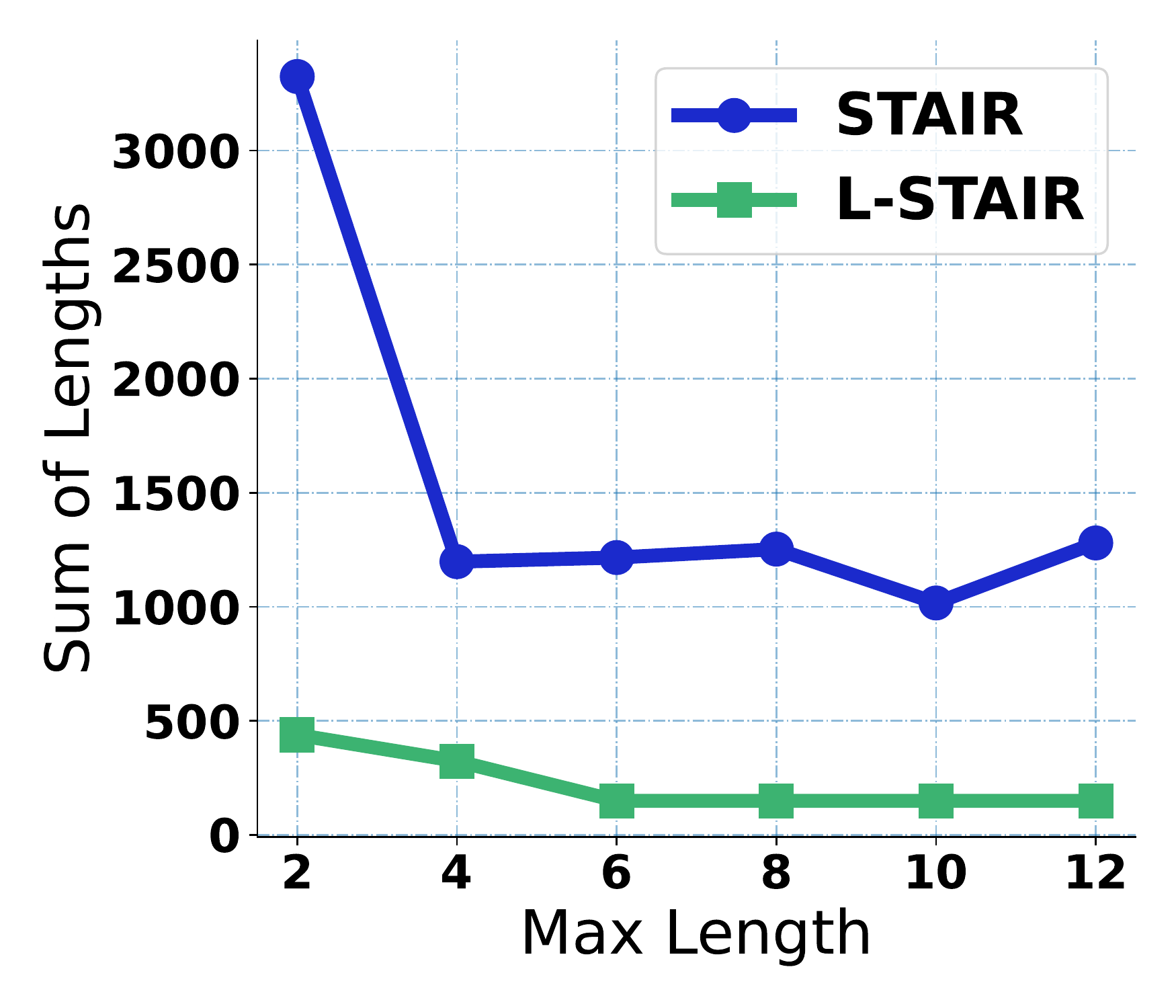}}
    \subfigure[Cover]{\label{fig:cover_lm}\includegraphics[width=0.19\linewidth]{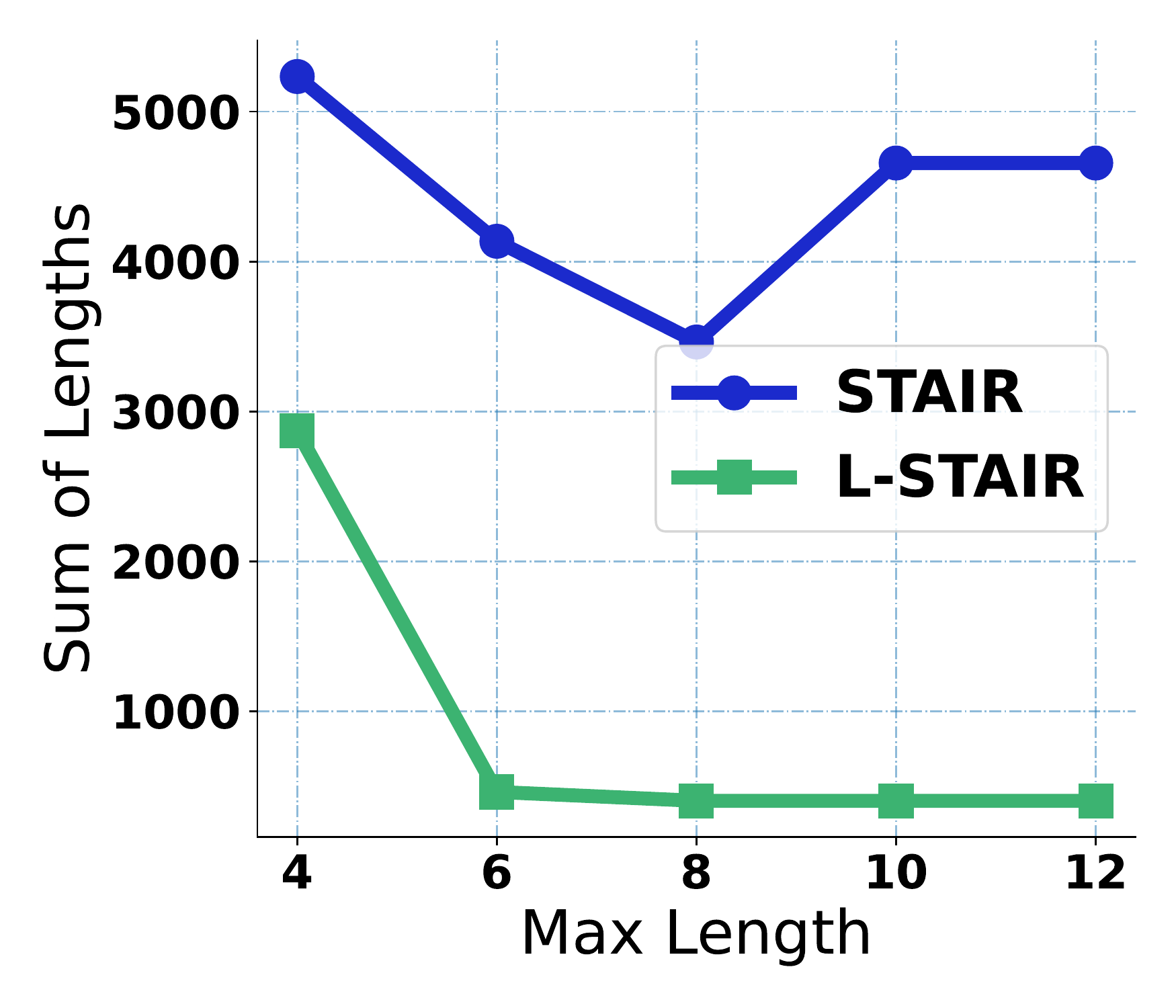}}
    \subfigure[Thursday-01-03]{\label{fig:Thursday-01-03_lm}\includegraphics[width=0.19\linewidth]{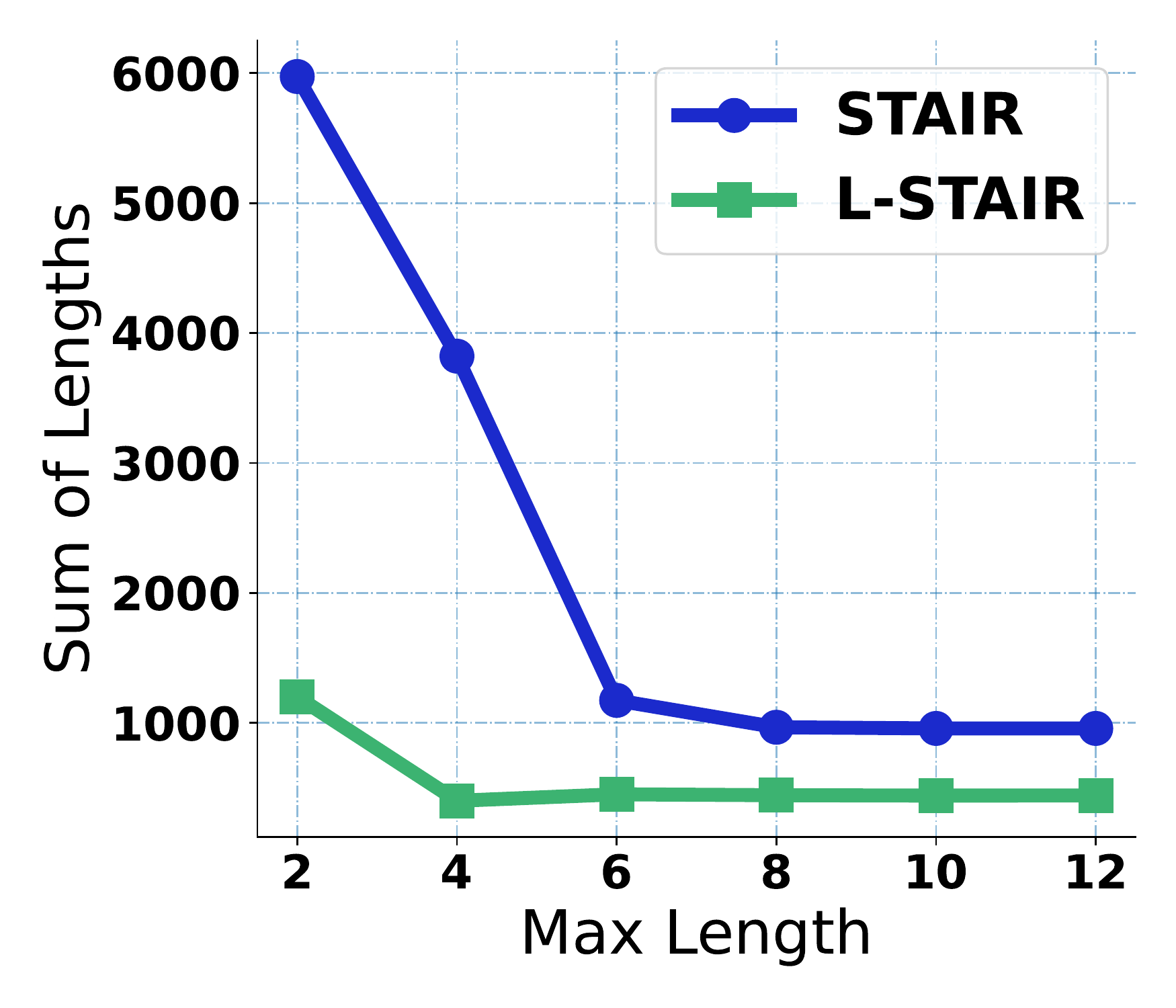}}
    \caption{The effects of the maximal length $L_m$ on the total rule length (Q3).}
    \label{fig:effects_of_lm}
\end{figure*}

\begin{figure*}
    \centering
    \subfigure[PageBlock]{\label{fig:pageblock_f1m}\includegraphics[width=0.19\linewidth]{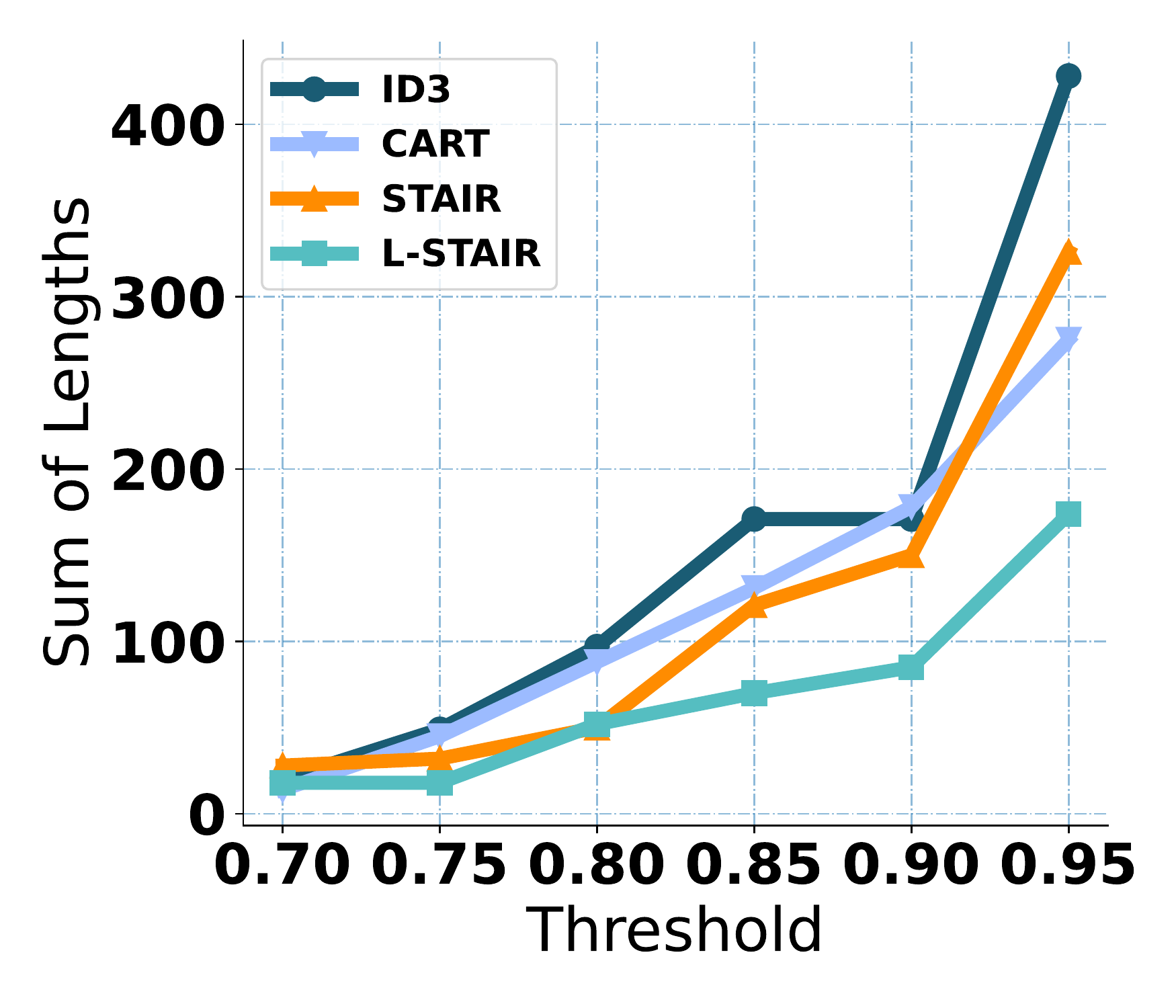}}
    \subfigure[Pendigits]{\label{fig:pendigits_f1m}\includegraphics[width=0.19\linewidth]{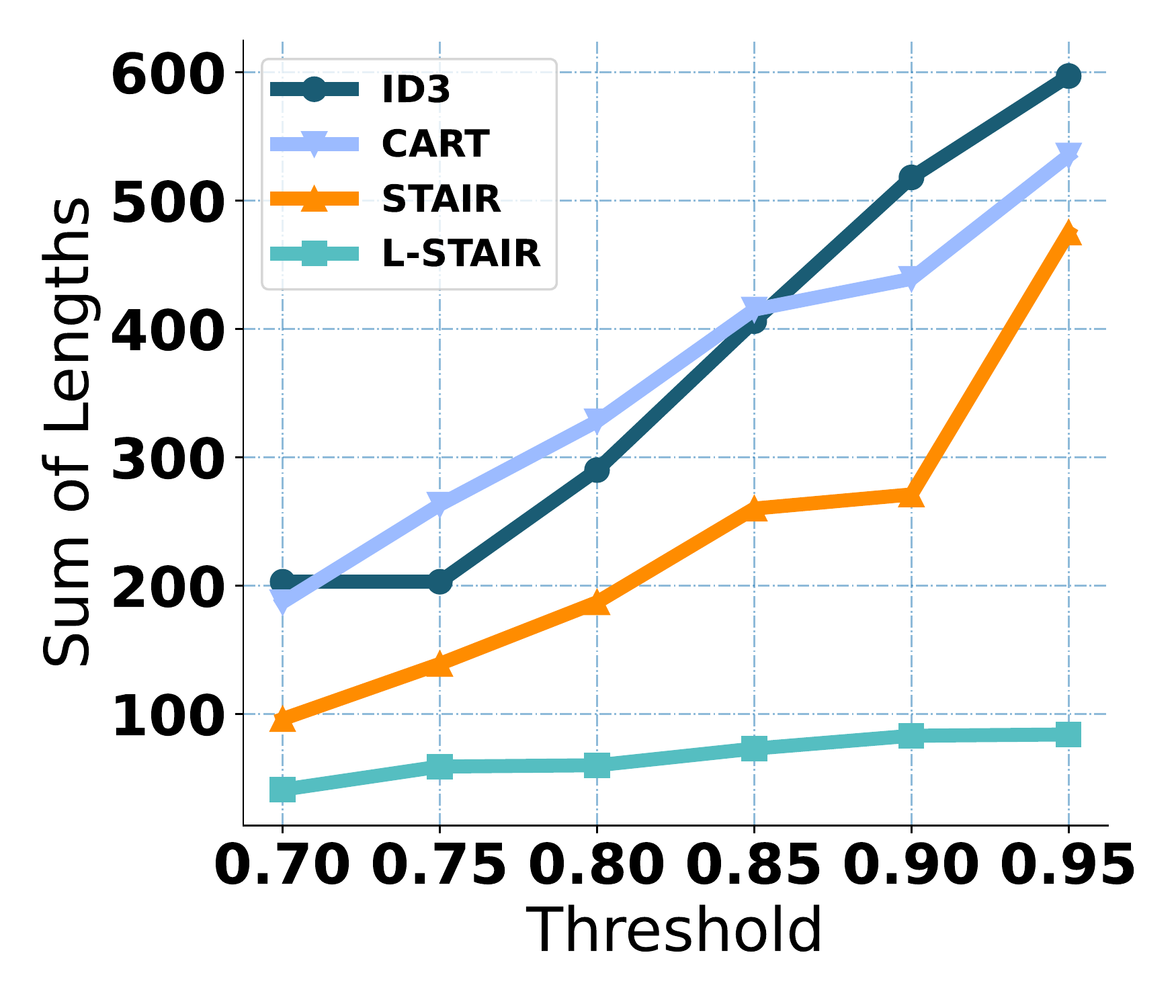}}
    \subfigure[Shuttle]{\label{fig:shuttle_f1m}\includegraphics[width=0.19\linewidth]{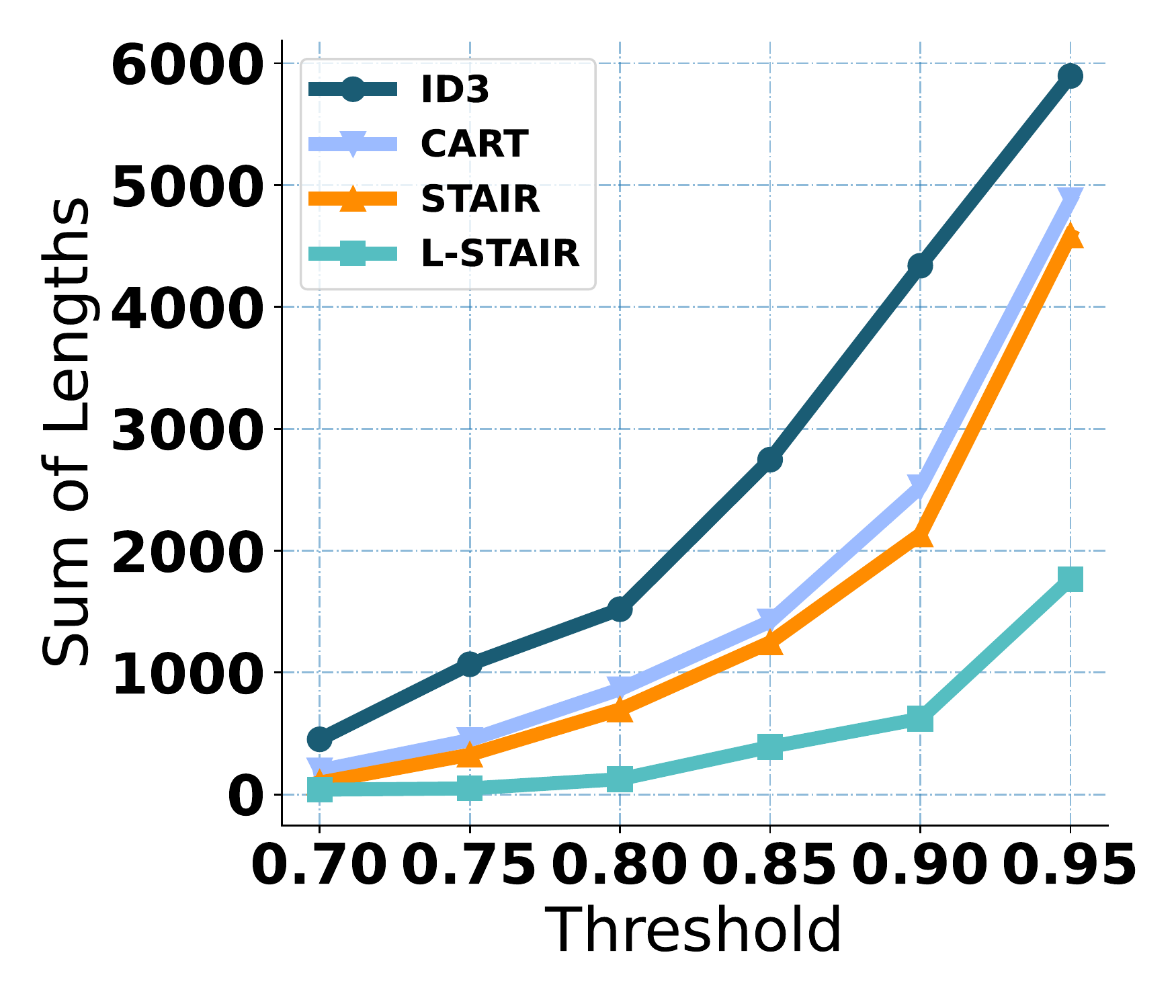}}
    \subfigure[Pima]{\label{fig:pima_f1m}\includegraphics[width=0.19\linewidth]{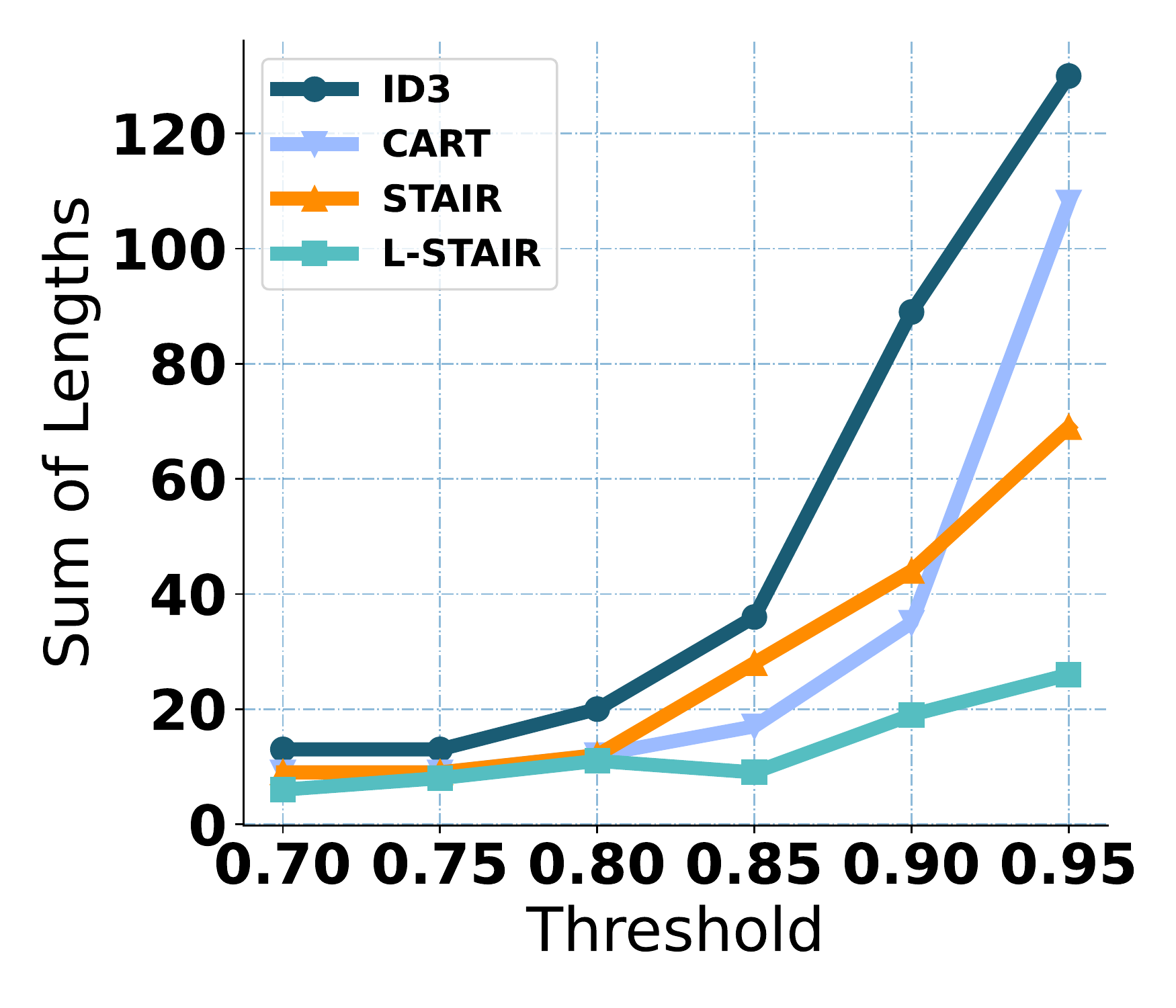}}
    \subfigure[Mammography]{\label{fig:mammography_f1m}\includegraphics[width=0.19\linewidth]{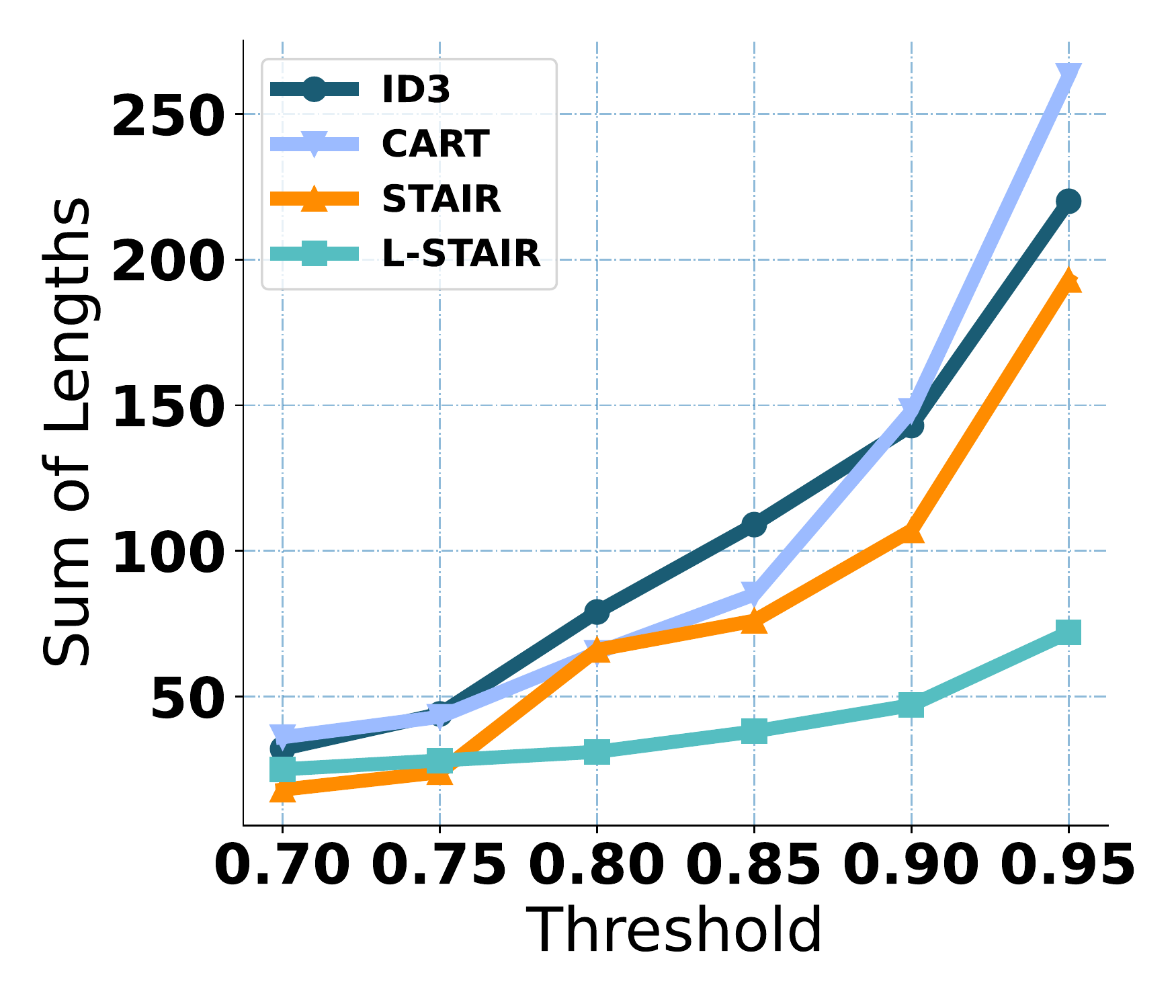}}
    \subfigure[Satimage-2]{\label{fig:Satimage-2_f1m}\includegraphics[width=0.19\linewidth]{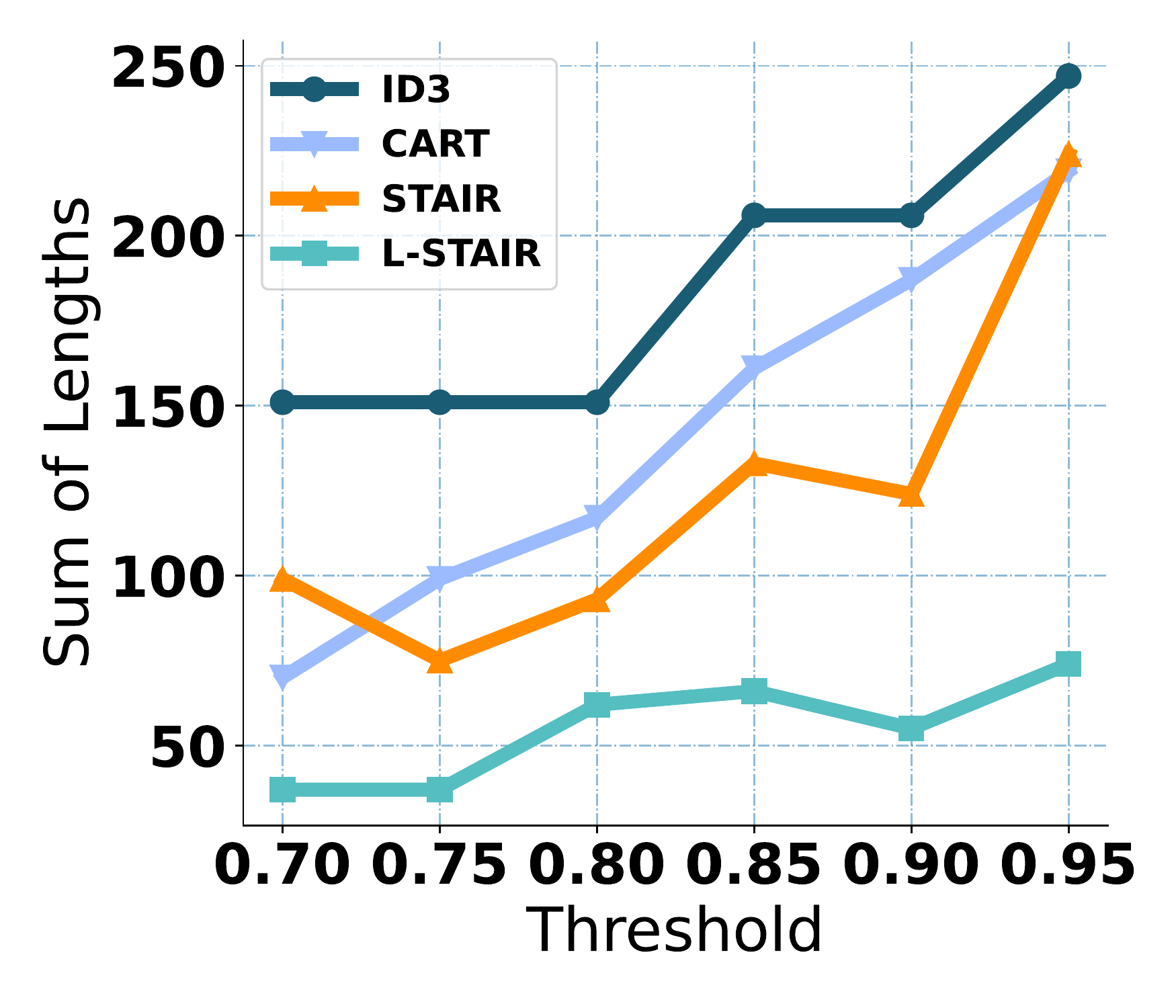}}
    \subfigure[Satellite]{\label{fig:satellite_f1m}\includegraphics[width=0.19\linewidth]{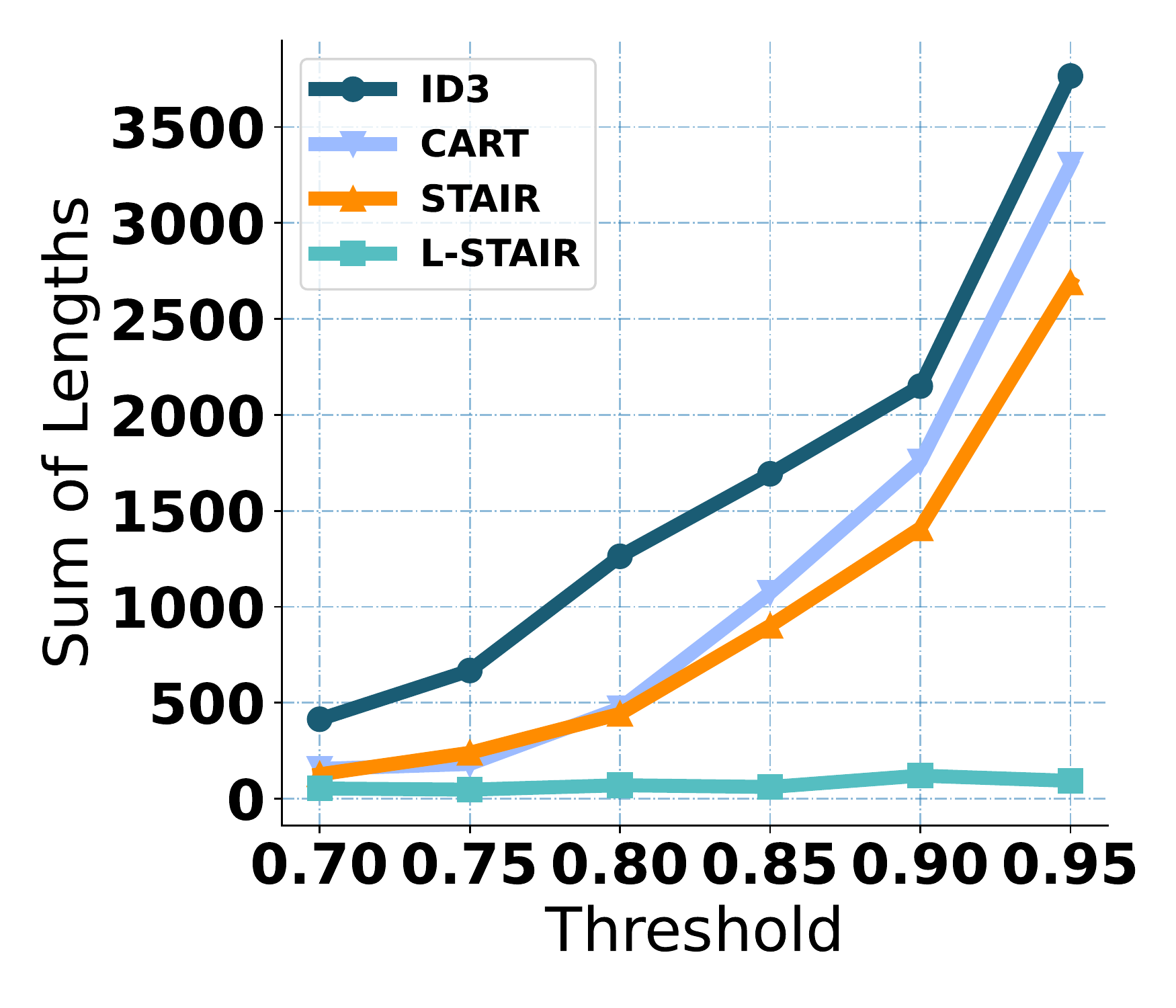}}
    \subfigure[SpamBase]{\label{fig:SpamBase_f1m}\includegraphics[width=0.19\linewidth]{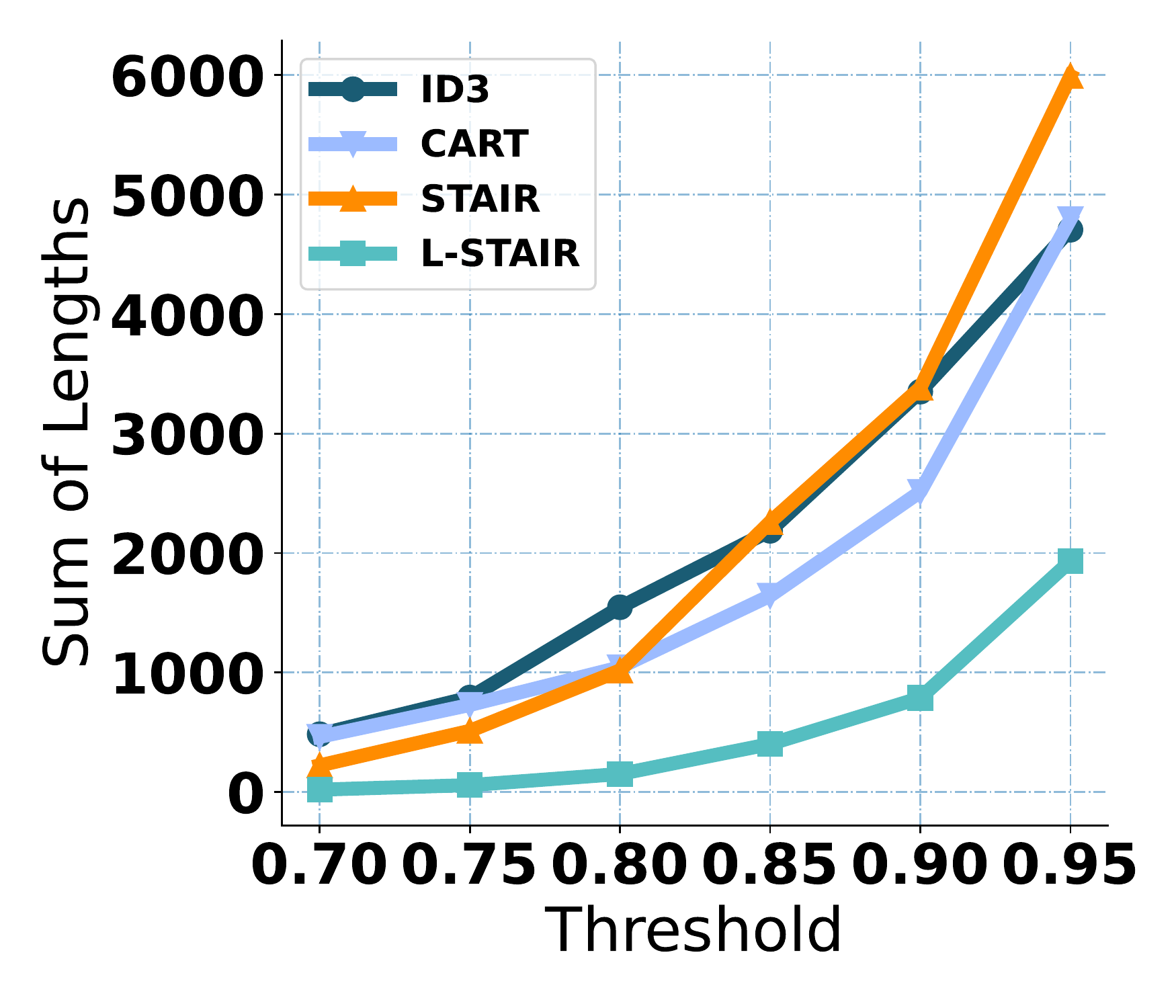}}
    \subfigure[cover]{\label{fig:cover_f1m}\includegraphics[width=0.19\linewidth]{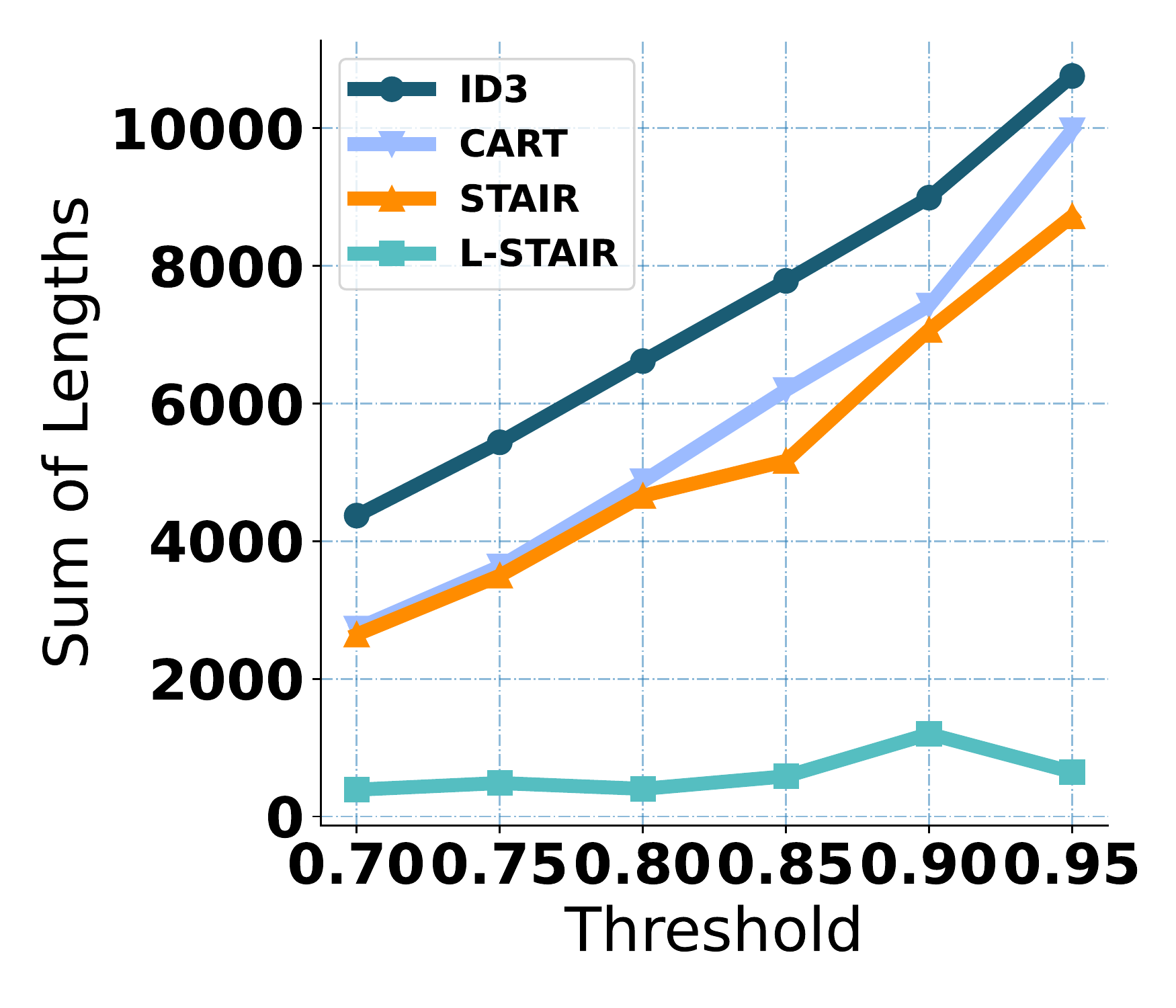}}
    \subfigure[Thursday-01-03]{\label{fig:Thursday_01_03_f1m}\includegraphics[width=0.19\linewidth]{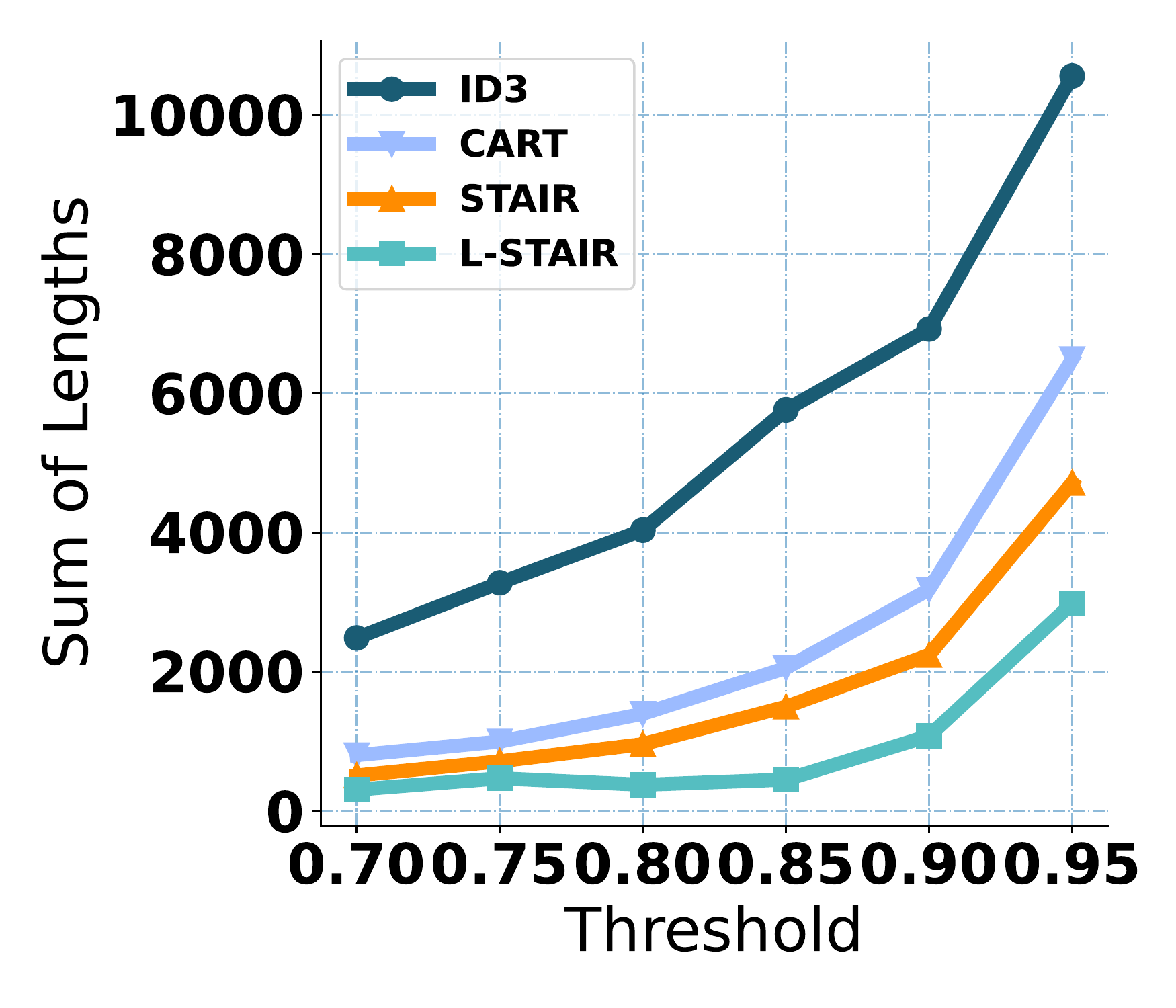}}
    \caption{The effect of the threshold $F1_m$ on all methods (Q3).}
    \label{fig:effects_of_f1m}
    \vspace{-8pt}
\end{figure*}

In this set of experiments, we measure the total length of the rules produced by each algorithm when they produce trees with similar $F_1$ score.
We set the maximal length of the rules $L_m$ to 10 and the $F_1$ score threshold $F1_m$ to 0.8. 
Because ID3 and CART do not use the $F_1$ score threshold in their algorithms, we tune their hyper-parameters to produce the simplest tree with a $F_1$ score slightly higher than 0.8.
This ensures that all algorithms have the {\bf similar} $F_1$ score.
L-\sys automatically determines the number of data partitions with initial partition number picked from \{2, 4, 8\}. 
We set the maximal iteration to $10$.


From the results shown in Table~\ref{tab:overall_performance_comparison}, we draw the following conclusions: (1) Compared to ID3 and CART, \sys is able to produce much simpler rules which are amenable for humans to evaluate,  significantly reducing the total length of the rules by up to 76.3\% as shown in the results of the dataset \emph{Thursday-01-03}. This is because the summarization and interpretation-aware optimization objective (Eq.~\ref{eq:formulation}) of \sys simultaneously minimizes the complexity of the tree and maximizes the classification accuracy; (2) The performance of \sys on the SpamBase dataset is not satisfying potentially due to its large dimensionality. SpamBase has 57 attributes. It thus might be over-complicated to use a single small tree to summarize the whole dataset; (3) L-\sys which partitions the data and produces one tree for each data partition solves the problem mentioned in (2) and outperforms the basic \sys by up to 91.37\% as shown in the results of the dataset \emph{Cover}.



\begin{table*}[t]
    \centering
    \caption{The number of partitions $n$ in L-\sys (Q4).}
    \label{tab:study_of_n}
    \begin{tabular}{c|ccc|ccc|ccc}
    \toprule
         & \multicolumn{3}{c|}{L-STAIR ($n$=2)} & \multicolumn{3}{c|}{L-STAIR ($n$=4)}  & \multicolumn{3}{c}{L-STAIR ($n$=8)}  \\
         \midrule
        Dataset & Length & \# of R & \# of C & Length & \# of R & \# of C & Length & \# of R & \# of C \\
        \midrule
        PageBlock & 31 & 26 & 9 & 25 & 20 & 8 & 67 & 31 & 8 \\
        Pendigits & 60 & 20 & 2 & 86 & 37 & 4 & 68 & 39 & 8 \\
        Shuttle & 125 & 52 & 9 & 309 & 112 & 10 & 357 & 108 & 9 \\
        Pima & 10 & 6 & 2 & 17 & 12 & 4 & 27 & 20 & 8 \\
        Mammography & 31 & 20 & 6 & 24 & 15 & 5 & 29 & 21 & 8 \\
        Satimage-2 & 62 & 27 & 4 & 38 & 19 & 4 & 49 & 31 & 8 \\
        Satellite & 70 & 18 & 2 & 78 & 28 & 4 & 209 & 74 & 8 \\
        SpamBase & 150 & 51 & 8 & 218 & 73 & 10 & 278 & 72 & 9 \\
        Cover & 402 & 117 & 5 & 470 & 136 & 4 & 621 & 195 & 8 \\
        Thursday-01-03 & 477 & 183 & 11 & 479 & 182 & 11 & 440 & 169 & 11 \\
        \bottomrule
    \end{tabular}
    \vspace{-10pt}
\end{table*}

\begin{figure}[ht]
    \centering
\subfigure[PageBlock]{\label{fig:pageblock_lomdt}\includegraphics[width=0.32\linewidth]{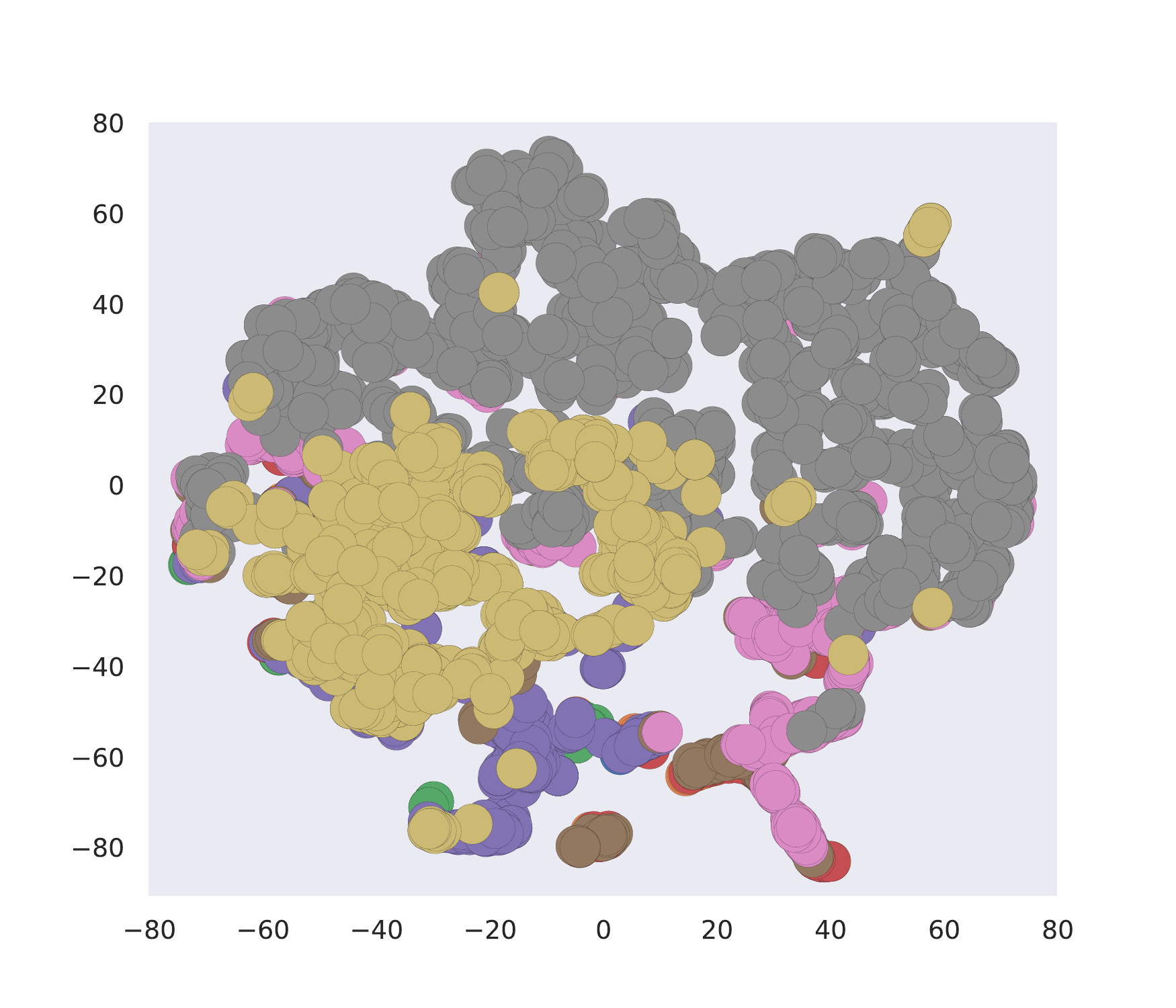}}
\subfigure[Pendigits]{\label{fig:pendigits_lomdt}\includegraphics[width=0.32\linewidth]{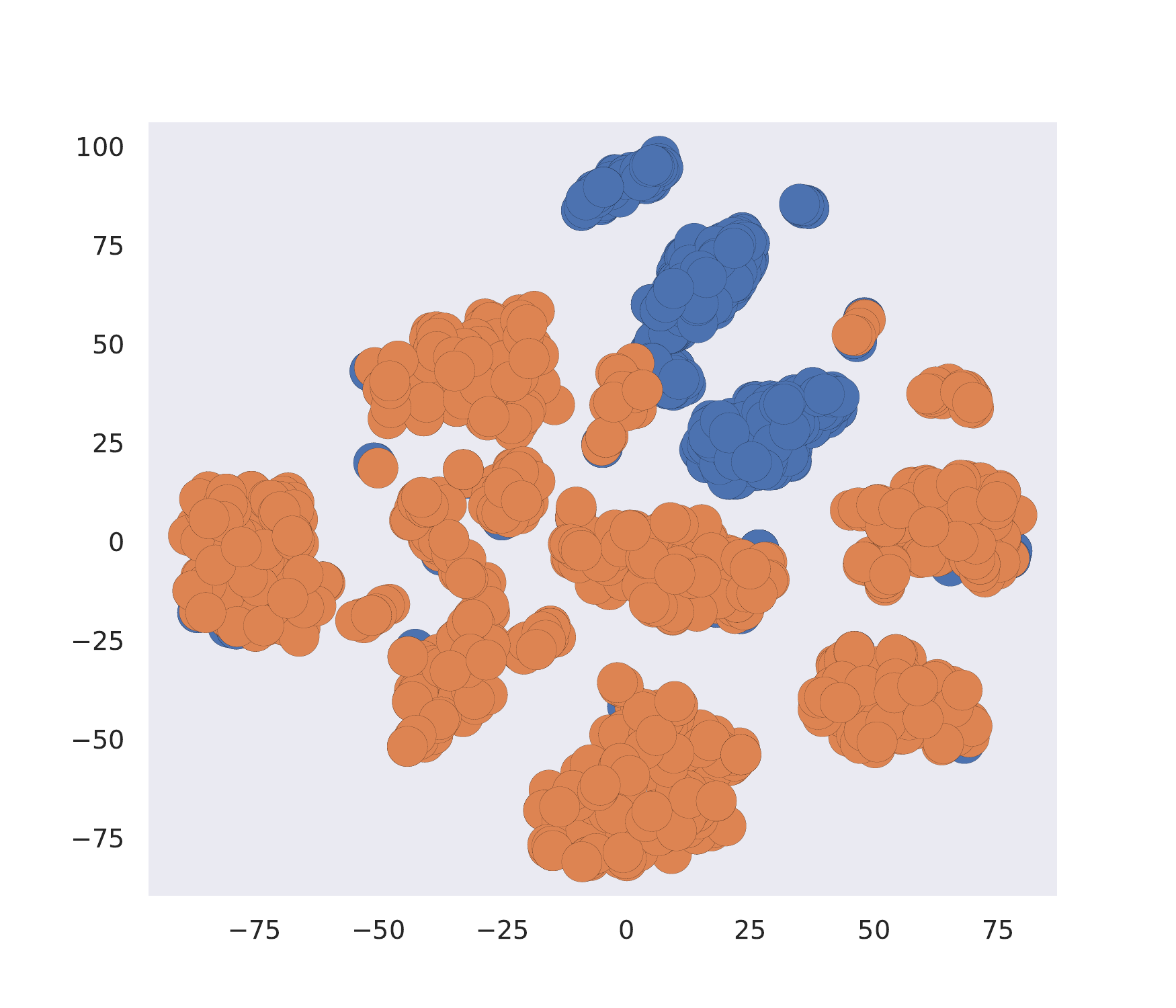}}
\subfigure[Mammography]{\label{fig:mammography_lomdt}\includegraphics[width=0.32\linewidth]{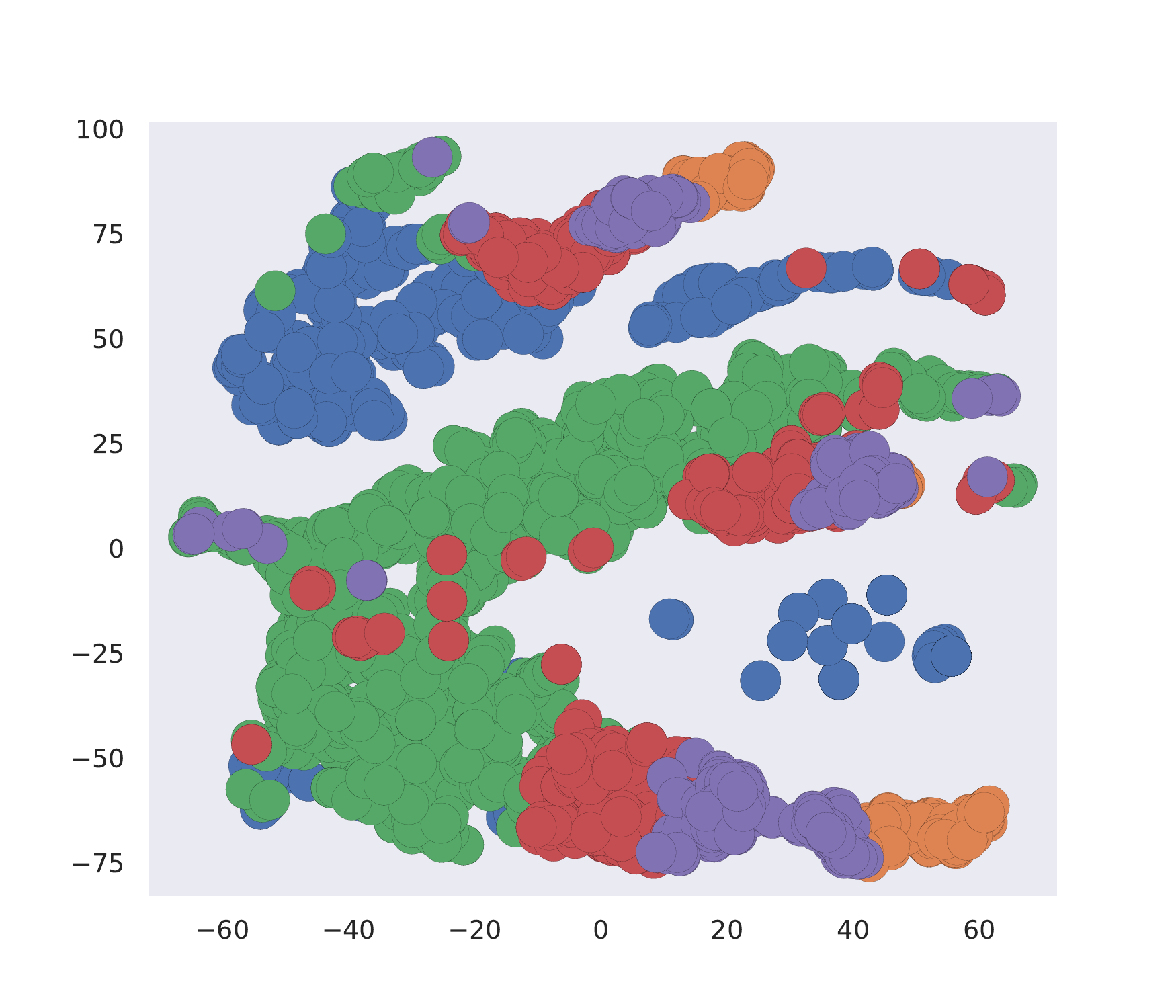}}
\subfigure[Satimage-2]{\label{fig:satimage_lomdt}\includegraphics[width=0.32\linewidth]{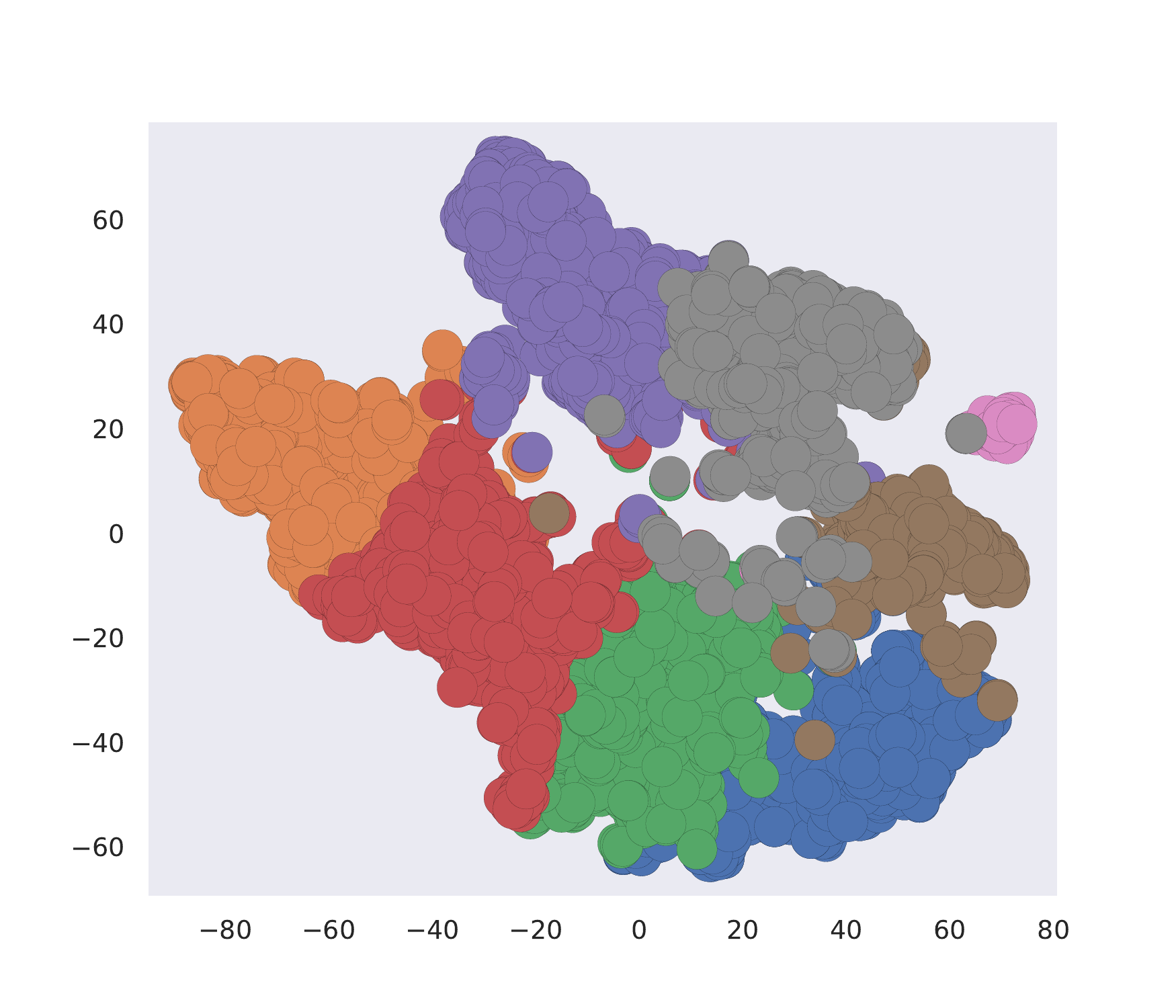}}
\subfigure[SpamBase]{\label{fig:spamBase_lomdt}\includegraphics[width=0.32\linewidth]{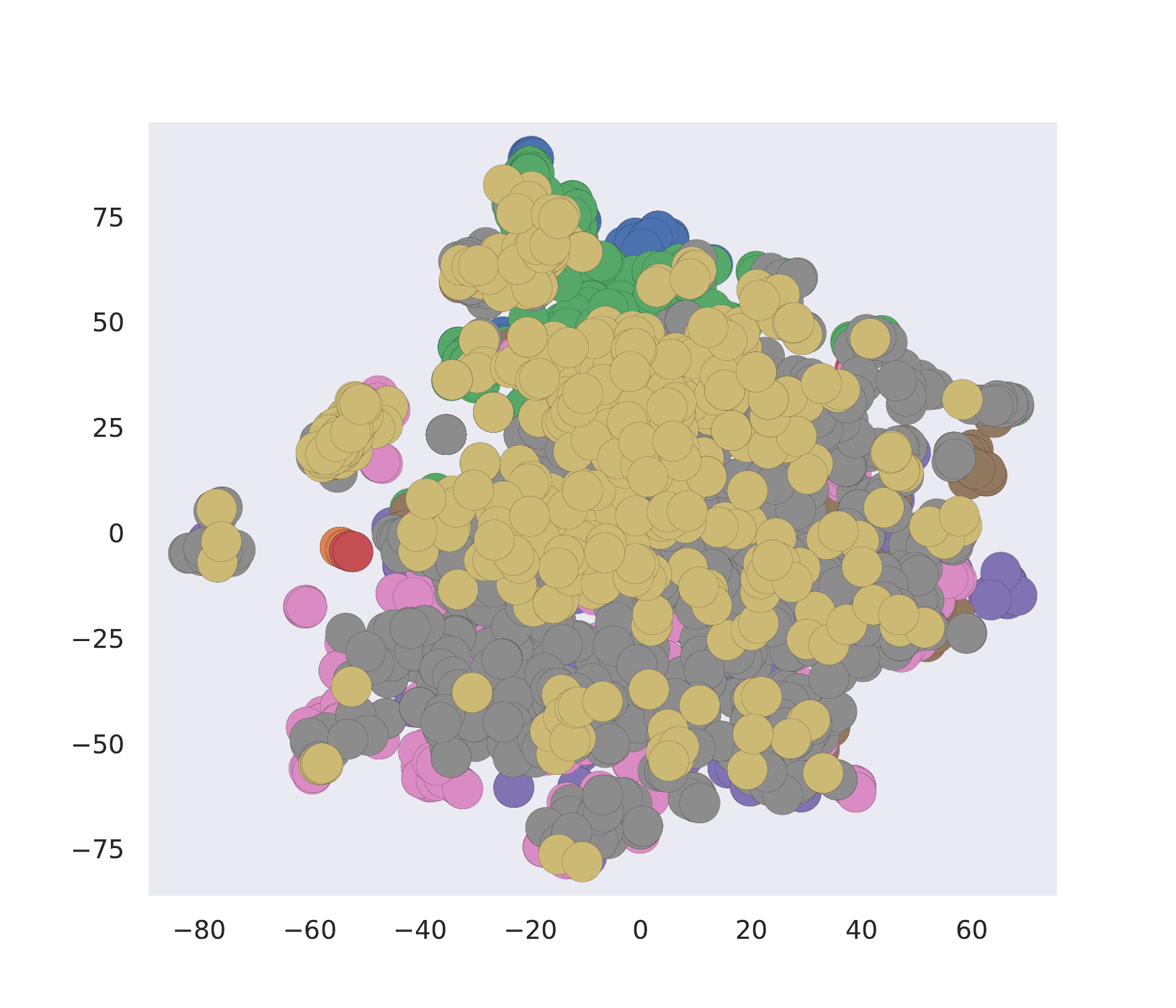}}
\subfigure[Satillite]{\label{fig:satillite_lomdt}\includegraphics[width=0.32\linewidth]{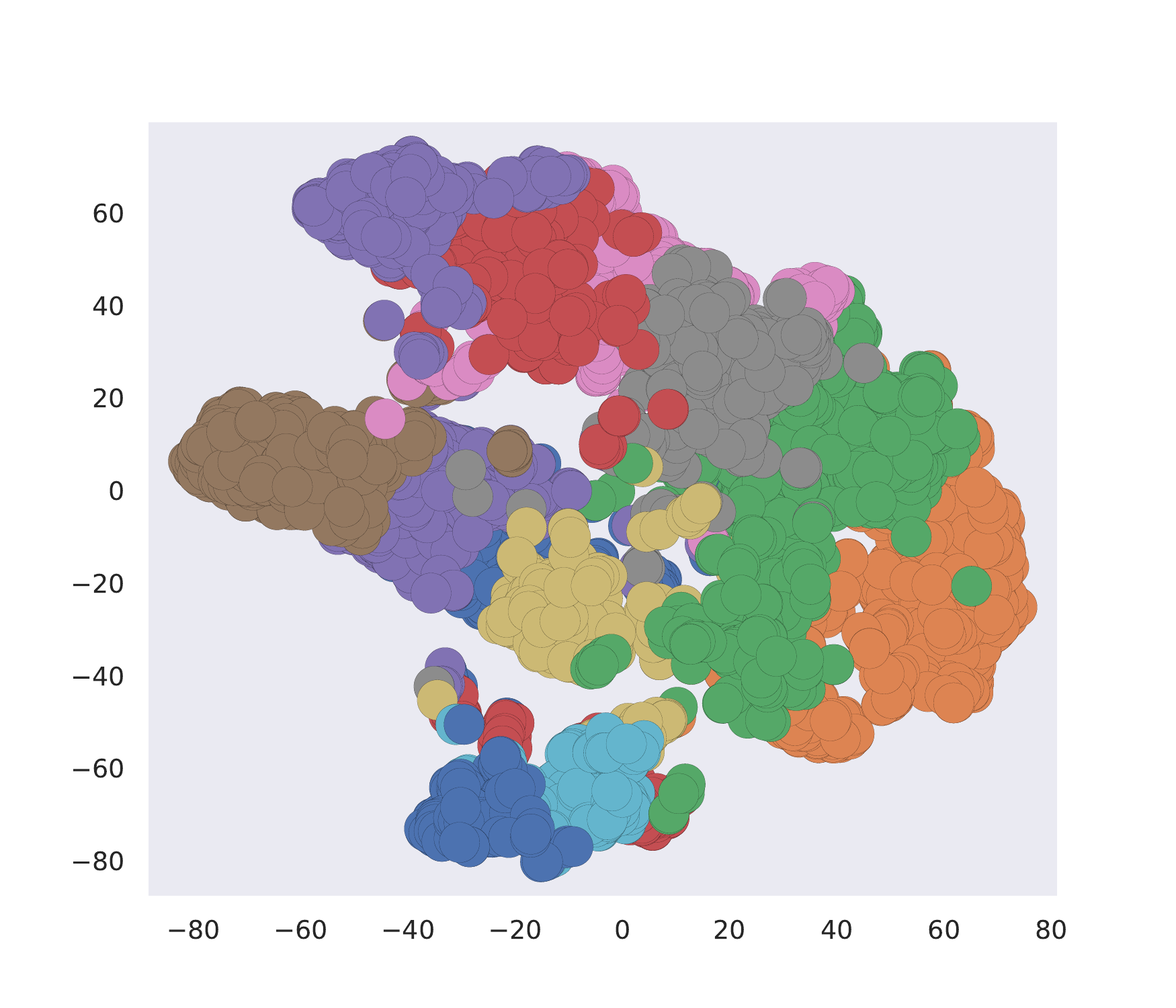}}
    \caption{Visualization of L-\sys (Q5).}
    \label{fig:cluster_visualization_of_lstair}
\end{figure}

\subsection{Comparison Against Baselines (Q2): $F_1$ Score}
\label{sec.exp.F1}
In this section, we evaluate the $F_1$ score of each algorithm when they produce a rule set with similar total length.
For each dataset, we vary the total rule length by selecting 10 numbers within a range from 0 to the total rule length result with respect to the ID3 algorithm in Table~\ref{tab:overall_performance_comparison}.
For instance, in Table~\ref{tab:overall_performance_comparison} the total rule length of ID3 on the dataset Pendigits is 290. We thus select ten numbers: 29, 58, $\cdots$, 290 as the candidate total lengths. 

Then given one total length $l$, we run ID3, CART, and \sys in the following way to obtain the corresponding $F_1$ score: (1) For ID3, we gradually increase the depth of the tree until it generates a rule set with a total length slightly higher than $l$; (2) For CART, we first build a decision tree that is as accurate as possible and then prune it until it has a length close to $l$; (3) For \sys, we update the breaking condition of Algorithm~\ref{alg:stair} such that it will terminate after reaching the length $l$. 
We run this experiment on the Pendigits and Thursday-01-03 datasets.
As shown in Figure~\ref{fig:AccwrtLength}, \sys is more accurate than ID3 and CART when they produce a rule set with the similar total length, indicating that given the same budget on the total rule length, \sys produces rules with higher accuracy.

\subsection{Effect of Hyper-parameters $L_m$ and $F1_m$ (Q3)}
\label{Study_of_the_parameter_lm_and_f1m}


In this set of experiments, we first study how the maximal length $L_m$ affects \sys and L-\sys. We fix the F1 score threshold $F1_m$ as 80\% and then vary $L_m$ from 2 to 12 and measure the how the total rule length changes. 
Note in some cases when $L_m$ is too small, e.g. 2, the learned tree cannot meet the F1 score requirement. 
As shown in Figure~\ref{fig:effects_of_lm}, as $L_m$ gets larger, the total rule length will get smaller. This is because with a looser constraint, \sys gets a larger search space and hence better chance to find a simple tree. When \sys gets better, L-\sys will also gets better. Besides, we observe that L-\sys could reach the minimal total rule length with smaller $L_m$. This shows the power and benefits of localization. 

Next, we investigate how the F1 score threshold $F1_m$ affects \sys and L-\sys. 
We fix $L_m$ to 10 and vary $F1_m$ from 0.70 to 0.95.
As shown in Figure~\ref{fig:effects_of_f1m}, in most of the cases \sys outperforms ID3 and CART, while L-\sys consistently outperforms all other methods in all scenarios by up to 94.0\% as shown in the results on the \emph{Cover} dataset when the threshold is set as 0.95.
The larger the $F1_m$ threshold is, the more L-\sys outperforms other baselines. This is because partitioning allows L-\sys to get a set of localized trees, each of which produces high accurate classification results on the corresponding data subset.  

\begin{figure*}[t]
\centering     
\subfigure[PageBlock]{\label{fig:pageblock_m}\includegraphics[width=0.19\linewidth]{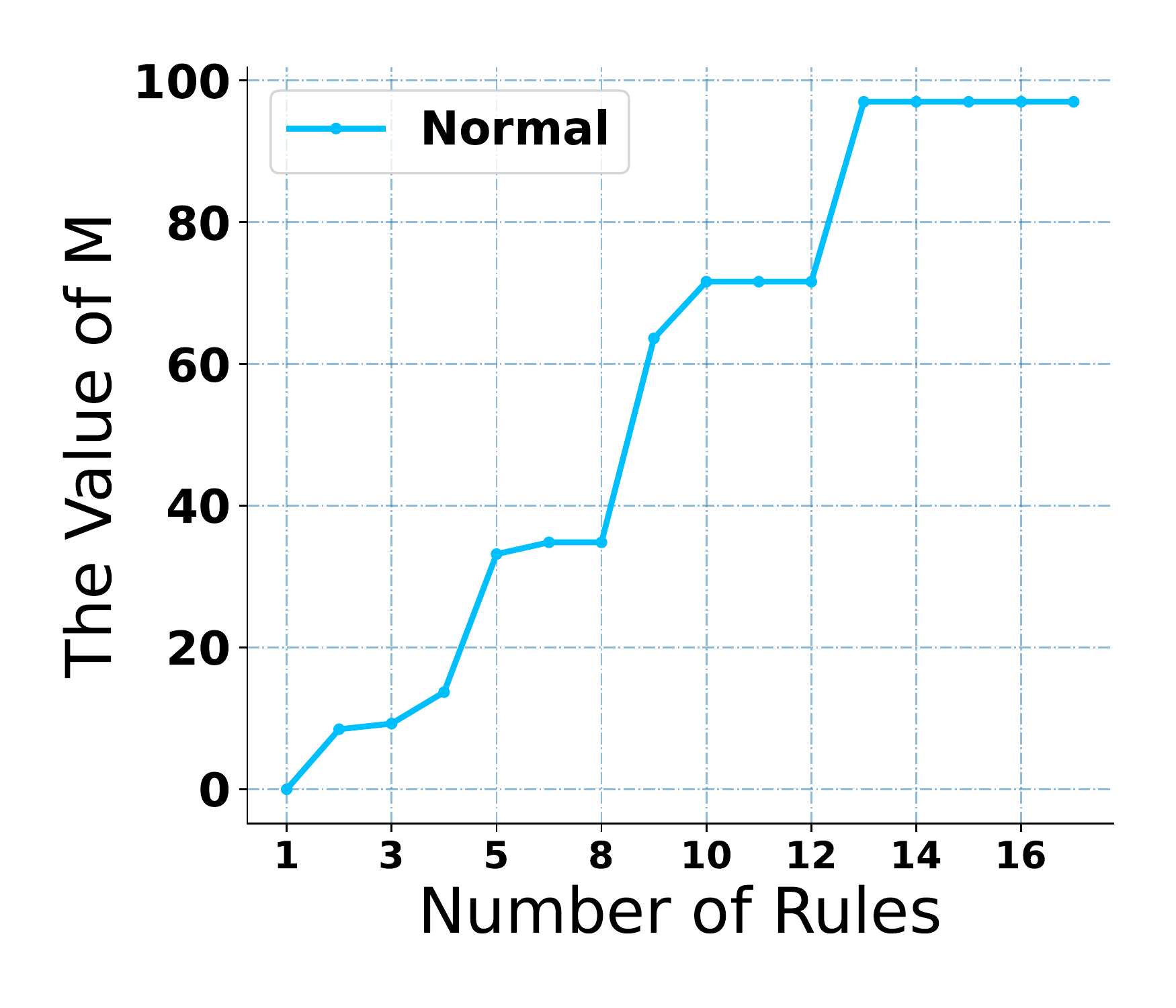}}
\subfigure[Pendigits]{\label{fig:pendigits_m}\includegraphics[width=0.19\linewidth]{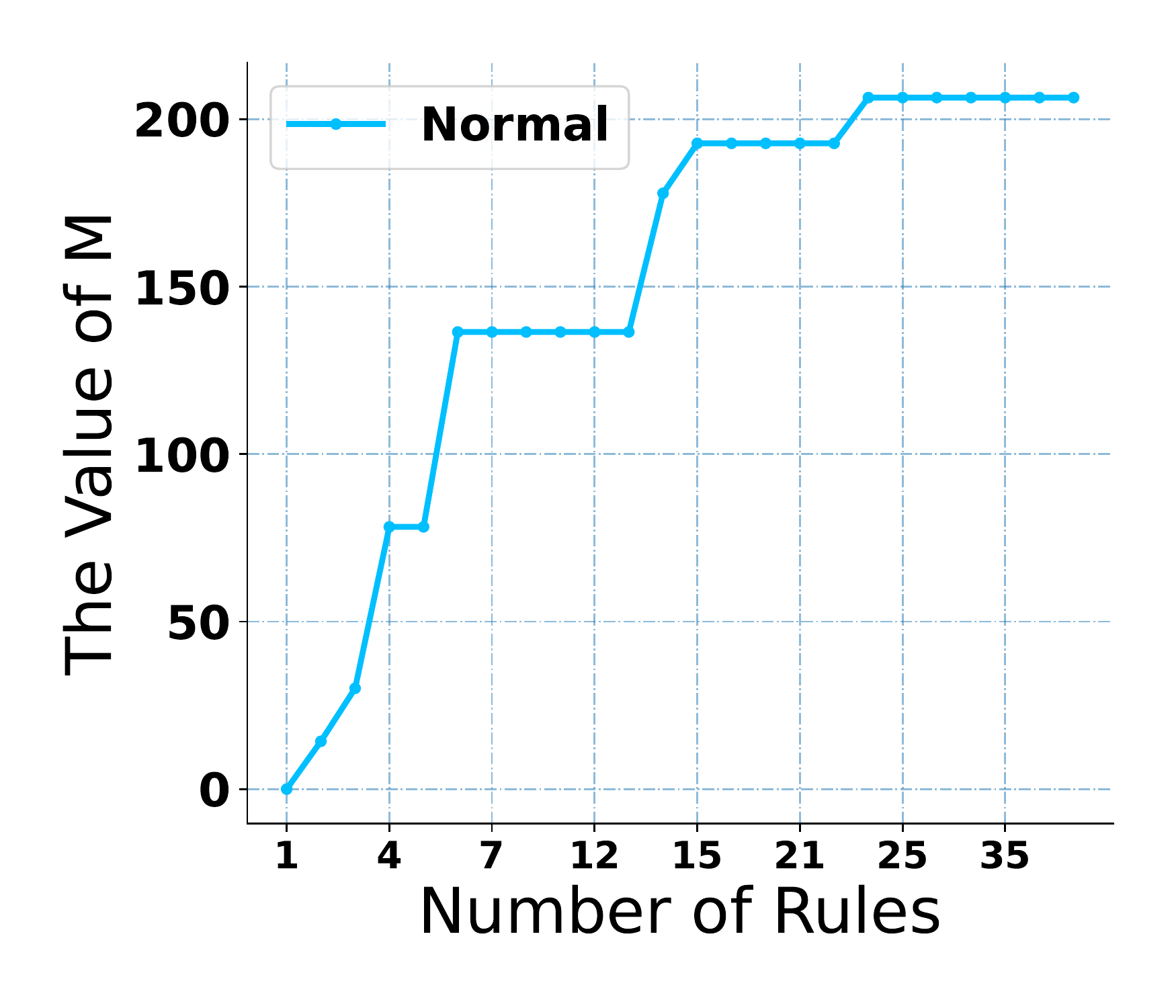}}
\subfigure[Shuttle]{\label{fig:shuttle_m}\includegraphics[width=0.19\linewidth]{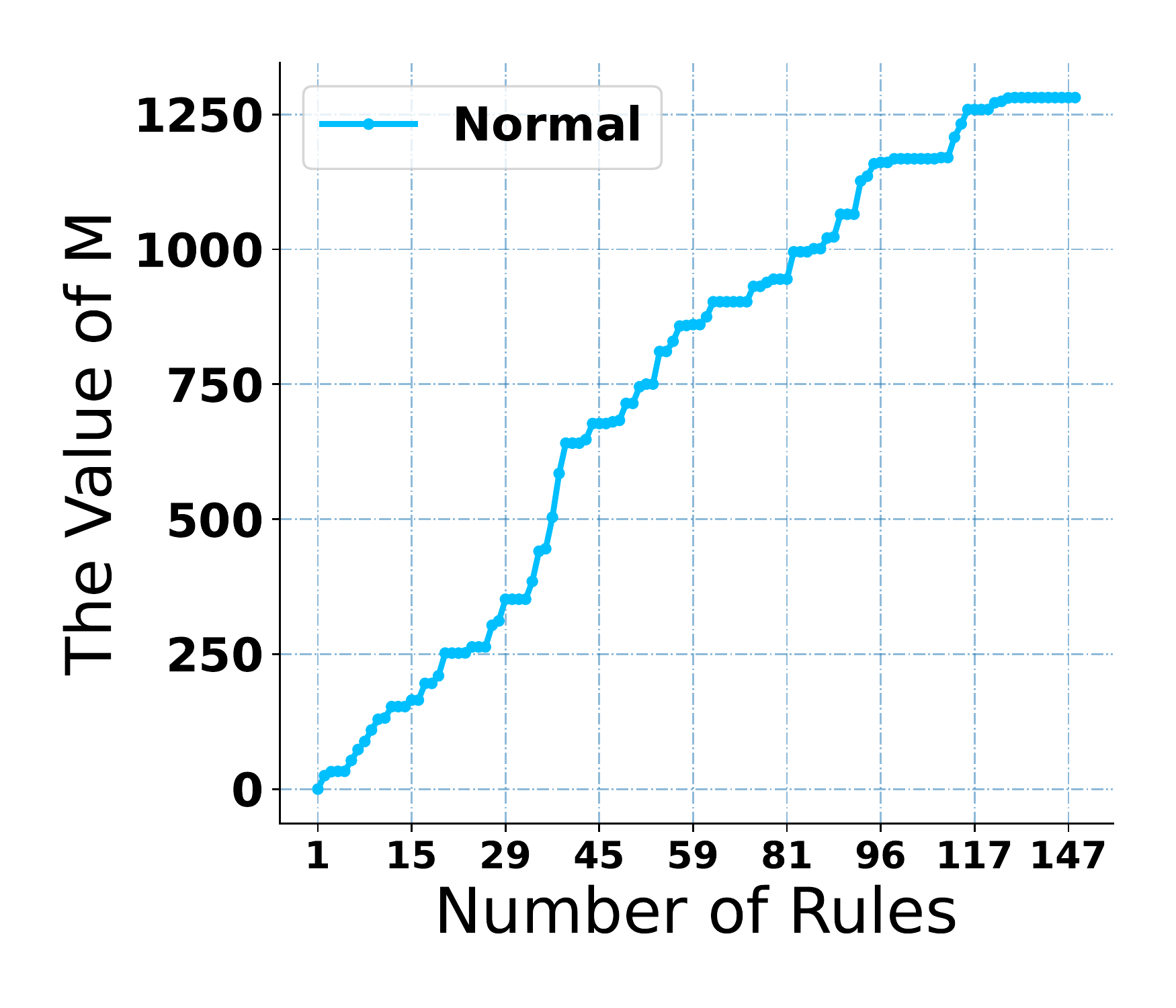}}
\subfigure[Pima]{\label{fig:pima_m}\includegraphics[width=0.19\linewidth]{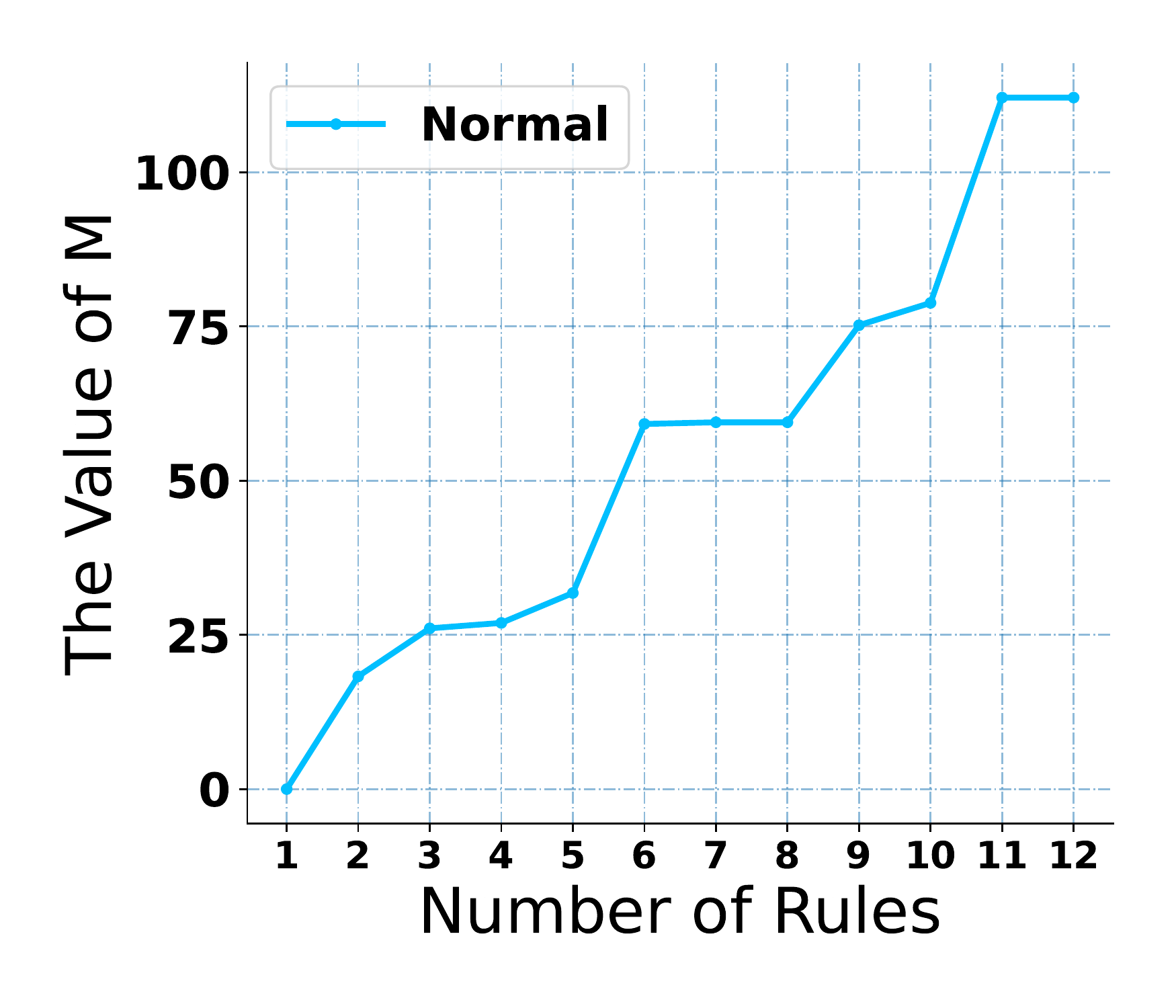}}
\subfigure[Mammography]{\label{fig:mammography_m}\includegraphics[width=0.19\linewidth]{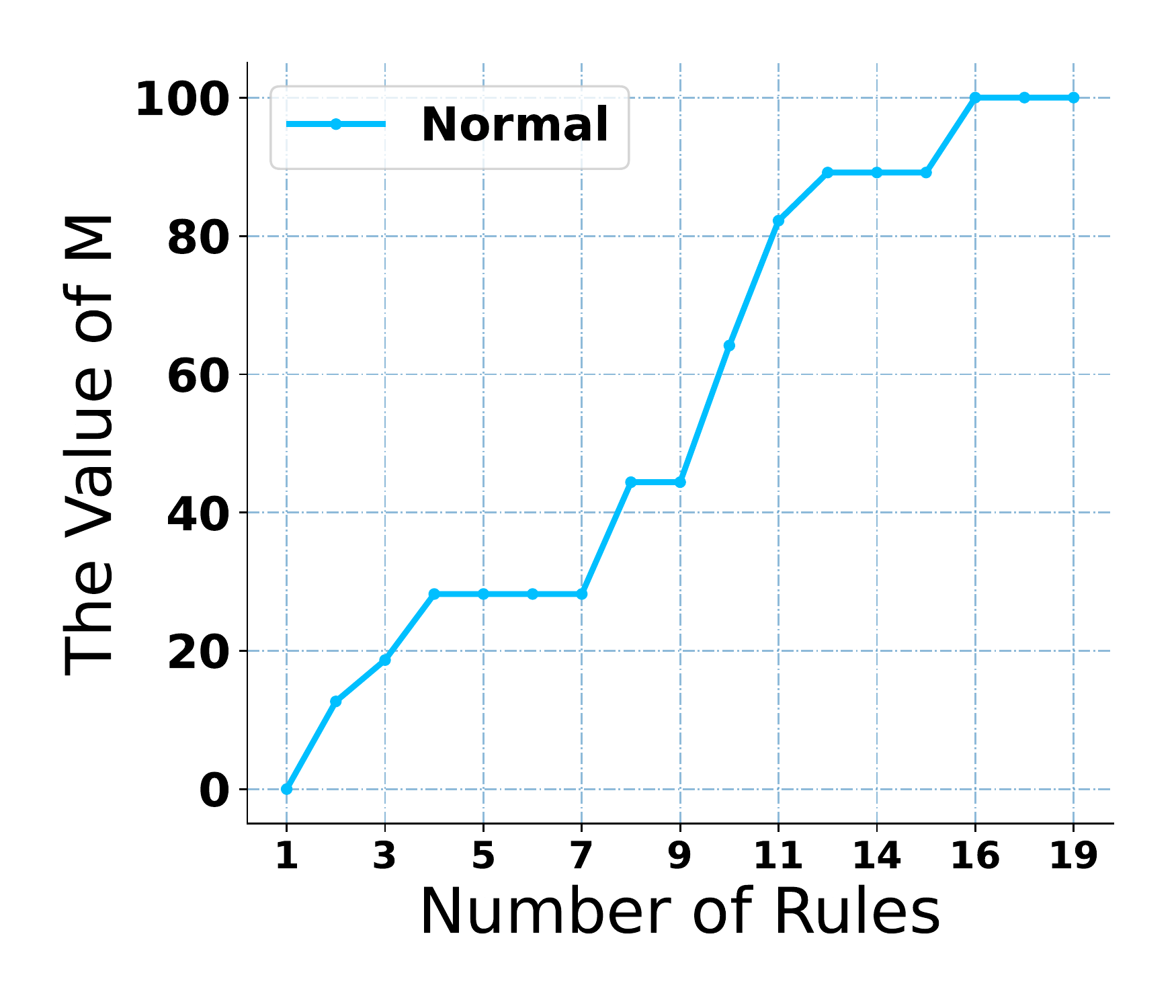}}
\subfigure[Satimage-2]{\label{fig:satimage2_m}\includegraphics[width=0.19\linewidth]{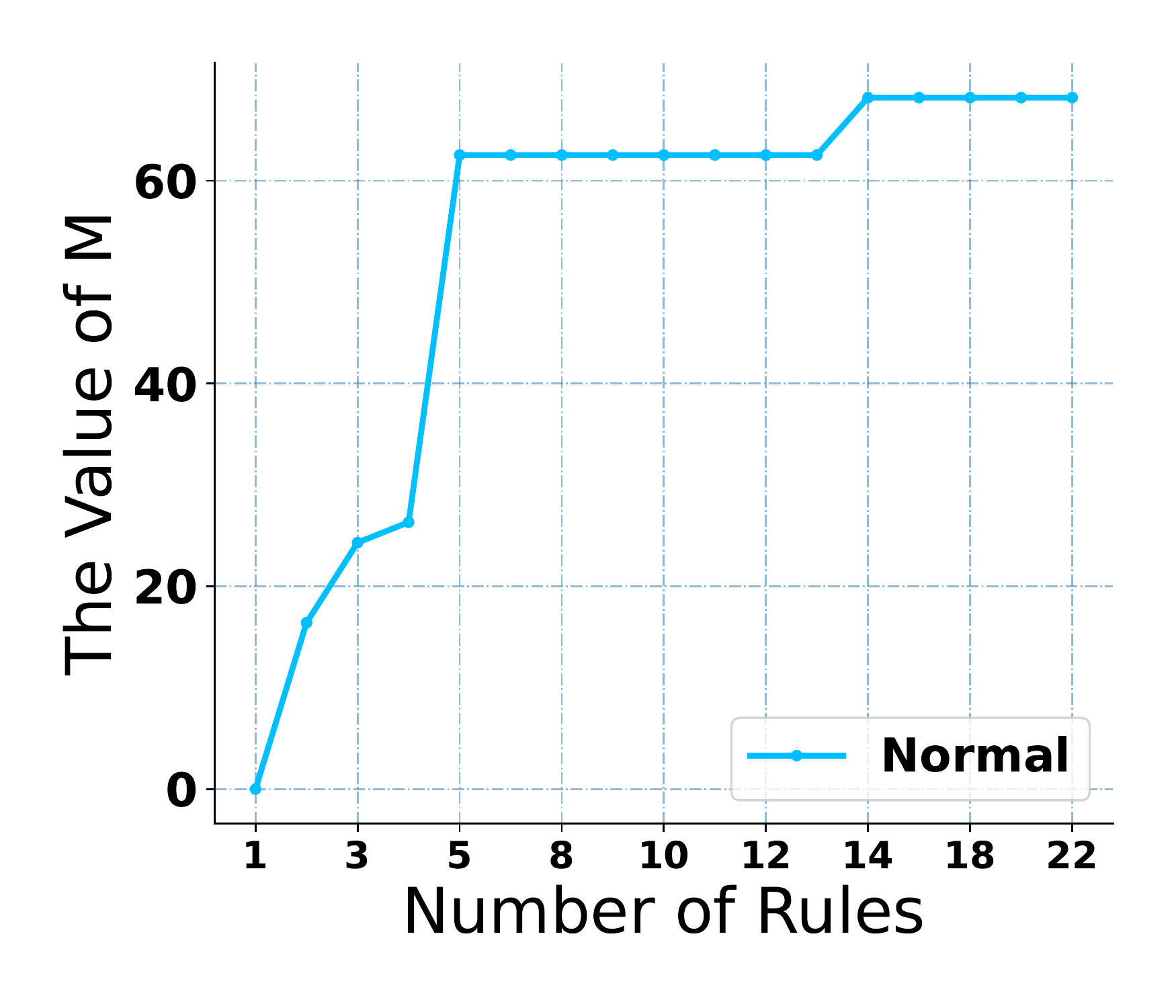}}
\subfigure[Satellite]{\label{fig:satellite_m}\includegraphics[width=0.19\linewidth]{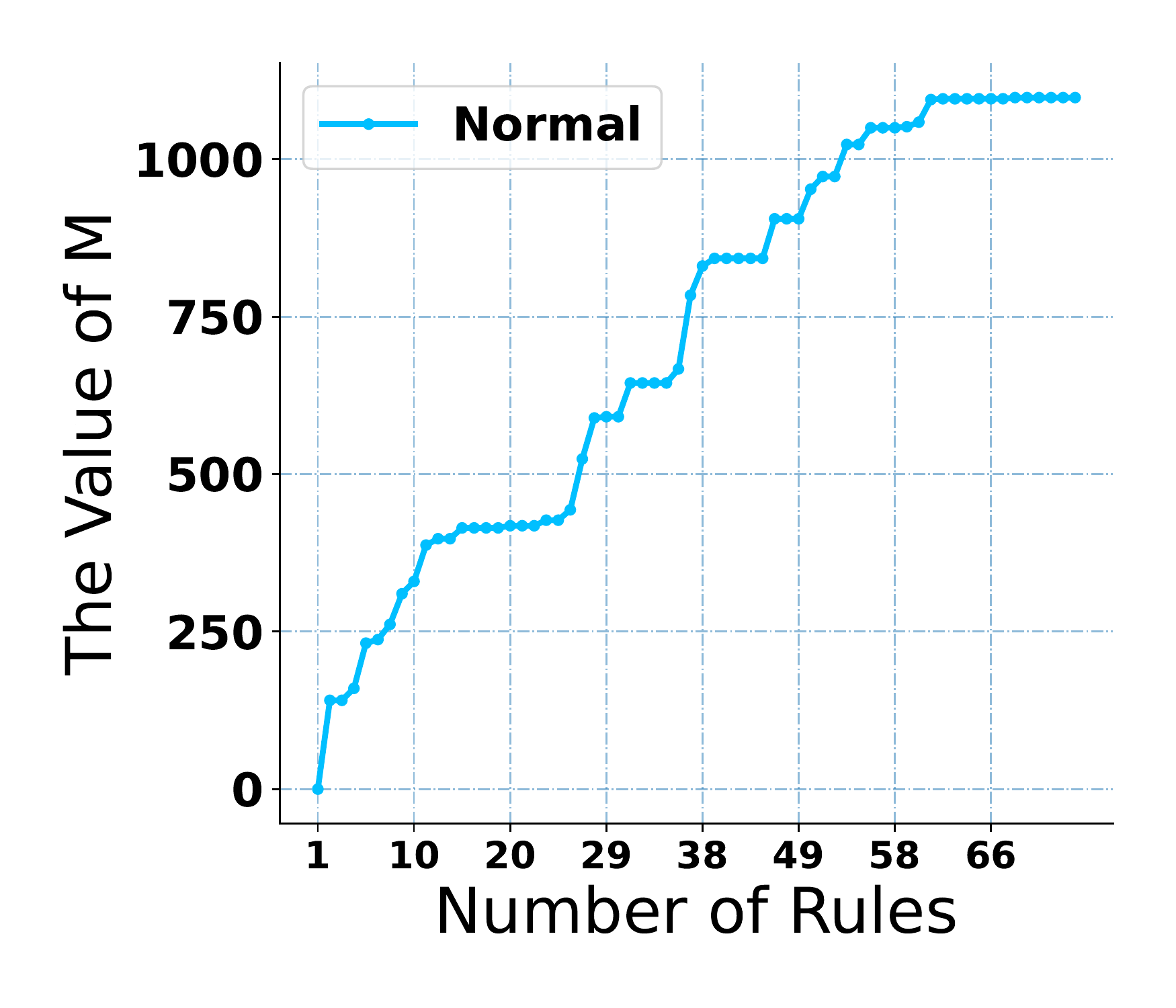}}
\subfigure[SpamBase]{\label{fig:spambase_m}\includegraphics[width=0.19\linewidth]{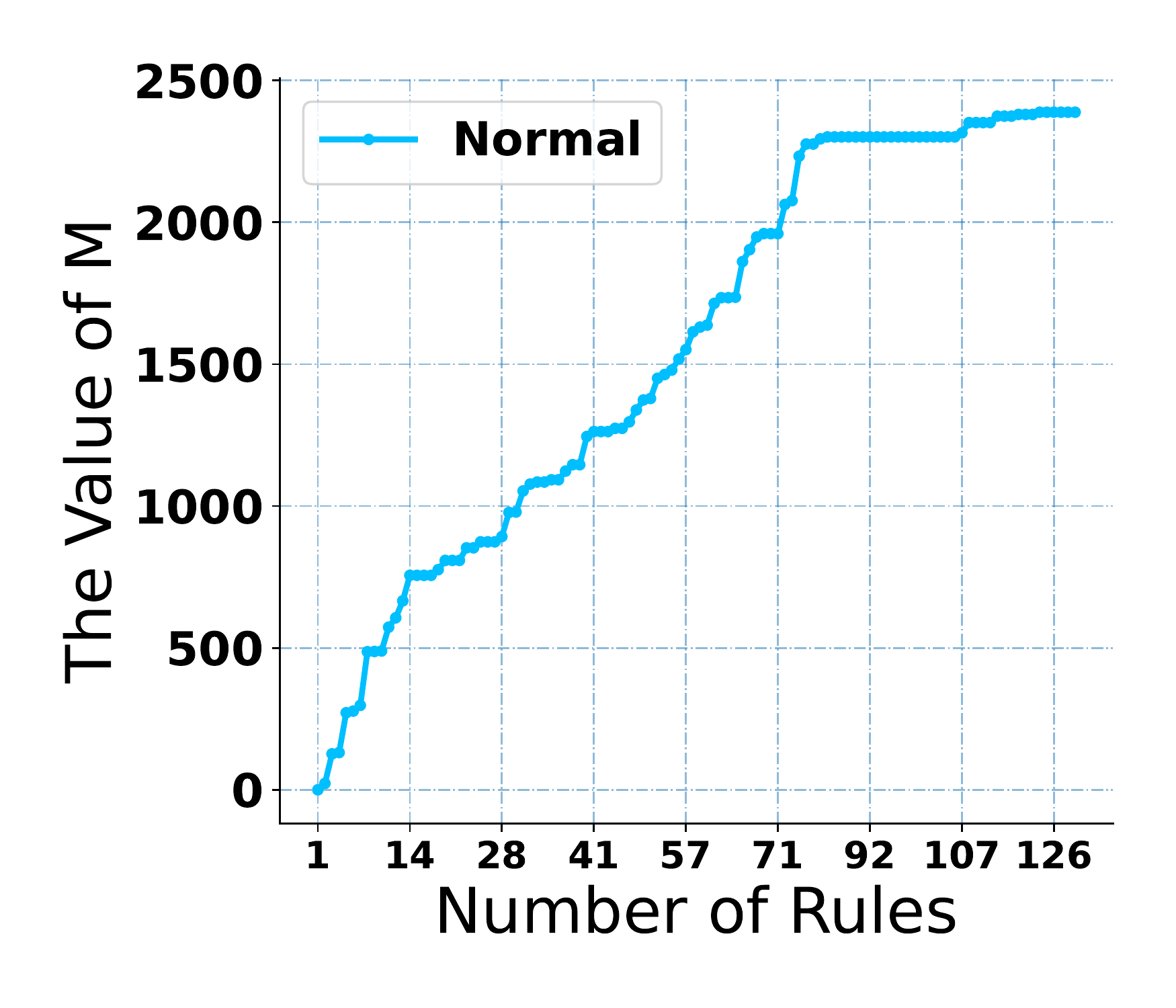}}
\subfigure[Cover]{\label{fig:cover_m}\includegraphics[width=0.19\linewidth]{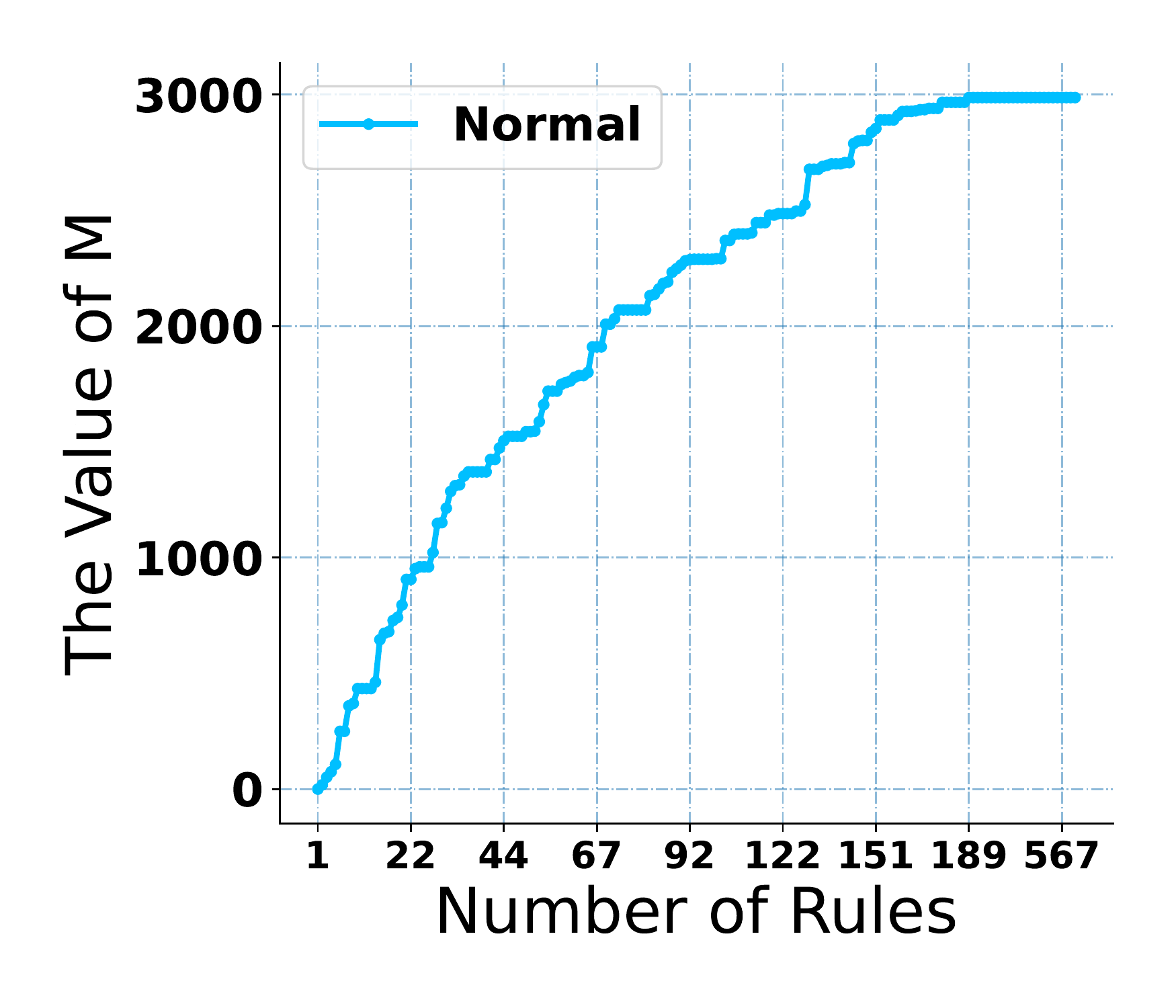}}
\subfigure[Thursday-01-03]{\label{fig:thursday-01-03_m}\includegraphics[width=0.19\linewidth]{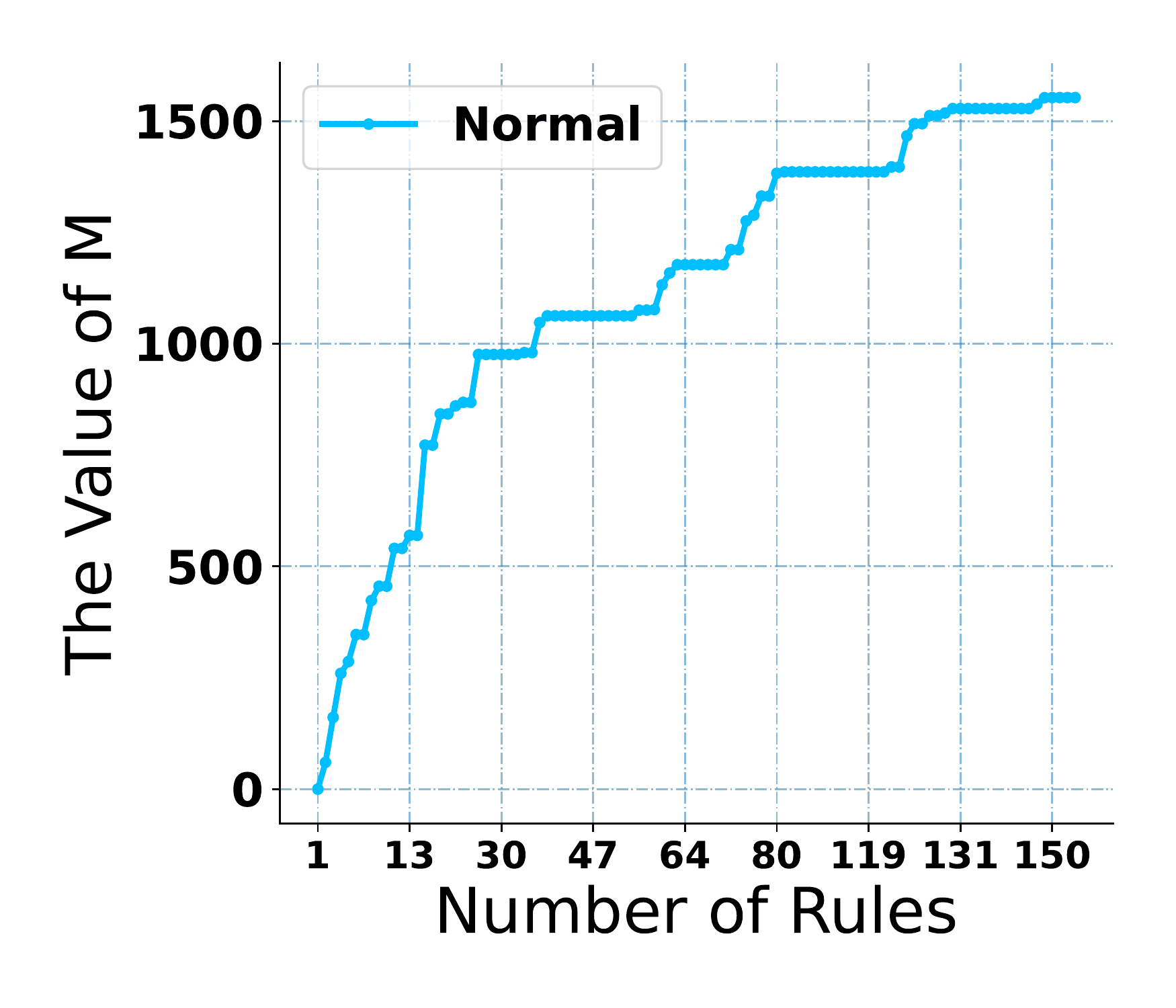}}
\caption{The dynamic $M$ during training (Q6).}
\label{fig:Analysis_on_increasing_of_M}
\end{figure*}

\vspace{-5pt}
\subsection{Number of partitions in L-\sys (Q4)}
\label{study_of_the_number_of_clusters}
We study how the initial number of the partitions $n$ affects L-\sys. In this set of experiments, $n$ is selected from \{2, 4, 8\}. In addition to the total rule length, we also report the number of rules in the final ruleset. 
From the results shown in Table~\ref{tab:study_of_n}, we have the following observations: (1) Compared to the results in Table~\ref{tab:overall_performance_comparison}, no matter what $n$ L-\sys starts with, it consistently outperforms other methods; (2) L-\sys always performs well when starting with a small $n$ compared to other initial $n$ values, indicating that $n$ is not a hyper-parameter that requires careful tuning. 

\begin{table}[t]
\centering
\caption{Multi-class classification: total rule length.}
\vspace{-5pt}
\label{tab:performance_comparison_for_wine_quality}
\begin{tabular}{c|c|c|c|cc}
\toprule
Dataset & \multicolumn{1}{c|}{ID3} & \multicolumn{1}{c|}{CART}  & \multicolumn{1}{c|}{STAIR} & \multicolumn{1}{c}{L-STAIR}  \\
 \midrule
Wine Quality & 5133 & 3217 & 2251 & \textbf{1538} \\ 
\bottomrule
\end{tabular}
\vspace{-5pt}
\end{table}

\begin{table*}[ht]
    \centering
    \caption{Multi-class classification: the number of partitions $n$ in L-\sys}
    \label{tab:study_of_n_wine}
    \begin{tabular}{c|ccc|ccc|ccc}
    \toprule
         & \multicolumn{3}{c|}{L-STAIR ($n$=2)} & \multicolumn{3}{c|}{L-STAIR ($n$=4)}  & \multicolumn{3}{c}{L-STAIR ($n$=8)}  \\
         \midrule
        Dataset & Length & \# of R & \# of C & Length & \# of R & \# of C & Length & \# of R & \# of C \\
        \midrule
        Wine Quality & 1642 & 614 & 11 & 1692 & 620 & 13 & 1538 & 635 & 17 \\
        \bottomrule
    \end{tabular}
\end{table*}

\begin{figure}[ht]
    \centering
    \subfigure[Different Thresholds]{\label{fig:thresholds_wine}\includegraphics[width=0.48\linewidth]{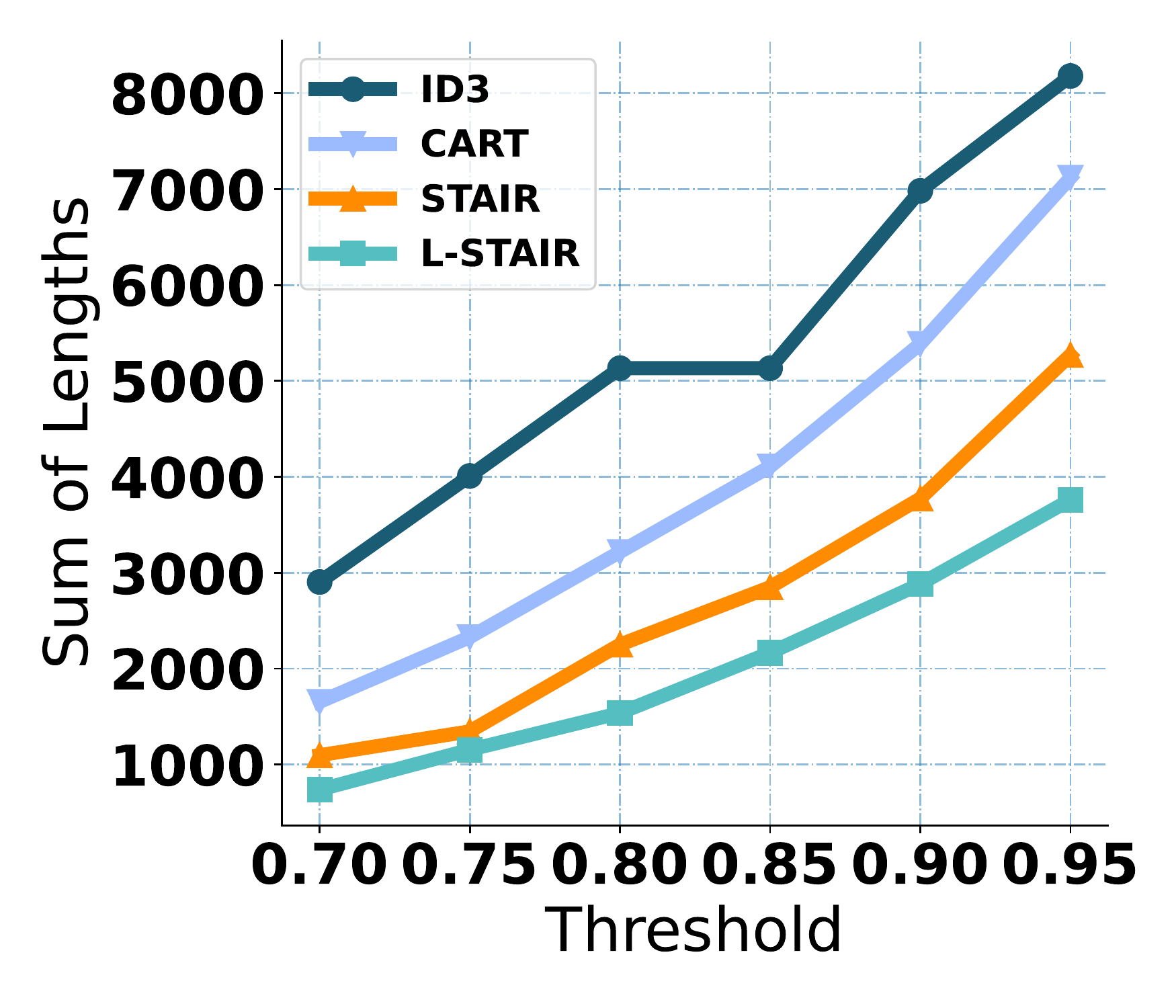}}
    \subfigure[Different $L_m$]{\label{fig:l_m_wine}\includegraphics[width=0.48\linewidth]{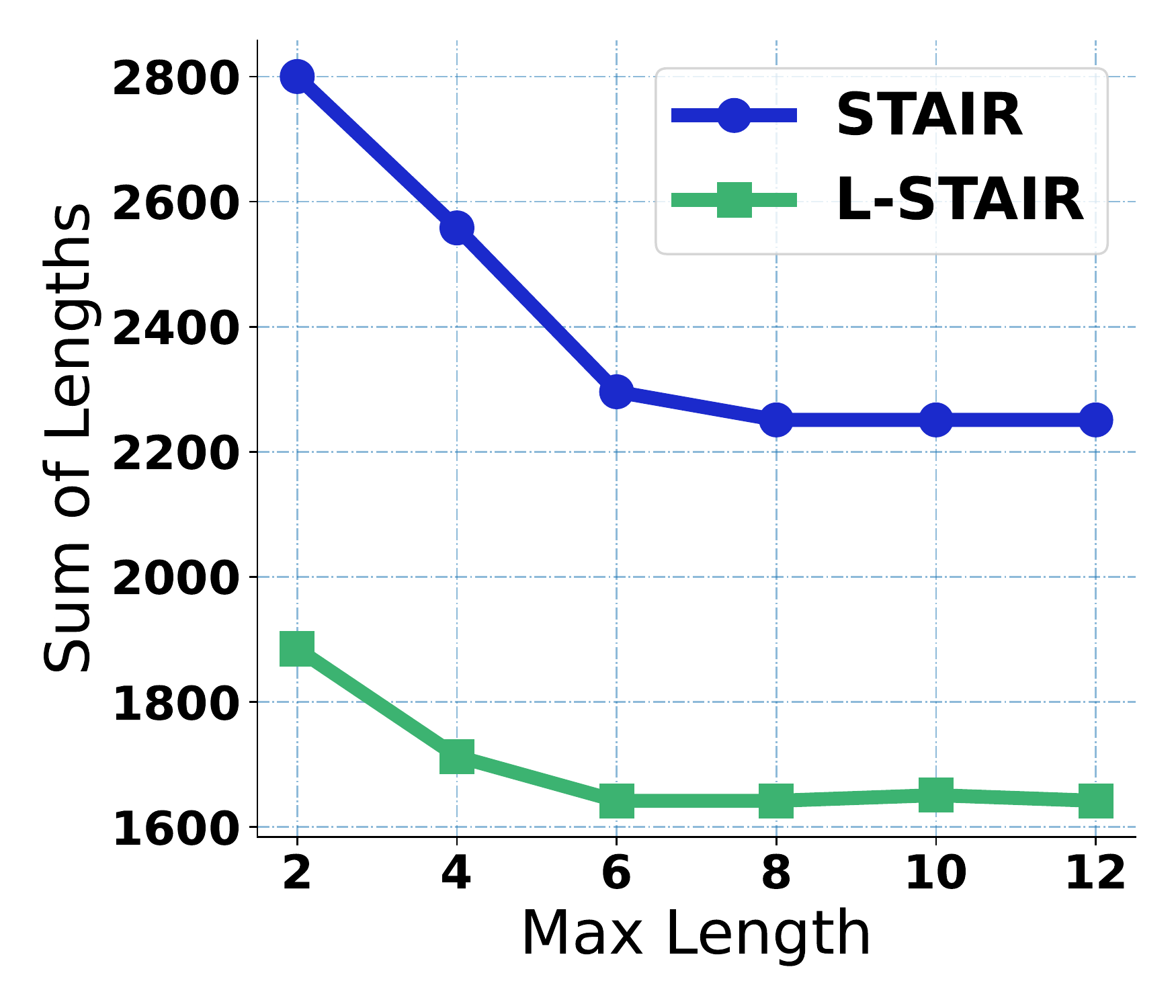}}
    \subfigure[Cluster Visualization]{\label{fig:wine_cluster}\includegraphics[width=0.48\linewidth]{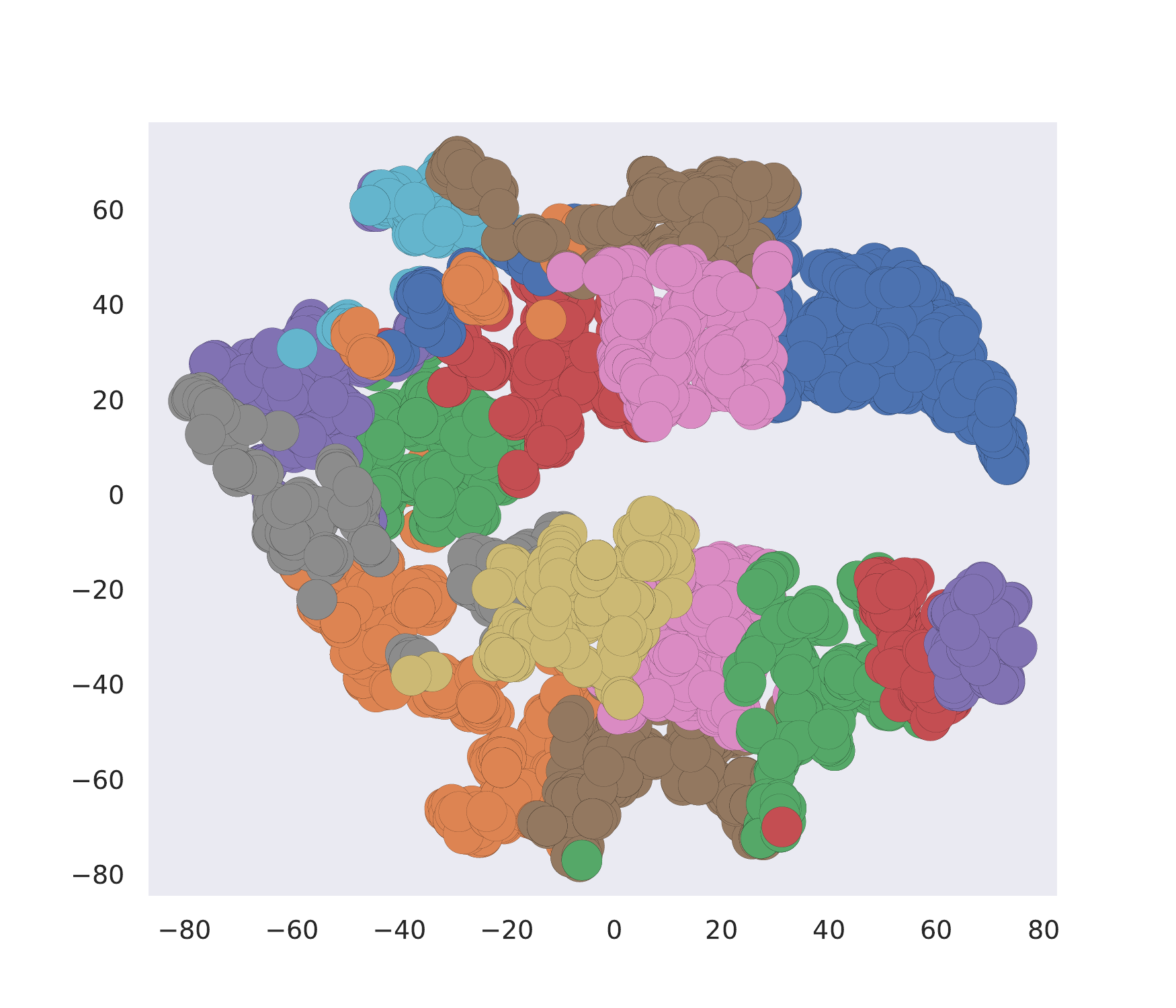}}
    \subfigure[Increasing of M]{\label{fig:increasing_of_M_wine}\includegraphics[width=0.48\linewidth]{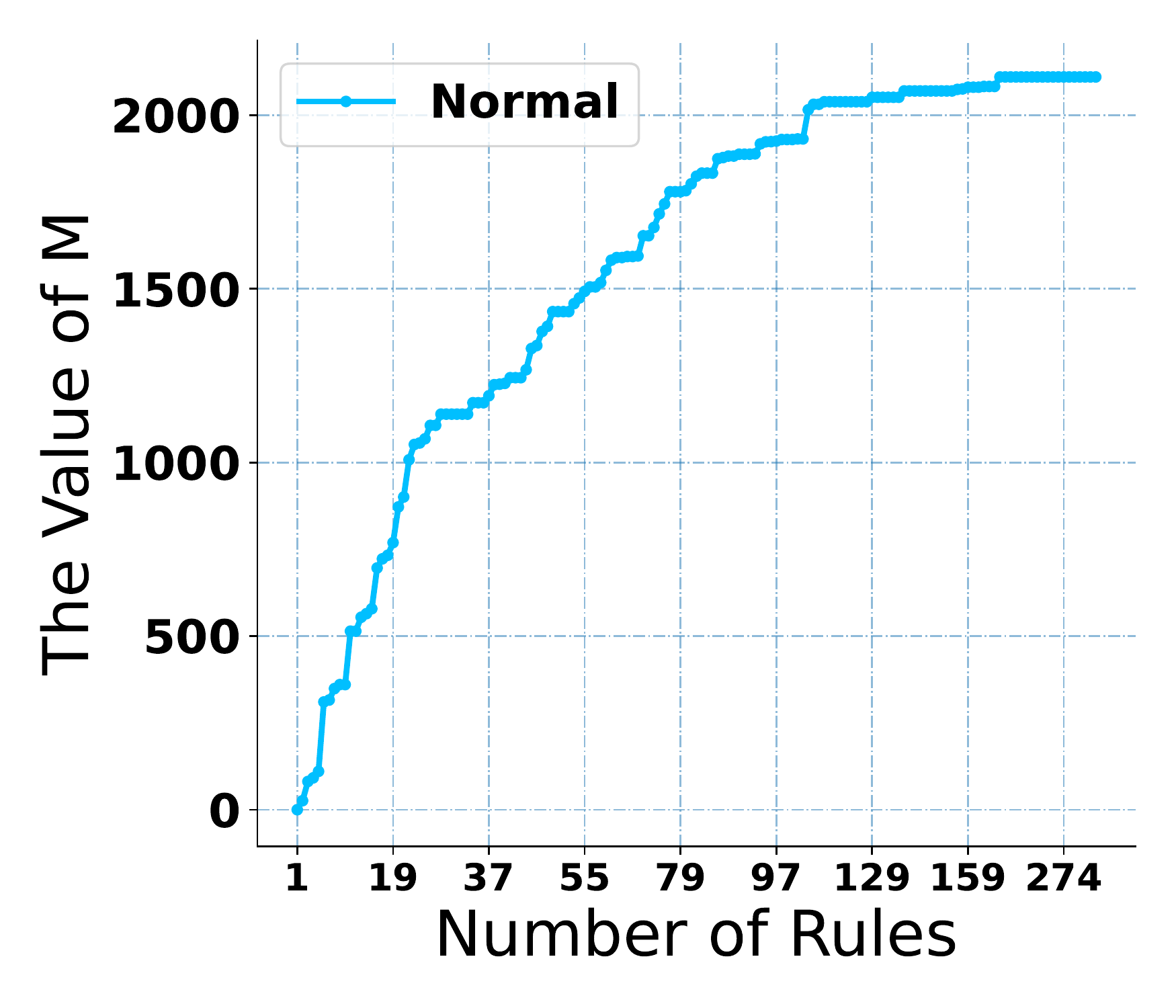}}
    \caption{Multi-class dataset \emph{Wine Quality}: ablation study}
    \label{fig:Wine.effects_of_lm}
    \vspace{-10pt}
\end{figure}
\vspace{-5pt}
\subsection{L-\sys: Locality-Preserving (Q5)}
Next, we evaluate if the partitioning of L-\sys is able to preserve the locality of the data. 
We show this by visualizing its data partitioning.
Before visualization, We apply T-SNE to embed the data into 2D. 
We plot different partitions in different colors. 
Due to space limit, we only plot the partitioning of 6 datasets. 
As shown in Figure~\ref{fig:cluster_visualization_of_lstair}, on all datasets the partitioning of L-\sys preserves the locality. This thus guarantees the interpretability of each localized tree. 

\vspace{-5pt}
\subsection{Dynamically Adjusting the Value of M (Q6)}
In this set of experiment we show how \sys automatically adjusts the value of stabilizer $M$ introduced in our summarization and interpretation-aware optimization objective (Sec.~\ref{sec.objective.optimized}). 
To better understand the influence of a dynamically adjusting $M$, we use the number of rules produced in the training process as the reference variable, corresponding to the x-axis. From Figure~\ref{fig:Analysis_on_increasing_of_M}, we observe: (1) The value of $M$ continuously increases during the training process to split nodes and thus produce valid rules; (2) The values of $M$ are different across different datasets, indicating that it is hard to get an appropriate $M$ by manual tuning. 

\vspace{-5pt}
\subsection{Multi-class classification Problems (Q7)}
We use this set of experiments to show that \sys and L-\sys are generally applicable to the more complicated multi-class classification problems. 
We use one of the most popular classification datasets \emph{Wine Quality}\footnote{https://archive.ics.uci.edu/ml/datasets/wine+quality}~\cite{WineQuality}, which contains 4898 instances and 12 attributes. 
We regard the attribute ``score'' as the target which corresponds to integers within the range from 0 to 10 and run a classification task on it. 
Our \sys and L-\sys could be easily extended to multi-class settings by replacing the F1 score with the {\it classification accuracy}. 

As shown in Table~\ref{tab:performance_comparison_for_wine_quality}, we report the total rule length, same to the outlier detection scenario.
We observe from the results: (1) L-\sys and \sys significantly outperform ID3 and Cart by up to 70.0\%;
(2) As shown in Table-\ref{tab:study_of_n_wine}, the initial number $n$ of the partitions make little difference to the resulted lengths, indicating L-\sys is not sensitive to the hyper-parameter $n$;
(3) As illustrated in Figure~\ref{fig:thresholds_wine} and Figure~\ref{fig:l_m_wine}, \sys and L-\sys always outperform the baselines no matter how the accuracy threshold and the maximal rule length threshold vary; 
(4) Figure~\ref{fig:wine_cluster} visualizes the partitions produced by L-\sys. The locality of the data partitions is well-preserved; 
(5) As shown in Figure~\ref{fig:increasing_of_M_wine}, the dynamic update of the value of stabilizer $M$ is important in splitting the nodes and producing valid rules, similar to the case of outlier detection.

\section{Related Work}
\label{related_work}

\noindent\textbf{Outlier Summarization and Interpretation.} 
To the best of our knowledge, the problem of summarizing and interpreting outlier detection results has not been studied. Focused on a special type of outliers, Scorpion~\cite{DBLP:journals/pvldb/0002M13} produces explanations for outliers in aggregation queries by looking at the provenance of each outlier. 
If removing some objects from the aggregation significantly reduces the abnormality of a given outlier, these objects will be considered as the cause of this outlier.
Similar to Scorpion, Cape~\cite{DBLP:journals/pvldb/MiaoZLGKR19} targets explaining the outliers in aggregation queries. 
But rather than rely on provenance, Cape uses the objects that counterbalance the outliers as the explanation. More specifically, if including some additional data objects into the aggregation query could neutralize an outlier, these objects effectively explain the outlier.
Both works do not tackle the problems of summarizing outliers. 
Macrobase~\cite{bailis2017macrobase} explains outliers by correlating them to some external attributes such as location or occurring time using associate rule mining. These external attributes are not used to detect anomalies. In many applications, however, such external attributes do not exist. 
Further, Macrobase only explains the outliers detected by its default density-based outlier detector customized to streaming data and does not easily generalize to other outlier detection methods. 
 
\noindent \textbf{Interpretable AI}. Some works~\cite{DBLP:journals/corr/abs-2105-05328,phillips2020four} target on interpreting the machine learning models, or understanding the model better with extra information~\cite{debias}. Some works such as LIME~\cite{DBLP:conf/kdd/Ribeiro0G16}, Anchor~\cite{DBLP:conf/aaai/Ribeiro0G18}, LORE~\cite{DBLP:journals/corr/abs-1805-10820} produce an explanation with respect to each inference. 
LIME~\cite{DBLP:conf/kdd/Ribeiro0G16} explains the predictions of a classifier by learning a linear model locally around the prediction with respect to a particular testing object. It then uses the attributes that are most important to the linear prediction as the explanation. 
Because LIME has to learn one linear model for each individual object, using it to explain a large number of prediction results tends to prohibitively expensive. 
Some other methods~\cite{DBLP:conf/aaai/Ribeiro0G18,DBLP:journals/corr/abs-1805-10820,DeepLIFT,bach2015pixel,lundberg2017unified} explain classification results in the similar fashion.
Taking the explanations w.r.t all testing objects as input, Pedreschi et al. proposed to select a subset of the explanations to constitute a global explanation~\cite{DBLP:journals/corr/abs-1806-09936}. 
However, this work 
is not scalable to big datasets because it requires constructing the explanations for all testing objects. 
In addition, some techniques, including gradient-based~\cite{gradientbased1,gradientbased2} and attention based~\cite{NMTJLAT} methods, focus on particular types of deep learning model, thus hard to be used in the outlier summarization scenario. 
Lakkaraju et al~\cite{DBLP:conf/kdd/LakkarajuBL16} proposed to build a prediction model that is more interpretable than deep learning models. Follow-up methods~\cite{DBLP:conf/emnlp/SushilSD18,DBLP:conf/iccS/Pacaci0MH19,DBLP:journals/tvcg/MingQB19} work on augmenting the data to produce models with better interpretability. 
Instead of inventing new prediction models, our work focuses on explaining and summarizing the results produced by any outlier detection method.


\textbf{Outlier Detection}. Due to the importance of outlier detection\citep{autood}, many unsupervised outlier detection methods have been proposed including the density based method LOF~\cite{DBLP:conf/sigmod/BreunigKNS00}, the statistical-based Mahalanobis method~\cite{DBLP:books/sp/Aggarwal2013}, the distance-based methods~\cite{DBLP:conf/pkdd/AngiulliP02,Knorr99findingintensional,ramaswamy2000efficient}, and Isolation Forest~\cite{liu2008isolation}. These methods do not utilize any human-labeled data. 

As a crowd sourcing-based method, HOD~\cite{10.1145/3318464.3389772} proposed to leverage human input to improve the performance of outlier detection in text data. 
It produces some questions which once answered by humans, could help verify the status of multiple outlier candidates returned by the unsupervised outlier detectors. 
However, the question-generation process of HOD is still time-consuming. 
In addition, instead of focusing on text data, our \sys method is generally applicable to different types of data including numerical, categorical, and text data. 

\textbf{Decision Tree Algorithms.}
Because a simple tree tends to avoid overfitting and have better generalization ability, CART~\cite{reason:BreFriOlsSto84a} proposed to prune a learned decision tree to lift its performance in a post-processing step. It will remove a node in the tree if the cross-validation error rate does not increases. 
However, unlike our \sys which treats producing simple hence human interpretable rules as the first class citizen in its objective, the post-processing of CART is not very effective in minimizing the complexity of the rules, as confirmed in our experiments. Other decision tree algorithms~\cite{biggs1991method,kass1980exploratory,ritschard2013chaid} mostly suffer from the same problem that the simplicity of the rules is not considered to be as important as the classification accuracy.

\vspace{-8pt}
\section{Conclusion}
\label{conclusion}
This work targets reducing the human effort in evaluating outlier detection results. To achieve this goal, we propose \sys to learn a compact set of human understandable \textit{rules} which summarizes anomaly detection results into groups and explains why each group of objects is considered to be abnormal.
It features an outlier summarization and interpretation-ware optimization objective, a learning algorithm which optimally maximizes this objective in each iteration, and a partitioning driven \sys approach which simultaneously divides the data and produces localized rules for each data partition.
Experiment results show that \sys effectively summarize the outlier detection results with human interpretable rules, where the complexity of the rules is much lower than those produced by other decision tree methods.


\bibliographystyle{ACM-Reference-Format}
\bibliography{ref}

\end{document}